\documentclass{article}

\usepackage[preprint]{neurips_2026}

\usepackage[utf8]{inputenc} 
\usepackage[T1]{fontenc}    
\usepackage{hyperref}       
\usepackage{url}            
\usepackage{booktabs}       
\usepackage{amsfonts}       
\usepackage{nicefrac}       
\usepackage{microtype}      
\usepackage{xcolor}         

\usepackage{amssymb}
\usepackage{amsthm}
\usepackage{mathtools}
\usepackage{xspace}

\usepackage{booktabs}
\usepackage{multirow}
\usepackage{adjustbox}
\usepackage{subcaption}

\usepackage{paralist}

\usepackage{enumitem}
\usepackage{tikz}

\usepackage[colorinlistoftodos]{todonotes}

\usetikzlibrary{arrows.meta,positioning,fit,calc}

\newcommand{\clients}{\mathcal{C}}
\newcommand{\Fed}{\mathsf{Fed}}
\newcommand{\RR}{\mathbb{R}}
\newcommand{\Sh}{\mathsf{Sh}}
\newcommand{\vglob}[1]{\overline{#1}}

\newtheorem{corollary}{Corollary}
\newtheorem{definition}{Definition}
\newtheorem{example}{Example}
\newtheorem{proposition}{Proposition}
\newtheorem{theorem}{Theorem}

\title{A Typed Tensor Language for Federated Learning}

\newcommand{\authorname}[2]{%
  \textbf{#1}\textsuperscript{#2}%
}

\newcommand{\affil}[2]{%
  \textsuperscript{#1}#2%
}

\author{
\authorname{Theofilos Mailis}{1,2}\quad
\authorname{Kalliopi-Christina Despotidou}{1,2}\quad
\authorname{Konstantinos Filippopolitis}{1}\\[0.5ex]
\authorname{Yannis Foufoulas}{1}\quad
\authorname{Thanasis-Michail Karampatsis}{1}\quad
\authorname{Andreas Ktenidis}{1}\\[0.5ex]
\authorname{Evdokia Mailli}{1}\quad
\authorname{Theodore Papamarkou}{1,3}\quad
\authorname{Yannis Ioannidis}{1,2}\\[2ex]
\affil{1}{Athena Research Center}\quad
\affil{2}{National and Kapodistrian University of Athens}\\[0.5ex]
\affil{3}{National Technical University of Athens}
}

\begin{document}

\maketitle

\begin{abstract}
Federated learning and analytics are often described as collections of separate protocols, even when they share the same mathematical form: client-local tensor computation, mergeable aggregation into shared state, and shared-only post-processing. We introduce a typed tensor language that formalizes this structure. The language distinguishes federated tensors, whose records are partitioned across clients along a tracked record axis, from shared tensors, which are available globally. Its semantics are defined by comparison with a virtual global tensor, used only as a reference object. The main result is a shared-state factorization theory. We show that typed one-round programs factor through fixed-dimensional shared state whose size is independent of the number of clients and records, computed from client-local tensor expressions and merged across clients. We also prove a converse representability result; factorizations whose encoders and decoders are expressible in the language are realized by typed one-round programs, and the correspondence extends to iterative programs whose cross-round state is shared. This gives a formal account of the computations in the language that can be expressed as encode, merge, and decode procedures. We then develop a differentiable fragment for learning. If a per-record loss and its per-record gradient are represented by client-local tensor expressions, the global gradient is represented by record-axis summation of the federated gradient tensor. This yields typed iterative programs for server-side gradient descent and shared-linear-algebra second-order updates. The framework characterizes a broad class of federated learning computations whose communication passes through fixed-dimensional shared state.
\end{abstract}

\section{Introduction}\label{sec:introduction}

Federated learning is usually introduced through a simple constraint: data are stored across clients, and raw records should not be centralized. The standard response is to let clients compute local quantities and let a server aggregate them into shared state. This pattern appears in gradient-based learning, second-order optimization, federated analytics, and classical statistical summaries such as means, covariances, normal-equation blocks, and curvature matrices.

Despite this common structure, federated computations are often specified algorithm by algorithm. A federated mean, gradient, covariance, optimizer update, or Newton block may be written as a separate protocol, even when they share the same mathematical form. This obscures a basic question: which tensor computations over distributed records can be expressed as client-local computation, followed by fixed-dimensional shared state, followed by shared-only post-processing?

This paper studies that question. We introduce a typed tensor language for federated learning and analytics. The language has two tensor sorts, namely federated tensors, representing client-local data partitioned along a distinguished record axis, and shared tensors, representing values available at every client. The type system tracks the record axis and separates operations that remain client-local from operations that eliminate the record axis and produce shared state. The virtual global tensor is used only as a semantic reference; it lets us state correctness with respect to a centralized tensor expression without requiring centralized execution.

\textbf{Why record-axis typing matters.}
Existing systems provide placement-aware programming interfaces for federated computation~\citep{beutel2020flower,tensorflowfederated2021}. Our focus is complementary; \emph{we track the tensor-level record axis and make clear when a computation remains client-local and when it produces shared state. This rules out expressions that treat a federated record axis as if it were already globally available, unless the computation passes through a designated record-axis elimination step.} For example, a normal-equation block such as $X^\top X$ is represented as $\sum_{c\in\clients}{X^{(c)}}^\top X^{(c)}$, not as an ordinary shared matrix product on a global tensor. \emph{The point is not to identify bugs in a particular framework, but to give formal tensor semantics in which locality, exposure, and centralized equivalence are visible in the type structure.}

\textbf{Shared-state factorization.}
The central semantic object is a shared-state factorization. Let $\clients$ denote the set of clients, let $X=\{X^{(c)}\}_{c\in\clients}$ be a federated input, and let $n_c$ be the local number of records at client $c$. A federated computation admits a shared-state factorization if it can be written as $\sigma_{\clients}(X) = \psi\left( \bigoplus_{c\in\clients} \phi_{n_c}(X^{(c)})\right)$, where each encoder $\phi_{n_c}$ is evaluated locally on client $c$, $\oplus$ is a merge operation on fixed-dimensional client summaries, and $\psi$ is a shared-only decoder. Figure~\ref{fig:shared-state-factorization} visualizes this three-stage structure of local encoding, aggregation (merging) into shared state, and shared-only decoding. The shared state is finite-dimensional and fixed in advance; it may depend on the computation, but not on the number of clients or on their local record counts.

\begin{figure}[t]
\centering
\begin{tikzpicture}[
    >=Latex,
    font=\small,
    node distance=6mm and 8mm,
    box/.style={
        draw,
        rounded corners,
        align=center,
        inner sep=4pt,
        minimum height=8.5mm,
        text width=12mm
    },
    ebox/.style={
        draw,
        rounded corners,
        align=center,
        inner sep=4pt,
        minimum height=8.5mm,
        text width=25mm
    },
    sbox/.style={
        draw,
        rounded corners,
        align=center,
        inner sep=4pt,
        minimum height=9.5mm,
        minimum width=28mm
    },
    psibox/.style={
        draw,
        rounded corners,
        align=center,
        inner sep=4pt,
        minimum height=8.5mm,
        minimum width=15mm
    },
    every path/.style={draw, -Latex}
]

\node[box] (x1) {$X^{(c_1)}$};
\node[box, below=3mm of x1] (x2) {$X^{(c_2)}$};
\node[below=1mm of x2] (vdots) {$\vdots$};
\node[box, below=1mm of vdots] (xm) {$X^{(c_m)}$};

\node[ebox, right=10mm of x1] (e1) {$\phi_{n_{c_1}}\!\left(X^{(c_1)}\right)$};
\node[ebox, right=10mm of x2] (e2) {$\phi_{n_{c_2}}\!\left(X^{(c_2)}\right)$};
\node[ebox, right=10mm of xm] (em) {$\phi_{n_{c_m}}\!\left(X^{(c_m)}\right)$};

\node[sbox, right=12mm of e2, text width=31mm] (agg) {$\displaystyle s=\bigoplus_{c\in\clients}\phi_{n_c}(X^{(c)})$};

\node[psibox, right=12mm of agg] (dec) {$\psi(s)$};

\draw (x1) -- (e1);
\draw (x2) -- (e2);
\draw (xm) -- (em);

\draw (e1.east) -- ($(agg.west)+(0,4mm)$);
\draw (e2.east) -- (agg.west);
\draw (em.east) -- ($(agg.west)+(0,-4mm)$);

\draw (agg) -- (dec);

\node[above=1mm of x1, align=center] {\footnotesize Local input};
\node[above=1mm of e1, align=center] {\footnotesize Local encoders};
\node[above=1mm of agg, align=center] {\footnotesize Merge into shared state};
\node[above=1mm of dec, align=center] {\footnotesize Shared-only decoder};

\end{tikzpicture}
\caption{Shared-state factorization of a federated computation. Each client applies a local encoder to its local tensor, the encoded summaries are merged into a finite-dimensional shared state, and a shared-only decoder produces the final output.}
\label{fig:shared-state-factorization}
\end{figure}

\textbf{Contributions.}
The three-step pattern of computing local summaries, aggregating them, and decoding the result has appeared in federated learning, database aggregation, and mergeable summaries~\citep{gray1997datacube,agarwal2012mergeable,mcmahan2017communication}. Our contribution is to give this pattern a typed federated tensor-language formulation, connect it to virtual-global semantics, and prove expressibility and factorization theorems for a broad class of federated computations. More specifically, our contributions are as follows.
\begin{enumerate}[leftmargin=*, topsep=0pt, itemsep=0pt]
\item \emph{Language definition and semantics}. We define a typed federated tensor language with federated and shared tensors, record-axis-aware typing, and virtual-global semantics. The language ensures client-locality, restricts how shared outputs may arise from federated inputs, and agrees with centralized evaluation on the virtual global tensor.
\item \emph{Factorization theorems and iteration}. We prove structural expressibility theorems showing that the typing and semantic rules are aligned with fixed-dimensional shared-state computation. Typed one-round programs induce shared-state factorizations, and conversely, factorizations whose encoders and decoders are representable in the language are realized by typed programs. We extend this correspondence to iterative programs whose persistent state is shared.
\item \emph{Differentiable learning and optimization}. We specialize the framework to differentiable learning. Representably differentiable losses induce federated gradient tensors whose record-axis sums yield global gradients. This gives typed programs for server-side gradient descent, momentum, and Adam, and, after adding shared-only linear algebra primitives, curvature-block methods such as damped Newton, Gauss-Newton, and iteratively reweighted least squares (IRLS) whenever their updates are shared-only functions of aggregated gradients and curvature blocks.
\end{enumerate}

\textbf{Scope.}
We outline the scope of the base language here. It internalizes the semantics of horizontal federated learning~\citep{mcmahan2017communication} for computations that pass through fixed-dimensional shared state. Its purpose is to isolate the common algebraic core behind federated learning and federated analytics: client-local tensor computation, controlled record-axis elimination, and shared-only decoding. For this class, the language gives exact federated execution in the sense that typed distributed computations agree with centralized evaluation on the virtual global tensor. The resulting factorization also gives a formal interface for differential privacy mechanisms, which can be applied to encoded local messages, merged shared state, or decoded outputs. The language does not cover all federated algorithms. Methods with persistent private client state, exact kernel-based learning methods whose state grows with the total number of records, and holistic summaries such as exact medians that do not factor through fixed-dimensional mergeable state require language primitives or type sorts beyond the base language. This boundary is part of the contribution. It identifies a broad class for exact federated computation while making clear how richer federated languages may be developed for other locality and expressivity requirements.

\textbf{Paper structure.}
The rest of the paper is organized as follows. Section~\ref{sec:typed-language} introduces the typed federated tensor language and its semantic guarantees. Section~\ref{sec:shared-state-factorizations} proves the shared-state factorization results. Section~\ref{sec:differentiable-learning} specializes the theory to differentiable losses and learning updates. Section~\ref{sec:discussion} discusses limitations and extensions. Related work is discussed in Appendix~\ref{app:related-work}.

\section{Typed federated tensor language}
\label{sec:typed-language}

This section defines the base typed tensor language. The language separates federated tensors from shared tensors, tracks the record axis of federated data, and gives virtual-global semantics for comparing distributed execution with centralized tensor evaluation. We then state the three semantic guarantees used throughout the paper, namely client-locality, exposure discipline, and virtual-global consistency.

\subsection{Semantic objects}
\label{subsec:semantic-objects}

We work in a federated setting. Each client stores data that are partitioned along one tensor axis, called the \emph{record axis}. Shared tensors are available at every client. The \emph{virtual global tensor} is a centralized reference object obtained by concatenating client-local tensors along the record axis in a fixed client order. It is used only for semantics.

\begin{definition}[Federation]
\label{def:federation}
A federation is a finite nonempty set $\clients$ equipped with a fixed total order. We write $\clients = \{c_1,\dots,c_m\}$, $c_1 < \ldots < c_m$.
\end{definition}

For each integer $k \geq 1$, we write $[k] := \{1,\dots,k\}$ and $\mathbb N_0 := \{0,1,2,\dots\}$. Scalars are identified with $0$-way tensors.

\begin{definition}[Federated tensor]
\label{def:federated-tensor}
Let $k \geq 1$, let $r \in [k]$, and let $\mathbf d = (d_1,\dots,d_{r-1},d_{r+1},\dots,d_k)$ be a tuple of positive integers. When $k=1$, the tuple $\mathbf d$ is empty. A $k$-way federated tensor with record axis $r$ and common non-record shape $\mathbf d$ is a family $X = \{X^{(c)}\}_{c \in \clients}$ for which there exists a family of local record counts $\{n_c\}_{c \in \clients} \subseteq \mathbb N_0$ such that $X^{(c)} \in \RR^{d_1 \times \ldots \times d_{r-1} \times n_c \times d_{r+1} \times \ldots \times d_k}$ for every $c \in \clients$.
\end{definition}

\begin{definition}[Shared tensor]
\label{def:shared-tensor}
A shared tensor is either a scalar $S \in \RR$, or a tensor $S \in \RR^{s_1 \times \cdots \times s_k}$ for some integer $k \geq 1$ and some positive integers $s_1,\dots,s_k$, with the property that the same value of $S$ is available at every client.
\end{definition}

\begin{definition}[Virtual global tensor]
\label{def:virtual-global-tensor}
Let $X = \{X^{(c)}\}_{c \in \clients}$ be a federated tensor with record axis $r$, common non-record shape $\mathbf d = (d_1,\dots,d_{r-1},d_{r+1},\dots,d_k)$, and local record counts $\{n_c\}_{c \in \clients}$. The virtual global tensor of $X$ is $\vglob{X} := \operatorname{Concat}_r(X^{(c_1)},\dots,X^{(c_m)})$, where $\clients = \{c_1,\dots,c_m\}$ is listed in increasing order and $\operatorname{Concat}_r$ denotes concatenation along axis $r$. Moreover,
\[
\vglob{X} \in \RR^{d_1 \times \ldots \times d_{r-1} \times \left(\sum_{j=1}^m n_{c_j}\right) \times d_{r+1} \times \ldots \times d_k}.
\]
\end{definition}

The fixed order on $\clients$ is used to define $\vglob{X}$. The tensor $\vglob{X}$ is a semantic reference object that is used to state the correctness of typed federated computations. The local record counts $\{n_c\}_{c \in \clients}$ are part of the value of a federated tensor.

\subsection{Typed expressions and primitive operations}
\label{subsec:typed-expressions-primitives}

The language has two tensor sorts. A shared type is written $\Sh(\mathbf s)$, where $\mathbf s$ is an ordinary shape tuple. A federated type is written $\Fed_r(\mathbf d)$, where $r$ is the record axis and $\mathbf d$ is the common non-record shape. This is the type-level counterpart of Definition~\ref{def:federated-tensor}.

Expressions are built from variables, corresponding to federated or shared tensors, by applying primitive tensor operations. A typing judgment has the form $\Gamma \vdash e : T$, where $\Gamma$ assigns tensor types to variables, $e$ is an expression, and $T$ is either shared or federated. The base primitive signature is denoted by $\Sigma_0$. It contains unary element-wise maps, binary element-wise maps, binary comparisons, axis-wise aggregations, axis permutations, and three matrix-product primitives. The full syntax, shape rules, broadcast rules, semantic clauses, and typing rules are given in Appendix~\ref{app:language-spec}.

For reference, the primitive families in $\Sigma_0$ are summarized in Table~\ref{tab:primitive-families}. Here $\mathcal U$ denotes scalar unary maps, $\mathcal B$ denotes scalar binary maps, $\mathcal G$ denotes scalar comparison maps, $\mathcal A$ denotes axis-wise aggregation families, and $\mathcal P$ denotes axis permutations. The symbols $\mathsf{MatMul}_{\Fed\Sh}$, $\mathsf{MatMul}_{\Sh\Fed}$, and $\mathsf{MatMul}_{\Fed\Fed}$ denote federated-shared, shared-federated, and federated-federated matrix products, respectively, with the federated-federated case contracting aligned record axes into a shared matrix.

\begin{table}[t]
\centering
\caption{Primitive families in the base typed tensor language. The table records the behavior of a typed primitive application when at least one input is federated. Shared-only inputs remain shared under the non-matrix primitives.}
\label{tab:primitive-families}
\begin{tabular}{lll}
\toprule
Primitive family & Federated-output case & Shared-output case \\
\midrule
Unary element-wise maps $u \in \mathcal U$ & Preserve federated type & None \\
Binary element-wise maps $\beta \in \mathcal B \cup \mathcal G$ & Preserve federated type & None \\
Aggregation $\alpha_j$ with $j$ not record axis & Remove non-record axis & None \\
Aggregation $\alpha_r$ along record axis & None & Produce shared tensor \\
Axis permutations $\tau \in \mathcal P$ & Move record axis & None \\
$\mathsf{MatMul}_{\Fed\Sh}$ & Preserve federated type & None \\
$\mathsf{MatMul}_{\Sh\Fed}$ & Preserve federated type & None \\
$\mathsf{MatMul}_{\Fed\Fed}$ & None & Produce shared matrix \\
\bottomrule
\end{tabular}
\end{table}

We use the following terminology throughout the paper. A \emph{typed primitive application} is an application of a primitive symbol in $\Sigma_0$ that satisfies the corresponding typing, shape, broadcast, and record-axis conditions of Appendix~\ref{app:language-spec}. A typed expression $\Gamma \vdash e : T$ is a \emph{shared expression} if $T$ is a shared type, and it is a \emph{federated expression} if $T$ is a federated type.

A federated expression is called \emph{client-local} if every primitive subexpression that receives at least one federated-typed argument has federated type. Thus, client-local expressions may use shared inputs, but they do not perform an intermediate operation that converts federated data into shared state. Record-axis aggregations and the federated-federated matrix product are therefore excluded from client-local expressions, except when they are the shared-output step being studied.
A shared expression is called \emph{shared-only} if all free variables in its typing context have shared type. Shared-only expressions model post-processing performed after the shared state has been formed.

The main role of the type system is to separate local computation from shared-state formation. Client-local federated expressions preserve the record axis and can be evaluated independently on each client. Shared state is formed only when the record axis is eliminated by a designated shared-output primitive, such as record-axis aggregation or the federated-federated matrix product. The next subsection records the three semantic guarantees that make this separation precise.

\subsection{Foundational semantic propositions}
\label{subsec:semantic-propositions}

The full formal specification of the base typed language is given in Appendix~\ref{app:language-spec}. In this subsection, a typed primitive application means an application of a primitive symbol of $\Sigma_0$ satisfying the typing, shape, broadcast, and axis conditions of that specification.

For each primitive symbol $F$ of $\Sigma_0$, we write $F^{\operatorname{ord}}$ for its ordinary centralized interpretation. If $F$ is one of the symbols in $\mathcal U \cup \mathcal B \cup \mathcal G \cup \{\alpha_j : \alpha \in \mathcal A,\ j \geq 1\} \cup \mathcal P$, then $F^{\operatorname{ord}}$ is the corresponding ordinary tensor operation. If $F$ is one of the three matrix-product symbols, then $F^{\operatorname{ord}}$ is ordinary matrix multiplication in the displayed operand order.

Proposition~\ref{prop:client-locality} states client-locality of every federated-output primitive application. Proposition~\ref{prop:exposure-discipline} identifies the primitive applications that can produce shared state from federated inputs. Proposition~\ref{prop:virtual-global-consistency} states agreement between the distributed primitive semantics and centralized evaluation on the virtual global tensor.

\begin{proposition}[Client-locality]
\label{prop:client-locality}
Let $F$ be a primitive symbol of $\Sigma_0$. Consider a typed primitive application of $F$ whose federated inputs are $X_1,\dots,X_p$ and whose shared inputs are $S_1,\dots,S_q$, and let $Y$ denote its output. Assume that $Y$ is a federated tensor. Then, for every client $c \in \clients$, there exists a deterministic map $\Phi_{F,c}$ such that $Y^{(c)} = \Phi_{F,c}(X_1^{(c)},\dots,X_p^{(c)},S_1,\dots,S_q)$. The maps $\Phi_{F,c}$ can be chosen so that they depend on the primitive symbol, the operand positions, and the local input shapes at client $c$, but not on the identity of $c$ or on any tensor stored at another client.
\end{proposition}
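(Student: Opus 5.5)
The proof is a case analysis over the primitive families of $\Sigma_0$ listed in Table~\ref{tab:primitive-families}. We keep only the cases whose typed application can have federated output. By the typing rules of Appendix~\ref{app:language-spec}, these are:
\begin{itemize}
\item unary element-wise maps $u\in\mathcal U$;
\item binary element-wise maps and comparisons $\beta\in\mathcal B\cup\mathcal G$;
\item aggregations $\alpha_j$ along a non-record axis $j\neq r$;
\item axis permutations $\tau\in\mathcal P$;
\item the two mixed products $\mathsf{MatMul}_{\Fed\Sh}$ and $\mathsf{MatMul}_{\Sh\Fed}$.
\end{itemize}
The record-axis aggregations $\alpha_r$ and $\mathsf{MatMul}_{\Fed\Fed}$ always produce shared type, so the hypothesis that $Y$ is federated excludes them. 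For each remaining case, we read the distributed semantic clause of the specification and define $\Phi_{F,c}$ to be the ordinary operation $F^{\operatorname{ord}}$ applied to the local blocks, with shared operands broadcast against them. Concretely, $\Phi_{F,c}(X_1^{(c)},\dots,X_p^{(c)},S_1,\dots,S_q)$ is $F^{\operatorname{ord}}$ evaluated on these arguments at the local shapes.

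The cases are then checked one by one.
\begin{itemize}
\item \emph{Element-wise maps.} The semantic clause sets $Y^{(c)}=u(X^{(c)})$, or $\beta(X_1^{(c)},X_2^{(c)})$ when the record axes and counts are aligned, or $\beta(X^{(c)},S)$ with $S$ broadcast along non-record axes. Each of these is entrywise on the local block.
\item \emph{Non-record aggregation $\alpha_j$, $j\neq r$.} The aggregation acts within each fixed record slice. Hence it commutes with restricting to client $c$, and it is computed from $X^{(c)}$ alone.
\item \emph{Permutations $\tau$.} These only relabel axes, including the record axis. The local image is $\tau^{\operatorname{ord}}(X^{(c)})$.
\item \emph{Mixed products.} For $\mathsf{MatMul}_{\Fed\Sh}$, the record axis is the row axis of the federated operand, and the contraction runs over a non-record axis. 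Therefore $Y^{(c)}=X^{(c)}S$. The case $\mathsf{MatMul}_{\Sh\Fed}$ is symmetric, giving $Y^{(c)}=SX^{(c)}$ with the record axis as the column axis.
\end{itemize}

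In every case the formula mentions only the local inputs at $c$, the shared inputs, the primitive symbol with its parameters, and the operand positions. The only dependence on $c$ enters through the local shapes, in particular $n_c$, which fix the domain of $F^{\operatorname{ord}}$. We can therefore define $\Phi_{F,c}$ as a single map indexed by (symbol, operand positions, local shapes). This gives the required independence from the identity of $c$ and from data held at any other client, and determinism holds because $F^{\operatorname{ord}}$ is a function.

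The main obstacle is bookkeeping rather than mathematics. We must confirm that the typing and broadcast rules never let a federated-output application broadcast or contract across the record axis in a way that mixes clients. One example is a binary map between a federated tensor and a shared tensor whose extent along the record axis is not $1$. Another is a matrix product whose contracted index is the record axis. We handle this by citing the side conditions in Appendix~\ref{app:language-spec}. Those conditions require the record axis to be either aligned between federated operands, so that both have local extent $n_c$, or broadcast from a size-$1$ shared axis. They also require any contraction of the record axis to be a shared-output primitive, which the hypothesis that $Y$ is federated rules out.
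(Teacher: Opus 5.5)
Your proposal is correct and takes essentially the same route as the paper. Both arguments run a case analysis over the federated-output primitives: unary maps, the three binary operand configurations, non-record aggregation, permutations, $\mathsf{MatMul}_{\Fed\Sh}$ and $\mathsf{MatMul}_{\Sh\Fed}$. In each case $Y^{(c)}$ is read directly off the client-wise semantic clause, and the resulting map is seen to depend only on the symbol, the operand positions and the local shapes. The paper writes each $\Phi_{F,c}$ out explicitly (for example $\beta(A,\operatorname{Br}_{\operatorname{shape}(S)}^{\operatorname{shape}(A)}(S))$ in the mixed binary case) instead of packaging them as $F^{\operatorname{ord}}$ on local blocks, but this difference is only cosmetic.
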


\begin{proposition}[Exposure discipline]
\label{prop:exposure-discipline}
Let $F$ be a primitive symbol of $\Sigma_0$. Consider a typed primitive application of $F$ whose federated inputs are $X_1,\dots,X_p$ and whose shared inputs are $S_1,\dots,S_q$, and let $Y$ denote its output. The output object is globally available if and only if $Y$ is a shared tensor. Assume that $p \geq 1$ and that $Y$ is shared. Then exactly one of the following two cases holds. There exist $\alpha \in \mathcal A$, an index $i \in [p]$, and the record axis $r$ of $X_i$ such that $Y = \alpha_r(X_i)$. There exist indices $i,j \in [p]$ such that $Y = \mathsf{MatMul}_{\Fed\Fed}(X_i,X_j)$, where $\mathsf{MatMul}_{\Fed\Fed}$ is the federated-federated matrix-product primitive that contracts the record axes of two federated matrix inputs and returns a shared matrix (see Appendix~\ref{app:language-spec}).
\end{proposition}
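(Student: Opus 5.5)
The proof is a case analysis over the primitive families of $\Sigma_0$, driven by the typing rules of Appendix~\ref{app:language-spec} as summarized in Table~\ref{tab:primitive-families}.

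\textbf{The biconditional.} The first claim, that the output is globally available if and only if $Y$ is shared, follows directly from the definitions. By Definition~\ref{def:shared-tensor}, a shared tensor is one whose value is available at every client. A federated output is a family $\{Y^{(c)}\}$, and by Proposition~\ref{prop:client-locality} each $Y^{(c)}$ is computed from client $c$'s local data only. So $Y^{(c)}$ resides only at client $c$, and the semantics make no provision for it to be available elsewhere. Hence global availability coincides with the output sort being $\Sh$.

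\textbf{The case analysis.} Now assume $p\ge 1$ and that $Y$ is shared. I go through the primitive families one at a time and read off, from the typing rules, which ones can give a shared output when at least one argument is federated.
\begin{itemize}
\item For unary maps $u\in\mathcal U$, binary maps $\beta\in\mathcal B\cup\mathcal G$, permutations $\tau\in\mathcal P$, and aggregations $\alpha_j$ with $j$ not the record axis, the rules assign a federated output as soon as any input is federated. The unary and permutation cases have a single, necessarily federated, argument. For the binary case I invoke the broadcast rule: a federated operand combined with a federated or shared operand yields $\Fed_r$. These families are therefore excluded.
\item $\mathsf{MatMul}_{\Fed\Sh}$ and $\mathsf{MatMul}_{\Sh\Fed}$ have federated output by their typing rules, so they are excluded as well.
\item $\alpha_r$ along the record axis of its unique (federated) argument $X_i$ yields $\Sh$. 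This is the first case.
\item $\mathsf{MatMul}_{\Fed\Fed}(X_i,X_j)$ yields a shared matrix. This is the second case.
\end{itemize}

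\textbf{Exclusivity.} The two cases cannot both hold because they involve distinct primitive symbols. A given application is an application of exactly one symbol $F$, and its output is described by the semantic clause for that $F$. So the cases are mutually exclusive as descriptions of the application.

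\textbf{Main obstacle.} The only delicate point is making sure no other rule silently produces a shared output from a federated input. The two places to check are the binary broadcast rules, for example when the federated operand broadcasts against a shared operand of larger rank, and aggregation along an axis that a permutation has made the record axis. For both, I would check in the appendix that typing always tracks the record axis index $r$ through $\tau$. With that tracking, "aggregation along the record axis" is always $\alpha_r$ for the current $r$ of the argument, and broadcasting never removes the record axis.
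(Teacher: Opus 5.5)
Your proposal is correct and follows the same route as the paper: the biconditional comes from the definition of shared tensors as opposed to client-indexed federated outputs, then a symbol-by-symbol case analysis, with exclusivity because record-axis aggregation and $\mathsf{MatMul}_{\Fed\Fed}$ are distinct symbols. (The paper does not invoke Proposition~\ref{prop:client-locality} for the biconditional; it only uses that a federated output is a family $\{Y^{(c)}\}_{c\in\clients}$ rather than a single value.) The two checks you flag as the main obstacle are settled directly by the specification: record-agnostic broadcast compatibility forbids a shared operand of higher rank than the federated one and forces extent $1$ on the record axis, and permutation typing gives $\tau(\Fed_r(\mathbf d))=\Fed_{\tau(r)}(\mathbf d')$.
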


\begin{proposition}[Virtual-global consistency]
\label{prop:virtual-global-consistency}
Let $F$ be a primitive symbol of $\Sigma_0$ with arity $m$. Consider a typed primitive application of $F$ to inputs $A_1,\dots,A_m$, where each $A_t$ is either a federated tensor or a shared tensor, and let $Y$ denote its output. For each $t \in [m]$, define
\[
\widetilde A_t
=
\begin{cases}
\vglob{A_t}, & \text{if } A_t \text{ is a federated tensor},\\
A_t, & \text{if } A_t \text{ is a shared tensor}.
\end{cases}
\]
If $Y$ is a federated tensor, then $\vglob{Y} = F^{\operatorname{ord}}(\widetilde A_1,\dots,\widetilde A_m)$. If $Y$ is a shared tensor, then $Y = F^{\operatorname{ord}}(\widetilde A_1,\dots,\widetilde A_m)$.
\end{proposition}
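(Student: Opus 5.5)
The proof is a case analysis over the primitive families of $\Sigma_0$ as listed in Table~\ref{tab:primitive-families}. In each case I unfold the distributed semantic clause from Appendix~\ref{app:language-spec} and compare it with $F^{\operatorname{ord}}$ applied to the virtual global inputs $\widetilde A_t$. The single tool used throughout is the block structure of $\operatorname{Concat}_r$ from Definition~\ref{def:virtual-global-tensor}. Write $N_c=\sum_{j<c}n_j$ for the offset of client $c$ along the record axis. Then the slice of $\vglob{X}$ at record indices $N_c+1,\dots,N_c+n_c$ is exactly $X^{(c)}$. It therefore suffices to show, blockwise, that the $c$-th record block of $F^{\operatorname{ord}}(\widetilde A)$ equals $Y^{(c)}$ in the federated-output cases, and that the whole output agrees in the shared-output cases.

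\textbf{Federated-output cases.} For unary maps $u$, the ordinary operation acts entrywise, so it commutes with concatenation. For binary maps and comparisons $\beta$, two situations arise. If both operands are federated, typing forces them to share the record axis and the local counts $n_c$, so their concatenations are aligned block by block. If one operand is shared, the broadcast rules forbid a shared operand from carrying a record-length axis, so the shared tensor broadcasts identically onto every block. For a non-record aggregation $\alpha_j$ with $j\neq r$, the reduction acts within each fixed record index and so commutes with slicing along $r$. The only bookkeeping is that $r$ shifts to $r-1$ when $j<r$, which is handled by re-indexing. For a permutation $\tau$, concatenation along $r$ becomes concatenation along $\tau(r)$. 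For $\mathsf{MatMul}_{\Fed\Sh}(X,S)$, the record axis is the row axis, and a row-block of $\vglob{X}S$ equals $X^{(c)}S$. The case $\mathsf{MatMul}_{\Sh\Fed}(S,X)$ is symmetric, with the record axis as the column axis. In every one of these cases Proposition~\ref{prop:client-locality} already supplies the per-client maps, and the consistency claim follows directly.

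\textbf{Shared-output cases.} By Proposition~\ref{prop:exposure-discipline}, when $p\ge1$ only two cases need checking. The first is $Y=\alpha_r(X)$. The distributed semantics combines per-client partial aggregates with the merge of $\alpha$, and I need $\alpha^{\operatorname{ord}}$ on the concatenation to equal that merge. This holds if the appendix specifies each $\alpha\in\mathcal A$ as a family with an associative merge compatible with concatenation, as sum, max and min are. For sum the identity is $\sum_{i=1}^{N}=\sum_c\sum_{i\in\text{block }c}$. Empty clients with $n_c=0$ must contribute the identity element. If the appendix allows only such decomposable aggregations, this case is immediate. The second is $\mathsf{MatMul}_{\Fed\Fed}(X_i,X_j)$, which contracts the aligned record axes: $\vglob{X_i}^\top\vglob{X_j}=\sum_c {X_i^{(c)}}^\top X_j^{(c)}$ by splitting the contracted index into blocks. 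Finally, when $p=0$ every input is shared, and the semantics is the ordinary one by definition.

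\textbf{Main obstacle.} I expect the hardest part to be the record-axis aggregation case. The argument depends on how $\mathcal A$ and the distributed semantics of $\alpha_r$ are specified in the appendix, and in particular on whether non-decomposable or mean-type aggregations are admitted. Mean, for example, needs the global count $N$ rather than a plain merge of local means, so it would have to be realized as sum divided by a summed count. I would first check that each $\alpha$ is defined through a merge operation together with an identity element. A secondary obstacle is aligning the broadcast conventions precisely, so that no shared operand ever has a dimension matched against the record axis.
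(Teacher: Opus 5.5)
Your case analysis matches the paper's proof everywhere except the record-axis aggregation case, and there your argument rests on a misreading of the semantics. Definition~\ref{def:aggregation-semantics} does not define $\alpha_r(X)$ by merging per-client partial aggregates. It defines it directly as $\alpha_r(X) := \alpha_r(\vglob{X})$, so this case holds by definition for every $\alpha\in\mathcal A$, and the paper's proof dispatches it in one line. Your version instead makes the case conditional on every $\alpha\in\mathcal A$ carrying an associative merge, with an identity element, that is compatible with concatenation. The paper imposes no such hypothesis. By Definition~\ref{def:aggregation-schema}, an aggregation schema is an arbitrary family $(\alpha_n)_{n\in\mathbb N_0}$ with $\alpha_n:\RR^n\to\RR$, and nothing prevents $\mathcal A$ from containing mean, median, or other non-decomposable reductions. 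As written, your argument therefore proves the proposition only for decomposable schemas. Your suggested re-encoding of mean as a summed sum over a summed count is also unnecessary.

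What you flag as the ``main obstacle'' is a distinction the paper draws deliberately. Virtual-global consistency holds by construction of the semantics. Client-wise mergeability is a separate hypothesis, imposed only where it is needed: in the merge identities of Definitions~\ref{def:one-round-typed-program} and~\ref{def:iterative-typed-program}, for the factorization theorems. Building the merge into the semantics would mean proving consistency for a different, smaller language.

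There is also a smaller convention mismatch in the $\mathsf{MatMul}_{\Fed\Fed}$ case. In the paper, $\mathsf{MatMul}_{\Fed\Fed}(X,Z)$ takes $X$ with record axis $2$ and $Z$ with record axis $1$ (Definitions~\ref{def:fed-fed-matmul-semantics} and~\ref{def:typing-fed-fed-matmul}). $F^{\operatorname{ord}}$ is matrix multiplication in the displayed order, so the identity to verify is $\vglob{X}\,\vglob{Z}=\sum_{c\in\clients}X^{(c)}Z^{(c)}$, with no transposes. A Gram matrix such as $X^\top X$ arises only after composing with a permutation primitive. Once the operand convention is fixed, your block-splitting of the contracted index is exactly the paper's argument.

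Your remaining cases match the paper's: unary maps, binary maps with aligned federated operands or record-agnostic broadcast of a shared operand, non-record aggregation, permutation, the two mixed matrix products, and the all-shared case. The paper makes your blockwise reasoning precise via the offset index map $\mu_X(c,i)$.
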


Propositions~\ref{prop:client-locality}, \ref{prop:exposure-discipline}, and~\ref{prop:virtual-global-consistency} complete the setup of the base typed language, and form the semantic input to the shared-state factorization theorems developed in Section~\ref{sec:shared-state-factorizations}. Their proofs are given in Appendix~\ref{app:semantic-proofs}.

\section{Shared-state factorizations}
\label{sec:shared-state-factorizations}

The semantic propositions of Section~\ref{sec:typed-language} show that the base language separates client-local computation from shared-state formation. We now use this separation to describe the class of shared-output computations represented by the language. The central object is a factorization through a fixed-dimensional shared state. The state is computed locally, merged across clients, and decoded into the final shared output. The proofs of the theoretical statements of this section are given in Appendix~\ref{app:shared-state-proofs}.

\subsection{Semantic shared-state factorizations}
\label{subsec:semantic-shared-state-factorizations}

Consider an input type $\Fed_r(\mathbf d)$, where $\mathbf d = (d_1,\dots,d_{r-1},d_{r+1},\dots,d_k)$, and an output shape $\mathbf t$. For each $n \in \mathbb N_0$, write $\mathcal X_n(r,\mathbf d) := \RR^{d_1 \times \ldots \times d_{r-1} \times n \times d_{r+1} \times \ldots \times d_k}$. A shared-output computation of input type $\Fed_r(\mathbf d)$ and output shape $\mathbf t$ is a rule $\sigma$ that assigns to every finite ordered federation $\clients$ and every federated tensor $X=\{X^{(c)}\}_{c \in \clients}$ of type $\Fed_r(\mathbf d)$ an output $\sigma_{\clients}(X) \in \RR^{\mathbf t}$.

\begin{definition}[Shared-state factorization]
\label{def:shared-state-factorization}
Let $\sigma$ be a shared-output computation of input type $\Fed_r(\mathbf d)$ and output shape $\mathbf t$. A shared-state factorization of $\sigma$ consists of a finite product of tensor spaces $M = \RR^{\mathbf s_1} \times \ldots \times \RR^{\mathbf s_q}$, where $q \geq 1$, a binary operation $\oplus_M : M \times M \to M$, an element $0_M \in M$, local encoder maps $\phi_n : \mathcal X_n(r,\mathbf d) \to M$ for every $n \in \mathbb N_0$, and a decoder map $\psi : M \to \RR^{\mathbf t}$, such that the following two conditions hold.
First, $(M,\oplus_M,0_M)$ is a commutative monoid. Thus, for all $a,b,c \in M$, we have $(a \oplus_M b) \oplus_M c = a \oplus_M (b \oplus_M c)$, $a \oplus_M b = b \oplus_M a$, and $a \oplus_M 0_M = a$.
Second, for every finite ordered federation $\clients$ and every federated tensor $X=\{X^{(c)}\}_{c \in \clients}$ of type $\Fed_r(\mathbf d)$ with local record counts $\{n_c\}_{c \in \clients}$, we have $\sigma_{\clients}(X) = \psi\left(\bigoplus_{c \in \clients} \phi_{n_c}(X^{(c)})\right)$.
\end{definition}

The finite product $M$ is independent of the federation and of the local record counts. This is the fixed-dimensional shared state of the computation. Definition~\ref{def:shared-state-factorization} is semantic and does not require the encoders or decoder to be expressed in the typed language. The next subsection connects this semantic factorization with typed one-round programs.

The same factorization also identifies the tensor-level interface for privacy mechanisms. Local differential privacy may be applied to the encoded client messages, secure multi-party computation may realize the merge, and central differential privacy may be applied to the merged state or decoded output. Appendix~\ref{app:privacy-integration} formalizes this observation as a privacy-lifting result for shared-state factorizations.

\subsection{One-round typed programs}
\label{subsec:one-round-typed-programs}

We restrict the semantic factorization of Definition~\ref{def:shared-state-factorization} to factorizations generated by typed expressions. A one-round program first evaluates finitely many client-local federated expressions. It then converts each of them into shared state by an allowed shared-output primitive. Finally, it applies a shared-only decoder. Throughout this subsection, expressions are over the base primitive signature $\Sigma_0$.

\begin{definition}[One-round typed program]
\label{def:one-round-typed-program}
Consider an input type $\Fed_r(\mathbf d)$ and an output shape $\mathbf t$. A one-round typed program consists of shared-state expressions $g_1(x),\dots,g_q(x)$ and a shared-only decoder expression $h$, where $q \geq 1$, with the following properties.

For each $i \in [q]$, the expression $g_i(x)$ has one of the following two forms.
The first form is record-axis aggregation. There are a client-local federated expression $e_i(x)$, a type $\Fed_{r_i}(\mathbf u_i)$, and an aggregation schema $\alpha^{(i)} \in \mathcal A$ such that $x : \Fed_r(\mathbf d) \vdash e_i(x) : \Fed_{r_i}(\mathbf u_i)$ and $g_i(x) = \alpha^{(i)}_{r_i}(e_i(x))$. The output shape of $g_i(x)$ is denoted by $\mathbf s_i$. There is a commutative monoid $(\RR^{\mathbf s_i},\oplus_i,0_i)$ such that, for every federated tensor $Z$ of type $\Fed_{r_i}(\mathbf u_i)$, $\alpha^{(i)}_{r_i}(Z) = \bigoplus_{c \in \clients} \alpha^{(i)}_{r_i}(Z^{(c)})$.
The second form is a federated-federated matrix product. There are client-local federated expressions $a_i(x)$ and $b_i(x)$ and positive integers $m_i,n_i$ such that $x : \Fed_r(\mathbf d) \vdash a_i(x) : \Fed_2((m_i))$ and $x : \Fed_r(\mathbf d) \vdash b_i(x) : \Fed_1((n_i))$, and $g_i(x)=\mathsf{MatMul}_{\Fed\Fed}(a_i(x),b_i(x))$. In this case, $\mathbf s_i=(m_i,n_i)$ and the associated merge operation on $\RR^{\mathbf s_i}$ is ordinary addition.

The decoder is a shared-only expression $y_1:\Sh(\mathbf s_1),\dots,y_q:\Sh(\mathbf s_q) \vdash h(y_1,\dots,y_q):\Sh(\mathbf t)$. The shared-output computation induced by the program is $\sigma_{\clients}(X) = h(g_1(X),\dots,g_q(X))$ for every finite ordered federation $\clients$ and every federated input $X$ of type $\Fed_r(\mathbf d)$.
\end{definition}

The two allowed forms of $g_i$ are the shared-output mechanisms from federated inputs in Proposition~\ref{prop:exposure-discipline}. The merge condition in the aggregation case excludes aggregations that do not admit a fixed-dimensional client-wise merge. The next result (Theorem~\ref{thm:one-round-shared-state-factorization}) shows that every one-round typed program has a semantic shared-state factorization.

\begin{theorem}[One-round shared-state factorization]
\label{thm:one-round-shared-state-factorization}
Let $\sigma$ be the shared-output computation induced by a one-round typed program of input type $\Fed_r(\mathbf d)$ and output shape $\mathbf t$. Then $\sigma$ admits a shared-state factorization (see Definition~\ref{def:shared-state-factorization}).
\end{theorem}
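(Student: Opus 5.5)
The plan is to build the factorization componentwise, one factor per shared-state expression $g_i$, and then take the product. I will set $M := \RR^{\mathbf s_1} \times \cdots \times \RR^{\mathbf s_q}$ with componentwise merge $\oplus_M := (\oplus_1,\dots,\oplus_q)$ and unit $0_M := (0_1,\dots,0_q)$. Here $(\oplus_i,0_i)$ is the monoid supplied by Definition~\ref{def:one-round-typed-program} in the aggregation case, and $(+,0)$ in the matrix-product case. A product of commutative monoids is a commutative monoid, so the first condition of Definition~\ref{def:shared-state-factorization} is immediate. Since $M$ is built only from the program's output shapes $\mathbf s_i$, it does not depend on $\clients$ or on the $n_c$.

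\textbf{Encoders.} For each $n$, I will define $\phi_n(Z) := (\phi^{(1)}_n(Z),\dots,\phi^{(q)}_n(Z))$ for $Z \in \mathcal X_n(r,\mathbf d)$.
\begin{itemize}
\item In the aggregation case, set $\phi^{(i)}_n(Z) := \alpha^{(i)}_{r_i}\bigl(e_i^{\mathrm{loc}}(Z)\bigr)$. Here $e_i^{\mathrm{loc}}$ is the local evaluation map of the client-local expression $e_i$. I obtain it by structural induction on $e_i$, using Proposition~\ref{prop:client-locality} at each primitive node. Client-locality guarantees every node with a federated argument has federated output, so the proposition applies at every step. The resulting map depends only on $Z$ and its shape $n$, not on $c$ or on other clients.
\item In the matrix-product case, set $\phi^{(i)}_n(Z) := a_i^{\mathrm{loc}}(Z)^\top b_i^{\mathrm{loc}}(Z) \in \RR^{m_i\times n_i}$, i.e. the local contraction over the record axis.
\end{itemize}

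\textbf{Decoder.} Take $\psi := h$, interpreted as a function $M \to \RR^{\mathbf t}$. This is well defined because $h$ is shared-only: by induction, every node of $h$ is a shared-only primitive application evaluated by its ordinary interpretation on shared values.

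\textbf{Identity.} It remains to check $\sigma_{\clients}(X) = h(g_1(X),\dots,g_q(X)) = \psi\bigl(\bigoplus_c \phi_{n_c}(X^{(c)})\bigr)$. Since $\oplus_M$ is componentwise, it suffices to show $g_i(X) = \bigoplus^{(i)}_c \phi^{(i)}_{n_c}(X^{(c)})$ for each $i$.
\begin{itemize}
\item In the aggregation case, the induction above gives $e_i(X)^{(c)} = e_i^{\mathrm{loc}}(X^{(c)})$. The merge hypothesis of Definition~\ref{def:one-round-typed-program} then yields $\alpha^{(i)}_{r_i}(e_i(X)) = \bigoplus_c \alpha^{(i)}_{r_i}(e_i(X)^{(c)})$, which is exactly the claim.
\item In the matrix-product case, Proposition~\ref{prop:virtual-global-consistency} gives $g_i(X) = \vglob{a_i(X)}^\top \vglob{b_i(X)}$. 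Writing the virtual global tensors as record-axis concatenations of the local blocks and expanding the product blockwise, this equals $\sum_c {a_i(X)^{(c)}}^\top b_i(X)^{(c)}$, which is the sum of the local encodings.
\end{itemize}
The merged product $\bigoplus_c$ is unambiguous irrespective of order by associativity and commutativity.

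\textbf{Main obstacle.} The main care is the induction turning a client-local expression into a single client-independent map $e^{\mathrm{loc}}$ indexed by $n$. I must verify three things:
\begin{itemize}
\item intermediate local shapes are determined by $n$ and the static types, so the maps of Proposition~\ref{prop:client-locality} can be chosen uniformly in $c$;
\item shared subexpressions inside $e_i$ depend only on shared inputs, i.e. they are closed or constant, since the context contains only $x$;
\item the edge case $n_c = 0$ is handled, where the encoder must still land in $M$ (e.g. giving the zero matrix, and the aggregation of an empty slice as the hypothesis provides).
\end{itemize}
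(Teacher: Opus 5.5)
Your proposal follows the paper's proof essentially step for step: the componentwise product monoid on $M=\RR^{\mathbf s_1}\times\cdots\times\RR^{\mathbf s_q}$, local realizations of the client-local expressions by structural induction via Proposition~\ref{prop:client-locality}, the merge hypothesis in the aggregation case, and $\psi:=h$.

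There is one error to fix, in the matrix-product case: drop the transpose. The expression $a_i(x)$ has type $\Fed_2((m_i))$, so its record axis is the second one. Its local value is therefore $\widehat a_{i,n}(Z)\in\RR^{m_i\times n}$, while $\widehat b_{i,n}(Z)\in\RR^{n\times n_i}$.

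\begin{itemize}
\item The encoder should be $\phi^{(i)}_n(Z)=\widehat a_{i,n}(Z)\,\widehat b_{i,n}(Z)$. As written, your $a^\top b$ is not defined unless $m_i=n$.
\item Likewise, Proposition~\ref{prop:virtual-global-consistency} gives $g_i(X)=\vglob{a_i(X)}\,\vglob{b_i(X)}$, that is, ordinary multiplication in the displayed operand order.
\end{itemize}

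Once this is corrected, your route through virtual-global consistency and blockwise expansion is valid but roundabout. The paper reads $g_i(X)=\sum_{c\in\clients}a_i(X)^{(c)}b_i(X)^{(c)}$ directly off Definition~\ref{def:fed-fed-matmul-semantics}.
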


Theorem~\ref{thm:one-round-shared-state-factorization} is the first expressibility result. It says that every one-round program factors through a fixed-dimensional shared state. The dimension of this state is determined by the finitely many shared-state expressions $g_1,\dots,g_q$.

The converse direction (Theorem~\ref{thm:one-round-realization}) requires the semantic factorization to be realized by the typed language. We use the following notation. If $e(x)$ is a client-local federated expression with input type $\Fed_r(\mathbf d)$, then repeated application of Proposition~\ref{prop:client-locality} gives deterministic local maps $\widehat e_n$ on $\mathcal X_n(r,\mathbf d)$ such that $e(X)^{(c)}=\widehat e_{n_c}(X^{(c)})$ for every federated input $X$ and every client $c$.

\begin{theorem}[One-round realization of shared-state factorizations]
\label{thm:one-round-realization}
Let $\sigma$ be a shared-output computation of input type $\Fed_r(\mathbf d)$ and output shape $\mathbf t$. Suppose that $\sigma$ admits a shared-state factorization in the sense of Definition~\ref{def:shared-state-factorization} with state space $M=\RR^{\mathbf s_1}\times\ldots\times\RR^{\mathbf s_q}$ and with componentwise merge operations $(\RR^{\mathbf s_i},\oplus_i,0_i)$.

Assume that each component encoder is realized in one of the following two forms.
First, the component may be aggregation-realized. In this case, there are a client-local federated expression $e_i(x)$, a type $\Fed_{r_i}(\mathbf u_i)$, and an aggregation schema $\alpha^{(i)} \in \mathcal A$ such that $x:\Fed_r(\mathbf d)\vdash e_i(x):\Fed_{r_i}(\mathbf u_i)$, the output shape of $\alpha^{(i)}_{r_i}(e_i(x))$ is $\mathbf s_i$, and $\phi_{i,n}(Z)=\alpha^{(i)}_{r_i}(\widehat e_{i,n}(Z))$ for every $n \in \mathbb N_0$ and every $Z \in \mathcal X_n(r,\mathbf d)$. Moreover, for every federated tensor $W$ of type $\Fed_{r_i}(\mathbf u_i)$, one has $\alpha^{(i)}_{r_i}(W)=\bigoplus_{c\in\clients}\alpha^{(i)}_{r_i}(W^{(c)})$.
Second, the component may be matrix-product-realized. In this case, there are client-local federated expressions $a_i(x)$ and $b_i(x)$ and positive integers $m_i,n_i$ such that $x:\Fed_r(\mathbf d)\vdash a_i(x):\Fed_2((m_i))$, $x:\Fed_r(\mathbf d)\vdash b_i(x):\Fed_1((n_i))$, $\mathbf s_i=(m_i,n_i)$, $\oplus_i$ is ordinary addition, and $\phi_{i,n}(Z)=\widehat a_{i,n}(Z)\widehat b_{i,n}(Z)$ for every $n \in \mathbb N_0$ and every $Z \in \mathcal X_n(r,\mathbf d)$.

Assume that the decoder $\psi$ is realized by a shared-only expression $y_1:\Sh(\mathbf s_1),\dots,y_q:\Sh(\mathbf s_q) \vdash h(y_1,\dots,y_q):\Sh(\mathbf t)$, meaning that $\psi(z_1,\dots,z_q)=h(z_1,\dots,z_q)$ for all $(z_1,\dots,z_q)\in M$. Then $\sigma$ is induced by a one-round typed program.
\end{theorem}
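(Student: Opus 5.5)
The plan is to build the program directly from the realizing data and then check that it induces $\sigma$. For each $i\in[q]$, set $g_i(x):=\alpha^{(i)}_{r_i}(e_i(x))$ if the $i$-th component is aggregation-realized, and $g_i(x):=\mathsf{MatMul}_{\Fed\Fed}(a_i(x),b_i(x))$ if it is matrix-product-realized. Take the decoder to be the given shared-only expression $h$. The typing conditions of Definition~\ref{def:one-round-typed-program} then hold by the hypotheses of Theorem~\ref{thm:one-round-realization}: the expressions $e_i,a_i,b_i$ are client-local with the required types, and the output shapes $\mathbf s_i$ match. The merge identity required in the aggregation form is assumed verbatim, and in the matrix-product form the merge is ordinary addition. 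So $(g_1,\dots,g_q,h)$ is a one-round typed program, and what remains is to show that its induced computation $h(g_1(X),\dots,g_q(X))$ equals $\sigma_{\clients}(X)$.

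Fix a federation $\clients$ and an input $X$ with local record counts $\{n_c\}$. The core step is the componentwise identity
\[
g_i(X)=\bigoplus_{c\in\clients}{}_{i}\,\phi_{i,n_c}(X^{(c)}).
\]
In the aggregation case, I apply the assumed merge property to $W:=e_i(X)$ to get $g_i(X)=\bigoplus_{c}\alpha^{(i)}_{r_i}(e_i(X)^{(c)})$. By the local maps obtained from Proposition~\ref{prop:client-locality}, $e_i(X)^{(c)}=\widehat e_{i,n_c}(X^{(c)})$, so the $c$-th summand equals $\phi_{i,n_c}(X^{(c)})$.

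In the matrix-product case, I use Proposition~\ref{prop:virtual-global-consistency}: $g_i(X)=\vglob{a_i(X)}\,\vglob{b_i(X)}$. Both virtual global tensors are concatenations along the contracted record axis in client order. Block matrix multiplication therefore gives $\sum_c a_i(X)^{(c)}b_i(X)^{(c)}=\sum_c \widehat a_{i,n_c}(X^{(c)})\widehat b_{i,n_c}(X^{(c)})=\sum_c\phi_{i,n_c}(X^{(c)})$. Clients with $n_c=0$ contribute zero matrices, which is consistent with the empty product.

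Next I pass from components to the product monoid. The merge on $M$ is componentwise, so $\bigoplus_{c}\phi_{n_c}(X^{(c)})$ has $i$-th component $\bigoplus_{c,i}\phi_{i,n_c}(X^{(c)})=g_i(X)$, where $\phi_n=(\phi_{1,n},\dots,\phi_{q,n})$. Commutativity and associativity mean the client order is irrelevant. Combining this with the factorization identity of Definition~\ref{def:shared-state-factorization} and the realization $\psi=h$ on $M$ yields $\sigma_{\clients}(X)=\psi(g_1(X),\dots,g_q(X))=h(g_1(X),\dots,g_q(X))$, as required.

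I expect the main obstacle to be bookkeeping rather than substance. One point is the convention that the componentwise merges $\oplus_i$ assemble into the given $\oplus_M$; I read this as part of the hypothesis ``with componentwise merge operations''. The other is the block decomposition of $\mathsf{MatMul}_{\Fed\Fed}$, including shape conventions for $a_i:\Fed_2((m_i))$ and $b_i:\Fed_1((n_i))$. I would handle the latter by invoking Proposition~\ref{prop:virtual-global-consistency} and the standard identity $[A_1\ \cdots\ A_m][B_1;\dots;B_m]=\sum_j A_jB_j$, rather than unfolding the appendix semantics.
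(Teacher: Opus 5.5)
Your proposal is correct and follows the paper's proof essentially step for step. It uses the same construction of $g_i$ and $h$, the same componentwise identity $g_i(X)=\bigoplus_{c\in\clients}\phi_{i,n_c}(X^{(c)})$ obtained from the assumed merge identity and the local realizations, and then the componentwise assembly in $M$ combined with the factorization identity. The only difference is in the matrix-product case: the paper reads $g_i(X)=\sum_{c\in\clients}a_i(X)^{(c)}b_i(X)^{(c)}$ directly from Definition~\ref{def:fed-fed-matmul-semantics}, so your detour through Proposition~\ref{prop:virtual-global-consistency} and block multiplication is valid but unnecessary.
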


Theorems~\ref{thm:one-round-shared-state-factorization} and~\ref{thm:one-round-realization} identify the one-round class represented by the base language. According to Theorem~\ref{thm:one-round-shared-state-factorization}, the language captures those one-round computations whose shared state is obtained by finitely many typed local encoders, record-axis elimination primitives, and a shared-only decoder. \emph{The converse direction (Theorem~\ref{thm:one-round-realization}) is the representability statement; within the allowed encoder and decoder forms, a semantic shared-state factorization is compatible with the language, as it is realized by a typed program.}

The one-round class is closed under shared-only post-processing. This closure property is important because many statistical summaries, such as means or variances, are obtained by first forming additive or matrix-valued shared states and then applying ordinary tensor algebra to the resulting shared quantities (see Appendix~\ref{app:shared-postprocessing}). Our shared-state factorizations capture distributive and algebraic aggregates presented in \citet{gray1997datacube}, but exclude holistic aggregates because our framework requires all computations to pass through a fixed-dimensional intermediate shared state.

\subsection{Iterative typed programs}
\label{subsec:iterative-typed-programs}

Learning algorithms usually produce a sequence of shared states rather than a single summary. The shared state may be a parameter vector, a collection of optimizer moments, a curvature approximation, or any finite tuple of shared tensors. The same factorization principle applies round by round. Each round uses the current shared state inside client-local expressions, forms a finite shared state by record-axis elimination, and then applies a shared-only update map.

\begin{definition}[Iterative typed program]
\label{def:iterative-typed-program}
Fix an input type $\Fed_r(\mathbf d)$, an integer $T \geq 1$, and shared-state shapes $\boldsymbol{\tau}_0,\dots,\boldsymbol{\tau}_T$. An iterative typed program consists of an initial shared state $\theta_0 \in \RR^{\boldsymbol{\tau}_0}$ and, for each round $t \in \{0,\dots,T-1\}$, shared-state expressions $g_{t,1}(x,\theta),\dots,g_{t,q_t}(x,\theta)$ and a shared-only decoder expression $h_t$, where $q_t \geq 1$, with the following properties.

For each $i \in [q_t]$, the expression $g_{t,i}(x,\theta)$ has one of the two forms allowed in Definition~\ref{def:one-round-typed-program}, with the typing context enlarged by the current shared state $\theta:\Sh(\boldsymbol{\tau}_t)$.
In the aggregation form, there are a client-local federated expression $e_{t,i}(x,\theta)$, a type $\Fed_{r_{t,i}}(\mathbf u_{t,i})$, and an aggregation schema $\alpha^{(t,i)} \in \mathcal A$ such that $x:\Fed_r(\mathbf d),\theta:\Sh(\boldsymbol{\tau}_t) \vdash e_{t,i}(x,\theta):\Fed_{r_{t,i}}(\mathbf u_{t,i})$ and $g_{t,i}(x,\theta)=\alpha^{(t,i)}_{r_{t,i}}(e_{t,i}(x,\theta))$. The output shape of $g_{t,i}$ is denoted by $\mathbf s_{t,i}$. There is a commutative monoid $(\RR^{\mathbf s_{t,i}},\oplus_{t,i},0_{t,i})$ such that, for every federated tensor $Z$ of type $\Fed_{r_{t,i}}(\mathbf u_{t,i})$, one has $ \alpha^{(t,i)}_{r_{t,i}}(Z) = \bigoplus_{c \in \clients} \alpha^{(t,i)}_{r_{t,i}}(Z^{(c)})$.
In the matrix-product form, there are client-local federated expressions $a_{t,i}(x,\theta)$ and $b_{t,i}(x,\theta)$ and positive integers $m_{t,i},n_{t,i}$ such that $x:\Fed_r(\mathbf d),\theta:\Sh(\boldsymbol{\tau}_t) \vdash a_{t,i}(x,\theta):\Fed_2((m_{t,i}))$ and $x:\Fed_r(\mathbf d),\theta:\Sh(\boldsymbol{\tau}_t) \vdash b_{t,i}(x,\theta):\Fed_1((n_{t,i}))$. In this case, $g_{t,i}(x,\theta)=\mathsf{MatMul}_{\Fed\Fed}(a_{t,i}(x,\theta),b_{t,i}(x,\theta))$, $\mathbf s_{t,i}=(m_{t,i},n_{t,i})$, and the associated merge operation on $\RR^{\mathbf s_{t,i}}$ is ordinary addition.

The round-$t$ decoder is a shared-only expression $y_1:\Sh(\mathbf s_{t,1}),\dots,y_{q_t}:\Sh(\mathbf s_{t,q_t}),\theta:\Sh(\boldsymbol{\tau}_t) \vdash h_t(y_1,\dots,y_{q_t},\theta):\Sh(\boldsymbol{\tau}_{t+1})$.

For a finite ordered federation $\clients$ and a federated input $X$ of type $\Fed_r(\mathbf d)$, the program induces shared iterates by $\theta_{t+1} = h_t(g_{t,1}(X,\theta_t),\dots,g_{t,q_t}(X,\theta_t),\theta_t)$ for $t=0,\dots,T-1$. The induced shared-output computation is $\sigma_{\clients}(X):=\theta_T$.
\end{definition}

Definition~\ref{def:iterative-typed-program} does not introduce private state that persists across rounds. The only state carried from one round to the next is the shared state $\theta_t$. This matches algorithms whose communication pattern is server-side state updated from client-local tensor computations and shared aggregation.

\begin{theorem}[Iterative shared-state factorization]
\label{thm:iterative-shared-state-factorization}
Let $\sigma$ be induced by an iterative typed program of input type $\Fed_r(\mathbf d)$ and shared-state shapes $\boldsymbol{\tau}_0,\dots,\boldsymbol{\tau}_T$. For each round $t \in \{0,\dots,T-1\}$, define the round update $U_{t,\clients}(X,\theta) := h_t(g_{t,1}(X,\theta),\dots,g_{t,q_t}(X,\theta),\theta)$. Then, for each $t \in \{0,\dots,T-1\}$, there exist a finite product of tensor spaces $M_t$, a commutative monoid operation $\oplus_t$ on $M_t$, an identity element $0_t \in M_t$, local encoder maps $\phi_{t,n}(\cdot;\theta):\mathcal X_n(r,\mathbf d)\to M_t$ for every $n \in \mathbb N_0$ and every $\theta \in \RR^{\boldsymbol{\tau}_t}$, and a decoder map $\psi_t:M_t \times \RR^{\boldsymbol{\tau}_t}\to \RR^{\boldsymbol{\tau}_{t+1}}$ such that
$U_{t,\clients}(X,\theta)=
\psi_t\!\left(\bigoplus_{c\in\clients}\phi_{t,n_c}(X^{(c)};\theta),\theta\right)$ for every finite ordered federation $\clients$, every federated tensor $X=\{X^{(c)}\}_{c\in\clients}$ of type $\Fed_r(\mathbf d)$ with local record counts $\{n_c\}_{c\in\clients}$, and every shared state $\theta \in \RR^{\boldsymbol{\tau}_t}$.
\end{theorem}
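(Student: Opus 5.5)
The plan is to reduce Theorem~\ref{thm:iterative-shared-state-factorization} to the one-round argument behind Theorem~\ref{thm:one-round-shared-state-factorization}, carried out with the shared state $\theta$ held fixed as a parameter. Fix a round $t$ and set $M_t := \RR^{\mathbf s_{t,1}}\times\cdots\times\RR^{\mathbf s_{t,q_t}}$. The merge $\oplus_t$ acts componentwise by $\oplus_{t,i}$, and the identity is $0_t := (0_{t,1},\dots,0_{t,q_t})$. For matrix-product components, $\oplus_{t,i}$ is ordinary addition and $0_{t,i}$ is the zero matrix. A finite product of commutative monoids is again a commutative monoid, since associativity, commutativity and the identity law all hold componentwise, so $(M_t,\oplus_t,0_t)$ is a commutative monoid. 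Neither $M_t$ nor $\oplus_t$ depends on $\theta$, on $\clients$, or on the record counts $n_c$.

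For the encoders, fix $\theta\in\RR^{\boldsymbol\tau_t}$. Then $\theta$ is a shared input, and each $e_{t,i}(x,\theta)$, $a_{t,i}(x,\theta)$, $b_{t,i}(x,\theta)$ is a client-local federated expression. Applying Proposition~\ref{prop:client-locality} by structural induction over its primitive subexpressions yields deterministic local maps $\widehat e_{t,i,n}(\cdot;\theta)$ with $e_{t,i}(X,\theta)^{(c)}=\widehat e_{t,i,n_c}(X^{(c)};\theta)$. These maps depend only on the local shape and on the shared value $\theta$, not on $c$ or on other clients' data. The induction step works because every primitive subexpression of a client-local expression that has a federated argument has federated output. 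Subexpressions involving only $\theta$ are shared-only, so every client evaluates them to the same shared value.

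I then define the component encoders. For the aggregation form, set $\phi_{t,i,n}(Z;\theta):=\alpha^{(t,i)}_{r_{t,i}}(\widehat e_{t,i,n}(Z;\theta))$. For the matrix-product form, set $\phi_{t,i,n}(Z;\theta):=\widehat a_{t,i,n}(Z;\theta)\,\widehat b_{t,i,n}(Z;\theta)$. Let $\phi_{t,n}$ be the tuple of these components.

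Next I verify that merging the encoders recovers $g_{t,i}$. In the aggregation case, the merge hypothesis of Definition~\ref{def:iterative-typed-program}, applied to $Z=e_{t,i}(X,\theta)$, gives $g_{t,i}(X,\theta)=\bigoplus_c \phi_{t,i,n_c}(X^{(c)};\theta)$ directly. In the matrix-product case, Proposition~\ref{prop:virtual-global-consistency} gives $g_{t,i}=\vglob{a}\,\vglob{b}$. Because $\vglob a$ and $\vglob b$ are concatenations along the contracted record axis, in the same client order, block matrix multiplication gives $\vglob a\,\vglob b=\sum_c a^{(c)}b^{(c)}$. An empty block ($n_c=0$) contributes the zero matrix, which is consistent with the identity $0_{t,i}$.

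Finally, define the decoder $\psi_t(y,\theta):=h_t(y_1,\dots,y_{q_t},\theta)$, interpreted through the semantics of shared-only expressions. Substituting the merged encoders into $h_t$ gives $U_{t,\clients}(X,\theta)=\psi_t(\bigoplus_c\phi_{t,n_c}(X^{(c)};\theta),\theta)$ for every $\clients$, $X$ and $\theta$.

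I expect the main obstacle to be the parameterized client-locality step. Proposition~\ref{prop:client-locality} is stated for a single primitive with given shared inputs. I need its iterated form to produce local maps that are uniform across clients and are genuinely functions of $(Z,\theta)$ alone. Two points require care: that $\theta$, being shared, has the same value at all clients, and that shared-only subterms inside a client-local expression evaluate identically everywhere. I would handle both in one structural induction on client-local expressions, with $\theta$ in the context and shared subterms treated as constants. This is the same lemma already used for the notation $\widehat e_n$ before Theorem~\ref{thm:one-round-realization}, now carried out with one extra shared variable in the context.
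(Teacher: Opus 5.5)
Your proposal is correct and follows the paper's proof essentially step for step. Both use the componentwise product monoid $M_t$, the $\theta$-parameterized local realizations obtained by structural induction from Proposition~\ref{prop:client-locality}, the encoders $\alpha^{(t,i)}_{r_{t,i}}(\widehat e_{t,i,n}(Z;\theta))$ and $\widehat a_{t,i,n}(Z;\theta)\,\widehat b_{t,i,n}(Z;\theta)$, and the decoder $\psi_t(z,\theta)=h_t(z,\theta)$. The only difference is in the matrix-product case: you pass through Proposition~\ref{prop:virtual-global-consistency} and block multiplication, whereas the paper reads $g_{t,i}(X,\theta)=\sum_{c}a_{t,i}(X,\theta)^{(c)}b_{t,i}(X,\theta)^{(c)}$ directly off Definition~\ref{def:fed-fed-matmul-semantics}, so your detour is harmless but unnecessary.
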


Theorem~\ref{thm:iterative-shared-state-factorization} is the round-wise analogue of Theorem~\ref{thm:one-round-shared-state-factorization}. It shows that iterative typed programs do not introduce a new communication principle. Each round is still an encode, merge, and decode computation through fixed-dimensional shared state.

The base iterative class carries information across rounds only through shared tensors. A client may compute client-local tensor expressions within a round, but the only state that persists to later rounds is the shared iterate. Algorithms with persistent private client variables therefore require an extension of the type system, such as a separate local-state sort. A formal statement is given in Appendix~\ref{app:round-wise-shared-memory}.

\section{Differentiable fragment and learning updates}
\label{sec:differentiable-learning}

The factorization results of Section~\ref{sec:shared-state-factorizations} apply to any typed program in the base language. We now specialize them to learning updates. The basic case is an empirical objective whose per-record loss is computed locally and whose gradient is obtained by summing per-record gradients across clients. This gives a direct route from the typed language to first-order federated optimization. The proofs of the theoretical statements of this section are given in Appendix~\ref{app:learning-update-proofs}.

Throughout this section, assume that $\mathcal A$ contains ordinary summation, denoted by $\operatorname{Sum}$. If $\boldsymbol{\tau}$ is a shape tuple, we identify $\RR^{\boldsymbol{\tau}}$ with a Euclidean space using the standard coordinate inner product. For a differentiable map $f:\RR^{\boldsymbol{\tau}}\to\RR$, we write $\nabla_\theta f(\theta) \in \RR^{\boldsymbol{\tau}}$ for its gradient.

\subsection{Representably differentiable losses and gradient expressibility}
\label{subsec:representably-differentiable-losses}

Let $\Fed_r(\mathbf d)$ and $\Sh(\boldsymbol{\tau})$ be an input type and a shared parameter type, respectively. A per-record scalar loss is represented in the language by a client-local federated expression $x:\Fed_r(\mathbf d),\theta:\Sh(\boldsymbol{\tau}) \vdash \ell(x,\theta):\Fed_1(\varnothing)$. Thus, for a client with $n$ records, the local value of $\ell$ is a vector in $\RR^n$. A per-record parameter-gradient expression has type $x:\Fed_r(\mathbf d),\theta:\Sh(\boldsymbol{\tau}) \vdash \gamma(x,\theta):\Fed_1(\boldsymbol{\tau})$. For a client with $n$ records, the local value of $\gamma$ is a tensor in $\RR^{n \times \boldsymbol{\tau}}$, with one gradient tensor of shape $\boldsymbol{\tau}$ for each record.

\begin{definition}[Representably differentiable loss]
\label{def:representably-differentiable-loss}
A pair of client-local federated expressions $(\ell,\gamma)$ with the typings above is called a representably differentiable loss if the following condition holds. For every $n \in \mathbb N_0$, every local input $Z \in \mathcal X_n(r,\mathbf d)$, and every $\theta \in \RR^{\boldsymbol{\tau}}$, let $\widehat \ell_n(Z;\theta) \in \RR^n$ and $\widehat \gamma_n(Z;\theta) \in \RR^{n \times \boldsymbol{\tau}}$ denote the local realizations of $\ell$ and $\gamma$. Then, if $n \geq 1$, for each record index $a \in [n]$,the map $\theta \mapsto \widehat \ell_n(Z;\theta)[a]$ is differentiable with
$\widehat \gamma_n(Z;\theta)[a,\cdot]=\nabla_\theta \widehat \ell_n(Z;\theta)[a]$.
\end{definition}

Definition~\ref{def:representably-differentiable-loss} separates two issues. The expression $\ell$ computes the per-record loss, while $\gamma$ represents the corresponding per-record gradient as an expression in the typed language. For common smooth tensor programs, $\gamma$ is obtained by applying the usual differentiation rules to $\ell$. The theorem below only needs the existence of such an expression.

Let $(\ell,\gamma)$ be a representably differentiable loss of input type $\Fed_r(\mathbf d)$ and parameter type $\Sh(\boldsymbol{\tau})$. For a finite ordered federation $\clients$, a federated input $X=\{X^{(c)}\}_{c\in\clients}$, and a shared parameter 
 $\theta \in \RR^{\boldsymbol{\tau}}$, define the empirical objective 
$L_{\clients}(X,\theta):=\sum_{c\in\clients}\sum_{a=1}^{n_c}
\widehat \ell_{n_c}(X^{(c)};\theta)[a]$ and
$G_{\clients}(X,\theta):=\sum_{c\in\clients}\sum_{a=1}^{n_c}
\widehat \gamma_{n_c}(X^{(c)};\theta)[a,\cdot]\in\RR^{\boldsymbol{\tau}}$,
where $n_c$ is the local record count.
\begin{theorem}[Gradient expressibility]
\label{thm:gradient-expressibility}
Let $(\ell,\gamma)$ be a representably differentiable loss. For every finite ordered federation $\clients$ and every federated input $X$ of type $\Fed_r(\mathbf d)$, the map $\theta \mapsto L_{\clients}(X,\theta)$ is differentiable and $\nabla_\theta L_{\clients}(X,\theta) = G_{\clients}(X,\theta)$. Moreover, $G_{\clients}(X,\theta)$ is represented by the shared-state expression $g(x,\theta) := \operatorname{Sum}_1(\gamma(x,\theta))$, where the summation is along the record axis of $\gamma(x,\theta)$. Thus, $x:\Fed_r(\mathbf d),\theta:\Sh(\boldsymbol{\tau}) \vdash g(x,\theta):\Sh(\boldsymbol{\tau})$, and the gradient computation is a round-wise shared-state computation with ordinary addition as its merge operation.
\end{theorem}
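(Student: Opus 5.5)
The proof splits into an analytic part and a representational part, and I would handle them in that order.

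\emph{Analytic part.} Fix a finite ordered federation $\clients$ and a federated input $X$ with local record counts $\{n_c\}_{c\in\clients}$. Then $L_{\clients}(X,\theta)$ is a finite sum of the scalar maps $\theta\mapsto\widehat\ell_{n_c}(X^{(c)};\theta)[a]$, over $c\in\clients$ and $a\in[n_c]$. Clients with $n_c=0$ contribute no terms, so Definition~\ref{def:representably-differentiable-loss} is only needed for $n_c\ge 1$. By that definition, each summand is differentiable with gradient $\widehat\gamma_{n_c}(X^{(c)};\theta)[a,\cdot]$. Linearity of the derivative over a finite sum then shows that $L_{\clients}(X,\cdot)$ is differentiable with gradient $G_{\clients}(X,\theta)$. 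The identification of $\RR^{\boldsymbol\tau}$ with a Euclidean space via the coordinate inner product makes "gradient" unambiguous here.

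\emph{Representational part.} First I check typing. The expression $\gamma(x,\theta)$ has type $\Fed_1(\boldsymbol\tau)$, and $\operatorname{Sum}\in\mathcal A$, so the record-axis aggregation rule of Appendix~\ref{app:language-spec} (the row "Aggregation $\alpha_r$ along record axis" in Table~\ref{tab:primitive-families}) types $\operatorname{Sum}_1(\gamma(x,\theta))$ as $\Sh(\boldsymbol\tau)$.

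Next I verify the value. By Proposition~\ref{prop:virtual-global-consistency}, the shared output equals $\operatorname{Sum}_1^{\operatorname{ord}}$ applied to $\vglob{\gamma(X,\theta)}$. The client-local realization notation $\widehat\gamma_{n_c}$ (repeated Proposition~\ref{prop:client-locality}) shows that this virtual global tensor is the concatenation along axis $1$ of the blocks $\widehat\gamma_{n_c}(X^{(c)};\theta)$. Summing a concatenation along the concatenation axis gives the sum over blocks of the per-block sums, and this is exactly $G_{\clients}(X,\theta)$. The same identity gives the merge property $\operatorname{Sum}_1(Z)=\sum_{c}\operatorname{Sum}_1(Z^{(c)})$ for every $Z$ of type $\Fed_1(\boldsymbol\tau)$, with $(\RR^{\boldsymbol\tau},+,0)$ as the commutative monoid. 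Empty clients contribute $0$, which is consistent with the identity element.

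Therefore $g$ is an aggregation-form shared-state expression in the sense of Definition~\ref{def:iterative-typed-program}, with context enlarged by $\theta$. Theorem~\ref{thm:iterative-shared-state-factorization} then applies, or it can be read off directly with encoder $\phi_{n}(Z;\theta)=\operatorname{Sum}_1(\widehat\gamma_n(Z;\theta))$ and the decoder equal to the identity. This establishes the round-wise shared-state claim.

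The main obstacle is bookkeeping rather than analysis. I must confirm that the appendix's broadcast and shape conventions treat $\Fed_1(\boldsymbol\tau)$ with the record axis first, so that $\operatorname{Sum}_1$ really is the record-axis aggregation. I must also handle the $n_c=0$ edge case, where $\RR^{0\times\boldsymbol\tau}$ sums to zero. I would settle both by appealing to the semantic clause for $\alpha_r$ with $\alpha=\operatorname{Sum}$.
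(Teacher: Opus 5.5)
Your proposal is correct and follows essentially the same route as the paper. Like the paper, you differentiate the finite sum term by term using Definition~\ref{def:representably-differentiable-loss}, then type $\operatorname{Sum}_1(\gamma(x,\theta))$ by the record-axis aggregation rule and evaluate it through the virtual global tensor and the local realizations $\widehat\gamma_{n_c}$. Your version is slightly more detailed: you explicitly check the client-wise merge identity (including empty clients) and appeal to Theorem~\ref{thm:iterative-shared-state-factorization}, whereas the paper cites Definition~\ref{def:aggregation-semantics} directly and simply asserts that the merge is ordinary addition.
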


The theorem gives the basic learning interface of the language. Once the per-record gradient is expressible as a client-local federated tensor, the global gradient is obtained by a single record-axis summation. No additional federated primitive is needed.

\subsection{First-order server-side updates}
\label{subsec:first-order-updates}

The gradient expression of Theorem~\ref{thm:gradient-expressibility} can be used inside any shared-only server update. This includes gradient descent as the simplest case, and optimizers whose auxiliary variables are shared tensors.

\begin{corollary}[Server-side first-order updates]
\label{cor:server-side-first-order-updates}
Let $(\ell,\gamma)$ be a representably differentiable loss with parameter type $\Sh(\boldsymbol{\tau})$. Fix $T \geq 1$, shared-state shapes $\boldsymbol{\zeta}_0,\dots,\boldsymbol{\zeta}_T$, and an initial shared optimizer state $s_0 \in \RR^{\boldsymbol{\zeta}_0}$. For each round $t \in \{0,\dots,T-1\}$, assume that there are shared-only expressions $s:\Sh(\boldsymbol{\zeta}_t) \vdash P_t(s):\Sh(\boldsymbol{\tau})$ and $g:\Sh(\boldsymbol{\tau}),s:\Sh(\boldsymbol{\zeta}_t) \vdash H_t(g,s):\Sh(\boldsymbol{\zeta}_{t+1})$. For a finite ordered federation $\clients$ and a federated input $X$ of type $\Fed_r(\mathbf d)$, define the iterates by $s_{t+1} = H_t(G_{\clients}(X,P_t(s_t)),s_t)$, $t=0,\dots,T-1$. Then the $T$-step update $X \mapsto s_T$ is induced by an iterative typed program. Hence each round admits a shared-state factorization in the sense of Theorem~\ref{thm:iterative-shared-state-factorization}.
\end{corollary}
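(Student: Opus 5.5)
We build an explicit iterative typed program in the sense of Definition~\ref{def:iterative-typed-program}, with the optimizer state $s_t$ as the shared iterate. Take $\boldsymbol{\tau}_t := \boldsymbol{\zeta}_t$ and $\theta_0 := s_0$. Each round $t$ uses a single shared-state expression ($q_t = 1$) of aggregation form:
\[
g_{t,1}(x,\theta) := \operatorname{Sum}_1\bigl(\gamma(x,P_t(\theta))\bigr).
\]
The round decoder is $h_t(y_1,\theta) := H_t(y_1,\theta)$. Then an induction on $t$ shows that the induced iterates $\theta_t$ coincide with $s_t$. The induction step combines the recursion in the corollary with the identity $g_{t,1}(X,\theta) = G_{\clients}(X,P_t(\theta))$.

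\textbf{Step 1: well-typedness of the encoder.} We must show that $e_{t,1}(x,\theta) := \gamma(x,P_t(\theta))$ is a client-local federated expression under the context $x:\Fed_r(\mathbf d),\theta:\Sh(\boldsymbol{\zeta}_t)$ and has type $\Fed_1(\boldsymbol{\tau})$. The expression $\gamma$ is client-local and typed in the context $x:\Fed_r(\mathbf d),\theta:\Sh(\boldsymbol{\tau})$. The expression $P_t$ is shared-only, with $s:\Sh(\boldsymbol{\zeta}_t) \vdash P_t(s):\Sh(\boldsymbol{\tau})$.

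Substituting the shared expression $P_t(\theta)$ for the shared variable of $\gamma$ preserves typing; this is a routine substitution lemma proved by induction on the typing derivation. It also preserves client-locality. The reason is that every primitive subexpression of the result either lies inside $P_t(\theta)$, whose arguments are all shared, or is the image of a subexpression of $\gamma$. In the latter case it keeps its original type, so any node with a federated argument still has federated type.

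This step is the main obstacle. The paper does not state a substitution lemma explicitly, so I would prove it from the typing rules in Appendix~\ref{app:language-spec}. The key point is that the typing rules depend only on the types of the arguments, not on their syntactic form.

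\textbf{Step 2: the merge condition.} The schema must satisfy $\operatorname{Sum}_1(Z)=\sum_{c}\operatorname{Sum}_1(Z^{(c)})$ for every $Z$ of type $\Fed_1(\boldsymbol{\tau})$. This follows from Proposition~\ref{prop:virtual-global-consistency}. The virtual global tensor $\vglob{Z}$ concatenates the client blocks along axis $1$, and summing along that axis splits into the blockwise sums. Hence $(\RR^{\boldsymbol{\tau}},+,0)$ is the required commutative monoid.

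By Theorem~\ref{thm:gradient-expressibility}, applied with parameter $P_t(\theta)$, together with the semantics of substitution (the local realization of $\gamma(x,P_t(\theta))$ is $\widehat\gamma_{n}(\cdot;P_t(\theta))$), we get $g_{t,1}(X,\theta)=G_{\clients}(X,P_t(\theta))$.

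\textbf{Step 3: the decoder.} The decoder $h_t$ is shared-only, since $H_t$ is shared-only and its free variables $y_1:\Sh(\boldsymbol{\tau})$ and $\theta:\Sh(\boldsymbol{\zeta}_t)$ are both shared. Hence $\theta_{t+1}=H_t(G_{\clients}(X,P_t(\theta_t)),\theta_t)$. Combined with $\theta_0 = s_0$, induction gives $\theta_T=s_T$, so the map $X\mapsto s_T$ is induced by the program.

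The final claim, that each round admits a shared-state factorization, then follows directly from Theorem~\ref{thm:iterative-shared-state-factorization}.
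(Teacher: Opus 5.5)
Your proposal is correct and follows the paper's proof essentially step for step. The paper builds the same program, with one aggregation-form expression $\operatorname{Sum}_1(\gamma(x,P_t(s)))$ per round and decoder $H_t$, and identifies the update via Theorem~\ref{thm:gradient-expressibility}; your explicit substitution lemma and your check of the additive merge identity for $\operatorname{Sum}_1$ fill in details that the paper simply asserts.
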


For gradient descent, the optimizer state is the parameter itself, $P_t$ is the identity, and $H_t(g,\theta)=\theta-\eta_t g$. Server-side momentum and Adam are covered whenever the optimizer state is shared and the parameter extraction and update maps are expressible as shared-only tensor expressions. In those cases, the shared optimizer state contains the parameter and the required moment tensors, and the only federated quantity used by the optimizer is the aggregated gradient.

\subsection{Shared linear algebra and second-order updates}
\label{subsec:second-order-updates}

First-order methods require only an aggregated gradient. Second-order and quasi-second-order methods require additional shared tensors, such as Hessian blocks, Gauss--Newton blocks, or normal-equation matrices. These methods remain within the same framework when the curvature blocks are expressible as record-axis sums of client-local tensor expressions and the linear algebra is performed on shared tensors.

In this subsection, the language may be enlarged by shared-only linear-algebra primitives, such as matrix multiplication, linear-system solve, Cholesky factorization, or log-determinant, on their natural domains. Since these primitives act only on shared tensors, they do not change the client-locality or exposure properties of the federated part of the language.

\begin{corollary}[Curvature-block updates]
\label{cor:curvature-block-updates}
Fix a parameter dimension $p \geq 1$. Let $(\ell,\gamma)$ be a representably differentiable loss with parameter type $\Sh((p))$. Let $x:\Fed_r(\mathbf d),\theta:\Sh((p)) \vdash B(x,\theta):\Fed_1((p,p))$ be a client-local federated expression. For a finite ordered federation $\clients$, a federated input $X$, and a shared parameter $\theta \in \RR^p$, define $C_{\clients}(X,\theta) := \sum_{c\in\clients}\sum_{a=1}^{n_c} \widehat B_{n_c}(X^{(c)};\theta)[a,\cdot,\cdot] \in \RR^{p\times p}$, where $\widehat B_n$ denotes the local realization of $B$.

Fix $T \geq 1$, shared-state shapes $\boldsymbol{\zeta}_0,\dots,\boldsymbol{\zeta}_T$, and an initial shared optimizer state $s_0 \in \RR^{\boldsymbol{\zeta}_0}$. For each round $t \in \{0,\dots,T-1\}$, assume that there are shared-only expressions $s:\Sh(\boldsymbol{\zeta}_t) \vdash P_t(s):\Sh((p))$ and $g:\Sh((p)),C:\Sh((p,p)),s:\Sh(\boldsymbol{\zeta}_t) \vdash H_t(g,C,s):\Sh(\boldsymbol{\zeta}_{t+1})$. Define the iterates by $s_{t+1} = H_t(G_{\clients}(X,P_t(s_t)),C_{\clients}(X,P_t(s_t)),s_t),~t=0,\dots,T-1$. Then the $T$-step update $X \mapsto s_T$ is induced by an iterative typed program in the shared-linear-algebra extension of the language. Hence each round admits a shared-state factorization in the sense of Theorem~\ref{thm:iterative-shared-state-factorization}.
\end{corollary}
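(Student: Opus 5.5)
The plan is to construct an explicit iterative typed program in the sense of Definition~\ref{def:iterative-typed-program}, with state shapes $\boldsymbol{\zeta}_t$ and initial state $s_0$, and with two shared-state expressions per round. Once the program is built, the factorization claim follows directly from Theorem~\ref{thm:iterative-shared-state-factorization}. This is the same route as for Corollary~\ref{cor:server-side-first-order-updates}, with one extra aggregated block.

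\textbf{Building round $t$.} In context $x:\Fed_r(\mathbf d), s:\Sh(\boldsymbol{\zeta}_t)$, substitute the shared-only expression $P_t(s)$ for $\theta$ in $\gamma$ and $B$. Since $P_t(s)$ is shared and $\gamma,B$ are client-local, the results $\gamma(x,P_t(s)):\Fed_1((p))$ and $B(x,P_t(s)):\Fed_1((p,p))$ are again client-local. The reason is that the only primitives receiving federated arguments are those already present in $\gamma$ and $B$; the primitives inside $P_t$ receive only shared arguments.

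The round then uses two shared-state expressions, both of aggregation form:
\[
g_{t,1} = \operatorname{Sum}_1(\gamma(x,P_t(s))), \qquad g_{t,2} = \operatorname{Sum}_1(B(x,P_t(s))).
\]
For both, the merge is ordinary addition on $\RR^{(p)}$ and $\RR^{(p,p)}$. The required merge identity $\operatorname{Sum}_1(Z)=\sum_c \operatorname{Sum}_1(Z^{(c)})$ follows from virtual-global consistency (Proposition~\ref{prop:virtual-global-consistency}) and associativity of summation over the concatenated record axis.

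The decoder is $h_t(y_1,y_2,s) := H_t(y_1,y_2,s)$. This is shared-only in the shared-linear-algebra extension, because $H_t$ is.

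\textbf{Semantic check.} By the argument of Theorem~\ref{thm:gradient-expressibility}:
\begin{itemize}
\item $g_{t,1}$ evaluated at $(X,s_t)$ equals $G_{\clients}(X,P_t(s_t))$;
\item likewise, $g_{t,2}$ evaluates to $\sum_c\sum_a \widehat B_{n_c}(X^{(c)};P_t(s_t))[a,\cdot,\cdot] = C_{\clients}(X,P_t(s_t))$.
\end{itemize}
The evaluation uses the local realization $\widehat B$ and the fact that substituting the shared value $P_t(s_t)$ commutes with local evaluation. Induction on $t$ then shows that the program's iterates coincide with the $s_t$ in the statement, so the program induces $X\mapsto s_T$.

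\textbf{Main obstacle.} The substantive point is that Theorem~\ref{thm:iterative-shared-state-factorization} and the semantic propositions were stated for $\Sigma_0$, while the decoder now uses linear-algebra primitives. I would argue that those proofs are unaffected, because the new primitives have only shared operands. Concretely:
\begin{itemize}
\item client-locality and exposure discipline are statements about primitive applications with at least one federated input, and no new primitive is of that kind;
\item the factorization proof treats the decoder as an arbitrary shared-only map $\psi_t$.
\end{itemize}
A minor subtlety is partiality: solve and Cholesky are defined only on their natural domains. I would handle this by assuming, as the statement implicitly does, that the iterates stay in those domains, so that $H_t$ is defined wherever it is evaluated.
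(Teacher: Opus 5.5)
Your proposal is correct and follows the paper's own route. The paper also builds each round from the two aggregation-form expressions $\operatorname{Sum}_1(\gamma(x,P_t(s)))$ and $\operatorname{Sum}_1(B(x,P_t(s)))$ with decoder $H_t$, identifies their values with $G_{\clients}$ and $C_{\clients}$, and invokes Theorem~\ref{thm:iterative-shared-state-factorization} because the added linear-algebra primitives are shared-only. Your explicit check of the additive merge identity and your handling of the partiality of $\operatorname{Solve}$ are details the paper leaves implicit, but they do not change the argument.
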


Newton, damped Newton, Gauss--Newton, and IRLS-type updates are instances of Corollary~\ref{cor:curvature-block-updates} when their curvature or normal-equation blocks are expressible by $B$. For example, a damped Newton step has shared-only decoder $H_t(g,C,\theta)=\theta-\eta_t\operatorname{Solve}(C+\lambda_t I,g)$ on the domain where $C+\lambda_t I$ is nonsingular. Appendix~\ref{app:learning-examples} gives two learning examples.

The base signature is intentionally small, but applications may add conservative primitives that are either client-local or shared-only, such as fixed projections, reshaping, model-specific differentiable maps, shared matrix multiplication, or shared linear solves. The factorization results remain valid for such extensions as long as they introduce no new way to turn federated inputs into shared outputs. A formal statement is given as Proposition~\ref{prop:conservative-signature-extensions} in Appendix~\ref{app:conservative-extensions}.

\section{Discussion}
\label{sec:discussion}

The base language isolates federated computations whose cross-client communication passes through fixed-dimensional shared state. This is also its boundary: persistent private client state, shared state whose dimension grows with the total number of records, and privacy guarantees beyond exposure discipline require additional type sorts, primitives, or execution mechanisms. Appendix~\ref{app:privacy-integration} gives a privacy-lifting result for composing local or central differential privacy mechanisms with the shared-state factorization interface.

The learning results are expressibility results, not new optimizer-convergence theorems. When a represented federated update agrees with the corresponding centralized update on the virtual global tensor, any applicable centralized convergence theorem transfers under its own assumptions. The theory does not analyze statistical heterogeneity, client sampling, compression, stochastic approximation error, or optimizer-specific rates. Future work includes richer federated tensor languages and execution/backend optimizations for distributed tensor programs.

\bibliographystyle{plainnat}



\appendix

\section{Related work}
\label{app:related-work}

This appendix gives an extended discussion of related work. The main paper focuses on the typed tensor language and its factorization theorems. Here we place the contribution in relation to federated learning and analytics, federated systems, declarative machine learning systems, tensor languages, differentiable distributed programming, and privacy mechanisms.

\subsection{Federated learning, analytics, and computation}
\label{app:rw-fl-fa}

Federated learning provides the main context for this work. It studies model training from decentralized data while keeping raw records on client devices or within data-holding institutions~\citep{mcmahan2017communication,kairouz2021advances}. The standard federated learning pattern is that clients compute local updates and a coordinating server aggregates those updates into a shared model. This pattern led to a large literature on optimization, systems, privacy, heterogeneity, and deployment~\citep{bonawitz2019towards,li2020federated}. Algorithms such as FedAvg~\citep{mcmahan2017communication}, FedProx~\citep{li2020federated}, and FedBN~\citep{li2021fedbn} are important examples of this model-training view.

Federated analytics studies aggregate analysis over decentralized data without centralizing raw records~\citep{wang2022federatedanalytics}. Its goals overlap with federated learning but are not identical. Analytics tasks often compute summaries, histograms, distributional quantities, or other population-level statistics rather than model updates. Recent work on federated data distribution shift estimation gives one example of analytics-oriented federated computation beyond model training~\citep{cormode2025federatedshift}. More broadly, federated computation can be viewed as a larger class of distributed computations in which data stay local and only summary information is shared~\citep{bharadwaj2022federatedtutorial}.

Our work is closest to this broad federated computation view. The difference is that we do not begin from a particular deployed system, query interface, or optimizer. We define a typed tensor language and prove shared-state factorization results for the computations it expresses. This clarifies the distinction between client-local tensor computation, record-axis elimination into shared state, and shared-only post-processing.

Federated technologies are also motivated by privacy, governance, and institutional data-sharing constraints. These concerns arise in settings such as healthcare, finance, transportation, wireless networks, and the Internet of Things~\citep{rieke2020future,long2020federated,pandya2023federated,niknam2020federated,nguyen2021federated}. Legal and ethical discussions of data use, including GDPR and CCPA-oriented treatments, provide part of the broader context~\citep{floridi2016data,zuboff2023age,voigt2017eu,bonta2022california}. Our paper does not claim to solve legal compliance or privacy by itself. It gives an exposure discipline at the level of typed tensor semantics. Cryptographic, differential-privacy, or trusted-execution mechanisms remain separate layers.

\subsection{Federated systems and TensorFlow Federated}
\label{app:rw-federated-systems}

Many federated learning systems provide APIs for building and deploying federated algorithms. TensorFlow Federated includes a Federated Core programming environment for distributed computations and a higher-level federated learning layer~\citep{tensorflowfederated2021,tff2019}. Other systems, including OpenFL, Flower, PySyft, NVIDIA FLARE, and Sequoia, provide different abstractions for orchestration, client execution, secure computation, research experimentation, or distributed privacy-preserving workflows~\citep{foley2022openfl,beutel2020flower,ziller2021pysyft,roth2022nvidia,roth2025empowering,xu2025sequoia}.

TensorFlow Federated (TFF) is the closest systems comparison because its Federated Core also provides a typed representation of federated computations. Our work is complementary rather than competing (see Table~\ref{tab:tff-comparison} for a comparison). TFF is an implementation framework for writing and executing federated computations. Our paper abstracts away runtime execution and gives tensor-level semantics that tracks the record axis, distinguishes federated tensors from shared tensors, and proves shared-state factorization theorems.

\begin{table}[t]
\centering
\caption{Comparison with TensorFlow Federated. The comparison is intended only to position the mathematical contribution, not to evaluate systems.}
\label{tab:tff-comparison}
\begin{tabular}{p{0.20\linewidth}p{0.36\linewidth}p{0.36\linewidth}}
\toprule
Aspect & TensorFlow Federated & Our typed tensor language \\
\midrule
Primary role & Programming and execution framework for federated computations & Mathematical typed tensor language and semantic theory \\
Typing focus & Federated placements and structured values & Federated/shared tensor sorts, tracked record axis, and shared-state formation \\
Core semantics & Operational programming model for distributed computations & Virtual-global tensor semantics for correctness against centralized evaluation \\
Exposure boundary & Controlled by framework-level federated operators and placements & Formal exposure discipline through tensor types and record-axis elimination \\
Main theorem & Not primarily a theorem-driven formalization & Client-locality, virtual-global consistency, and shared-state factorization theorems \\
\bottomrule
\end{tabular}
\end{table}

The distinction matters for the kind of results proved in our paper. A placement-aware framework can ensure that values are located at clients or at the server. Our type system refines this with tensor-level record-axis information. This lets us state, for example, that an expression treating a federated record axis as a shared tensor axis is invalid unless it passes through an explicit record-axis elimination. The result is a formal account of what the computation means and how it factors through shared state, rather than a proposal for how a specific runtime should schedule the computation.

\subsection{Declarative, algebraic, and database systems}
\label{app:rw-declarative-algebraic}

A related line of work develops declarative or algebraic abstractions for machine learning systems. Tensor relational algebra gives an algebraic backend for distributed machine learning system design based on tensor relations and execution plans~\citep{yuan2021tensor}. SystemDS develops a declarative system for end-to-end data science pipelines and supports local, distributed, and federated execution~\citep{boehm2020systemds}. These works share with ours the view that structured mathematical computation should not be treated as a collection of opaque programs.

Declarative systems and algebraic backends are primarily concerned with specifying and optimizing computations across execution modes. Our paper is concerned with typed semantics for federated tensor computation. In particular, our language makes the transition from client-local state to shared state formal through the record axis. The shared-state factorization theorems then characterize computations that pass through fixed-dimensional shared state.

Classical database work on aggregation is also relevant. The data cube literature distinguishes distributive and algebraic aggregate functions from holistic aggregates requiring unbounded intermediate state~\citep{gray1997datacube}. Mergeable summaries give another formal account of summaries that can be combined from independently computed pieces~\citep{agarwal2012mergeable}. Our shared-state factorizations are close in spirit to these ideas, but they are formulated for typed tensor expressions over federated records. They also connect the same finite-state principle to iterative learning updates, gradients, and curvature blocks.

Recent database work also studies user-defined functions from the perspective of query optimization. For example, GRACEFUL learns cost estimates for SQL query plans containing UDFs~\citep{wehrstein2025graceful}. This line is orthogonal to our focus. GRACEFUL studies execution-cost prediction for database query optimization, while our paper studies typed semantics and shared-state factorization for federated tensor computations.

\subsection{Tensor languages, tensor programs, and differentiable distributed programming}
\label{app:rw-tensor-languages}

Tensor and array languages study how to express structured tensor computation with semantic and typing principles. Dex uses typed indices for array programming~\citep{maclaurin2019dex}, and static shape analysis for TensorFlow programs shows the value of reasoning about tensor shapes at the language level~\citep{lagouvardos2020static}. More broadly, recent work argues that tensor computation benefits from domain-specific language treatment rather than purely ad hoc library interfaces~\citep{bernardy2026domainspecific}.

Tensor programs are especially relevant as an example of a language accompanied by a theorem covering an expressible class. Tensor Programs I introduces a language for neural network computations and proves a Gaussian-process correspondence for networks expressible in that language~\citep{yang2019tensorprograms}. Later tensor programs work develops the program further for feature learning and related infinite-width phenomena~\citep{pmlr-v139-yang21c}. Our paper follows a similar high-level pattern in a different domain. We introduce a language for federated tensor computations and prove factorization theorems for the computations expressible in that language. Our theory is about when federated tensor computation can be carried out through fixed-dimensional shared state.

Federated automatic differentiation studies automatic differentiation for computations involving both client and server computation and communication across them~\citep{rush2024federatedad}. DrJAX develops differentiable MapReduce-style primitives in JAX for large-scale distributed and parallel machine learning~\citep{rush2024drjax}. These works are close to the differentiable side of our paper. The difference is that our differentiable fragment is not an automatic-differentiation system. We assume that the per-record gradient is representable by a client-local tensor expression and then prove that the global gradient and optimizer updates fit the shared-state factorization framework. Thus our emphasis is on typed federated tensor semantics and shared-state expressibility rather than on differentiating arbitrary distributed programs.

Field-based federated learning explores a different programming abstraction for federated coordination based on field-based models~\citep{domini2026fbfl}. This is complementary to our approach. We focus on tensor expressions, record-axis semantics, and shared-state factorizations.

\subsection{Privacy mechanisms and exposure discipline}
\label{app:rw-privacy}

The exposure discipline in this paper is a semantic statement about which tensors become shared. It should not be confused with a cryptographic or differential-privacy guarantee. Secure aggregation, secure multi-party computation, and differential privacy provide distinct privacy mechanisms~\citep{bonawitz2017practical,yao1982protocols,smpc,dwork2006calibrating}. These mechanisms can be used as backends or randomized layers for particular shared-state computations, but they are not implied by the deterministic type system.

At the same time, our typed tensor language gives a precise interface for privacy mechanisms. The shared-state factorization identifies the encoded local messages, the merged shared state, and the decoded output as distinct points where privacy mechanisms may be applied. Appendix~\ref{app:privacy-integration} formalizes this observation through a privacy-lifting result for shared-state factorizations.

\subsection{Summary of our distinct contribution}
\label{app:rw-summary}

The closest existing work provides federated learning algorithms, federated analytics systems, distributed machine learning runtimes, declarative computation frameworks, tensor languages, differentiable distributed programming systems, or privacy backends. Our contribution has a different scope. We define a typed tensor language for federated learning and analytics, prove semantic guarantees for its primitive computations, and characterize the expressible one-round and iterative computations through shared-state factorizations. The goal is a mathematical core that can sit above execution frameworks and below application-specific federated algorithms.

\section{Formal specification of the typed federated tensor language}
\label{app:language-spec}

\subsection{Tensor types, contexts, and expressions}
\label{subsec:tensor-types}

The semantic distinction of Subsection~\ref{subsec:semantic-objects} is reflected at the type level. A shared type records an ordinary tensor shape. A federated type records a record axis together with the common non-record shape. This is required to accommodate federations with different client sample sizes. In what follows, $\mathcal V$ denotes a countably infinite set of variables.

\begin{definition}[Tensor types]
\label{def:tensor-types}
A tensor type is either a shared type $\Sh(\mathbf s)$ or a federated type $\Fed_r(\mathbf d)$.

For a shared type, $\mathbf s = (s_1,\dots,s_k)$ is a tuple for some integer $k \in \mathbb N_0$, where each $s_j$ is a positive integer when $k \geq 1$. The case $k=0$ is identified with the empty tuple and corresponds to a scalar shared type.

For a federated type, there exist an integer $k \geq 1$, an axis $r \in [k]$, and positive integers $d_1,\dots,d_{r-1},d_{r+1},\dots,d_k$ such that $\mathbf d = (d_1,\dots,d_{r-1},d_{r+1},\dots,d_k)$. The type $\Fed_r(\mathbf d)$ denotes a $k$-way federated tensor whose record axis is $r$ and whose common non-record shape is $\mathbf d$. For $k=1$ and $r=1$, the tuple $\mathbf d$ is empty.
\end{definition}

\begin{definition}[Typing context]
\label{def:typing-context}
A typing context is a finite partial function $\Gamma$ from $\mathcal V$ to tensor types. If $x \in \operatorname{dom}(\Gamma)$, the value $\Gamma(x)$ is called the type of $x$ under $\Gamma$.
\end{definition}

\begin{definition}[Primitive signature]
\label{def:primitive-signature}
A primitive signature is a pair $\Sigma = (\mathcal F,\operatorname{ar})$, where $\mathcal F$ is a set and $\operatorname{ar} : \mathcal F \to \mathbb N_0$ is a function. If $F \in \mathcal F$ and $\operatorname{ar}(F)=m$, then $F$ is called an $m$-ary primitive symbol.
\end{definition}

\begin{definition}[Expressions]
\label{def:expressions}
Let $\Sigma = (\mathcal F,\operatorname{ar})$ be a primitive signature. The set of expressions over $\Sigma$, denoted by $\operatorname{Expr}(\Sigma)$, is the smallest set satisfying the following conditions. If $x \in \mathcal V$, then $x \in \operatorname{Expr}(\Sigma)$. If $F \in \mathcal F$, if $\operatorname{ar}(F)=m$, and if $e_1,\dots,e_m \in \operatorname{Expr}(\Sigma)$, then $F(e_1,\dots,e_m) \in \operatorname{Expr}(\Sigma)$.
\end{definition}

\begin{definition}[Typing judgment]
\label{def:typing-judgment}
Let $\Sigma$ be a primitive signature. A typing judgment over $\Sigma$ is an assertion of the form $\Gamma \vdash e : T$, where $\Gamma$ is a typing context, $e \in \operatorname{Expr}(\Sigma)$, and $T$ is a tensor type.
\end{definition}

Thus, syntax is defined independently of semantic interpretation. A primitive signature specifies only which primitive symbols may occur in expressions and how many inputs each symbol takes. Some typed primitive applications also require structural compatibility conditions that are not stored in the tensor type, such as equality of local record counts or local shapes across federated operands. These conditions are part of the definition of a valid typed primitive application in this specification.

\subsection{Primitive signature}
\label{subsec:primitive-operators}

We now introduce the operator symbols of the base language. The present signature is deterministic. It contains element-wise maps, axis-wise aggregations, axis permutations, and three matrix-product symbols. Later sections enlarge this signature when first-order and second-order learning updates are studied.

\begin{definition}[Aggregation schema]
\label{def:aggregation-schema}
An aggregation schema is a family $\alpha = (\alpha_n)_{n \in \mathbb N_0}$, where $\alpha_n : \RR^n \to \RR$ and $\RR^0 := \{()\}$. The value of $\alpha_0$ is part of the data of $\alpha$.
\end{definition}

Let $\mathcal U$, $\mathcal B$, $\mathcal G$, and $\mathcal A$ be pairwise disjoint collections with the following properties. Each $u \in \mathcal U$ is a map $u : \RR \to \RR$. Each $b \in \mathcal B$ is a map $b : \RR^2 \to \RR$. Each $g \in \mathcal G$ is a map $g : \RR^2 \to \{0,1\} \subseteq \RR$. Each $\alpha \in \mathcal A$ is an aggregation schema in the sense of Definition~\ref{def:aggregation-schema}. For each integer $k \geq 1$, let $\mathfrak S_k$ denote the permutation group of $[k]$, and define $\mathcal P := \bigcup_{k \geq 1} \mathfrak S_k$.

\begin{definition}[Base primitive signature]
\label{def:base-primitive-signature}
The base primitive signature is the pair $\Sigma_0 = (\mathcal F_0,\operatorname{ar})$, where
\[
\mathcal F_0
:=
\mathcal U
\cup
\mathcal B
\cup
\mathcal G
\cup
\{\alpha_j : \alpha \in \mathcal A,\ j \geq 1\}
\cup
\mathcal P
\cup
\{
\mathsf{MatMul}_{\Fed\Sh},
\mathsf{MatMul}_{\Sh\Fed},
\mathsf{MatMul}_{\Fed\Fed}
\},
\]
with all primitive symbols in the displayed union being regarded as formal tagged symbols, and the arity map $\operatorname{ar}$ is given by $\operatorname{ar}(u)=1$, $\operatorname{ar}(b)=2$, $\operatorname{ar}(g)=2$, $\operatorname{ar}(\alpha_j)=1$, and $\operatorname{ar}(\tau)=1$ for every $u \in \mathcal U$, $b \in \mathcal B$, $g \in \mathcal G$, $\alpha \in \mathcal A$, integer $j \geq 1$, and $\tau \in \mathcal P$, together with $\operatorname{ar}(\mathsf{MatMul}_{\Fed\Sh})=\operatorname{ar}(\mathsf{MatMul}_{\Sh\Fed})=\operatorname{ar}(\mathsf{MatMul}_{\Fed\Fed})=2$.
\end{definition}

We use the same notation for a primitive symbol and for its later tensor interpretation. Symbols in $\mathcal U$, $\mathcal B$, and $\mathcal G$ will be interpreted element-wise. Symbols of the form $\alpha_j$ will be interpreted as aggregation along axis $j$. A permutation $\tau \in \mathfrak S_k$ will be interpreted as reordering the axes of a $k$-way tensor according to $\tau$. The three matrix-product symbols distinguish the federated-shared, shared-federated, and federated-federated cases.

The signature records only operator symbols and their arities, without imposing shape compatibility, broadcasting rules, or record-axis constraints. Those conditions enter through the typing rules and the semantic clauses.

\subsection{Ordinary tensor operations induced by the signature}
\label{subsec:ordinary-tensor-ops}

We first interpret the non-matrix primitive symbols on ordinary tensors. These interpretations will later be lifted to shared tensors and federated tensors. Matrix-product symbols are treated separately because their meaning depends on tensor kind and on the position of the record axis.

\begin{definition}[Index sets and axis deletion]
\label{def:index-sets-axis-deletion}
Let $\mathbf s = (s_1,\dots,s_k)$ be a shape tuple with $k \in \mathbb N_0$. Define the index set of $\mathbf s$ by
\[
I(\mathbf s)
=
\begin{cases}
[s_1] \times \ldots \times [s_k], & k \geq 1,\\
\{()\}, & k=0.
\end{cases}
\]
If $k \geq 1$ and $j \in [k]$, define $\mathbf s \setminus j = (s_1,\dots,s_{j-1},s_{j+1},\dots,s_k)$.
\end{definition}

\begin{definition}[Broadcast compatibility]
\label{def:broadcast-compatibility}
Let $\mathbf s = (s_1,\dots,s_p)$ and $\mathbf t = (t_1,\dots,t_q)$ be shape tuples with $p,q \in \mathbb N_0$. Set $\ell := \max\{p,q\}$. Define the left-padded tuples $\operatorname{pad}_{\ell}(\mathbf s) = (\tilde s_1,\dots,\tilde s_{\ell})$ and $\operatorname{pad}_{\ell}(\mathbf t) = (\tilde t_1,\dots,\tilde t_{\ell})$ by
\[
\tilde s_i
=
\begin{cases}
1, & 1 \leq i \leq \ell-p,\\
s_{i-\ell+p}, & \ell-p+1 \leq i \leq \ell,
\end{cases}
\qquad
\tilde t_i
=
\begin{cases}
1, & 1 \leq i \leq \ell-q,\\
t_{i-\ell+q}, & \ell-q+1 \leq i \leq \ell.
\end{cases}
\]
The tuples $\mathbf s$ and $\mathbf t$ are broadcast-compatible if, for every $i \in [\ell]$, at least one of the following holds: $\tilde s_i = \tilde t_i$, $\tilde s_i = 1$, or $\tilde t_i = 1$. If $\mathbf s$ and $\mathbf t$ are broadcast-compatible, their broadcast shape is $\mathbf s \vee \mathbf t = (u_1,\dots,u_{\ell})$, where $u_i := \max\{\tilde s_i,\tilde t_i\}$ for $i \in [\ell]$.
\end{definition}

\begin{definition}[Broadcast map]
\label{def:broadcast-map}
Let $\mathbf s$ and $\mathbf u$ be shape tuples such that $\mathbf s$ is broadcast-compatible with $\mathbf u$ and such that $\mathbf s \vee \mathbf u = \mathbf u$. Let $\mathbf s = (s_1,\dots,s_p)$ and $\mathbf u = (u_1,\dots,u_{\ell})$, with $p,\ell \in \mathbb N_0$ and $p \leq \ell$. Write $\operatorname{pad}_{\ell}(\mathbf s) = (\tilde s_1,\dots,\tilde s_{\ell})$. Define $\pi_{\mathbf s}^{\mathbf u} : I(\mathbf u) \to I(\mathbf s)$ as follows. If $p=0$, then $\pi_{\mathbf s}^{\mathbf u}(i)=()$ for all $i \in I(\mathbf u)$. If $p \geq 1$ and $i = (i_1,\dots,i_{\ell}) \in I(\mathbf u)$, then $\pi_{\mathbf s}^{\mathbf u}(i) = (j_1,\dots,j_p)$, where for each $q \in [p]$,
\[
j_q
=
\begin{cases}
1, & s_q=1,\\
i_{\ell-p+q}, & s_q>1.
\end{cases}
\]
Define the broadcast map $\operatorname{Br}_{\mathbf s}^{\mathbf u} : \RR^{\mathbf s} \to \RR^{\mathbf u}$ as follows. If $T \in \RR^{\mathbf s}$, then $\operatorname{Br}_{\mathbf s}^{\mathbf u}(T) \in \RR^{\mathbf u}$ is defined by $\operatorname{Br}_{\mathbf s}^{\mathbf u}(T)[i] = T\left[\pi_{\mathbf s}^{\mathbf u}(i)\right]$ for every $i \in I(\mathbf u)$.
\end{definition}

\begin{definition}[Ordinary tensor operations induced by the signature]
\label{def:tensor_ops}
Let $\Sigma_0$ be the base primitive signature of Subsection~\ref{subsec:primitive-operators}.

If $u \in \mathcal U$ and $T \in \RR^{\mathbf s}$, define $u(T) \in \RR^{\mathbf s}$ by $u(T)[i] = u(T[i])$ for every $i \in I(\mathbf s)$.

If $b \in \mathcal B$, $S \in \RR^{\mathbf s}$, and $T \in \RR^{\mathbf t}$, assume that $\mathbf s$ and $\mathbf t$ are broadcast-compatible and set $\mathbf u := \mathbf s \vee \mathbf t$. Define $b(S,T) \in \RR^{\mathbf u}$ by $b(S,T)[i] = b\left(\operatorname{Br}_{\mathbf s}^{\mathbf u}(S)[i],\operatorname{Br}_{\mathbf t}^{\mathbf u}(T)[i]\right)$ for every $i \in I(\mathbf u)$.

If $g \in \mathcal G$, $S \in \RR^{\mathbf s}$, and $T \in \RR^{\mathbf t}$, assume that $\mathbf s$ and $\mathbf t$ are broadcast-compatible and set $\mathbf u := \mathbf s \vee \mathbf t$. Define $g(S,T) \in \RR^{\mathbf u}$ by $g(S,T)[i] = g\left(\operatorname{Br}_{\mathbf s}^{\mathbf u}(S)[i],\operatorname{Br}_{\mathbf t}^{\mathbf u}(T)[i]\right)$ for every $i \in I(\mathbf u)$.

If $\alpha \in \mathcal A$, $k \geq 1$, $T \in \RR^{s_1 \times \ldots \times s_k}$, and $j \in [k]$, define $\alpha_j(T) \in \RR^{\mathbf s \setminus j}$, where $\mathbf s := (s_1,\dots,s_k)$, by the rule
\begin{multline*}
\alpha_j(T)[i_1,\dots,i_{j-1},i_{j+1},\dots,i_k] = \\
\alpha_{s_j}\left(T[i_1,\dots,i_{j-1},1,i_{j+1},\dots,i_k],\dots,T[i_1,\dots,i_{j-1},s_j,i_{j+1},\dots,i_k]\right)
\end{multline*}
for every $(i_1,\dots,i_{j-1},i_{j+1},\dots,i_k) \in I(\mathbf s \setminus j)$. If $k=1$ and $j=1$, then $\alpha_1(T) \in \RR$ is given by $\alpha_1(T) = \alpha_{s_1}(T[1],\dots,T[s_1])$.

If $\tau \in \mathfrak S_k$ and $T \in \RR^{s_1 \times \cdots \times s_k}$, define $\tau \cdot (s_1,\dots,s_k) = (s_{\tau^{-1}(1)},\dots,s_{\tau^{-1}(k)})$, and define $\tau(T) \in \RR^{\tau \cdot (s_1,\dots,s_k)}$ by $\tau(T)[i_1,\dots,i_k] = T[i_{\tau(1)},\dots,i_{\tau(k)}]$ for every $(i_1,\dots,i_k) \in I(\tau \cdot (s_1,\dots,s_k))$.
\end{definition}

Definition~\ref{def:tensor_ops} gives the ordinary tensor meaning of unary maps, binary element-wise maps, aggregation symbols, and axis permutations. These ordinary operations will be combined with the two tensor kinds of Subsection~\ref{subsec:semantic-objects} in the next subsection.

\subsection{Semantics of non-matrix primitives on shared and federated tensors}
\label{subsec:primitive-semantics}

We interpret the non-matrix symbols of $\Sigma_0$ on shared tensors and federated tensors. Matrix-product symbols are treated separately because their output kind depends on how the record axis is used.

\begin{definition}[Ordinary shape]
\label{def:ordinary-shape}
Let $T$ be an ordinary tensor or a scalar. There exists a unique shape tuple $\mathbf s$ such that $T \in \RR^{\mathbf s}$. Define the ordinary shape of $T$ as $\operatorname{shape}(T) := \mathbf s$.
\end{definition}

\begin{definition}[Local broadcast of a shared tensor against a federated tensor]
\label{def:local-broadcast}
Let $X=\{X^{(c)}\}_{c\in\clients}$ be a federated tensor. Let $S$ be a shared tensor. Assume that, for every $c \in \clients$, the tuples $\operatorname{shape}(S)$ and $\operatorname{shape}(X^{(c)})$ are broadcast-compatible and satisfy $\operatorname{shape}(S) \vee \operatorname{shape}(X^{(c)}) = \operatorname{shape}(X^{(c)})$. For each $c \in \clients$, define the local broadcast of $S$ against $X$ as $\operatorname{Br}_c(S;X) := \operatorname{Br}_{\operatorname{shape}(S)}^{\operatorname{shape}(X^{(c)})}(S) \in \RR^{\operatorname{shape}(X^{(c)})}$.
\end{definition}

\begin{definition}[Unary semantics]
\label{def:unary-semantics}
Let $u \in \mathcal U$. If $S$ is a shared tensor, define $u(S)$ to be the shared tensor obtained by applying the ordinary tensor operation $u$ of Definition~\ref{def:tensor_ops} to $S$. If $X=\{X^{(c)}\}_{c\in\clients}$ is a federated tensor with record axis $r$, define the unary semantics of $X$ as $u(X) := \{u(X^{(c)})\}_{c\in\clients}$.
\end{definition}

\begin{definition}[Binary element-wise semantics]
\label{def:binary-elementwise-semantics}
Let $\beta \in \mathcal B \cup \mathcal G$.

If $S$ and $T$ are shared tensors and $\operatorname{shape}(S)$ and $\operatorname{shape}(T)$ are broadcast-compatible, define the binary element-wise semantics $\beta(S,T)$ of $S$ and $T$ as the shared tensor obtained by applying the ordinary tensor operation $\beta$ of Definition~\ref{def:tensor_ops} to $S$ and $T$.

If $X=\{X^{(c)}\}_{c\in\clients}$ and $Z=\{Z^{(c)}\}_{c\in\clients}$ are federated tensors with the same record axis $r$ and satisfying $\operatorname{shape}(X^{(c)})=\operatorname{shape}(Z^{(c)})$ for every $c \in \clients$, define $\beta(X,Z) := \{\beta(X^{(c)},Z^{(c)})\}_{c\in\clients}$.

If $X=\{X^{(c)}\}_{c\in\clients}$ is a federated tensor with record axis $r$, if $S$ is a shared tensor, and if $\operatorname{Br}_c(S;X)$ is defined for every $c \in \clients$, define $\beta(X,S) := \{\beta(X^{(c)},\operatorname{Br}_c(S;X))\}_{c\in\clients}$.

If $S$ is a shared tensor, if $X=\{X^{(c)}\}_{c\in\clients}$ is a federated tensor with record axis $r$, and if $\operatorname{Br}_c(S;X)$ is defined for every $c \in \clients$, define $\beta(S,X) := \{\beta(\operatorname{Br}_c(S;X),X^{(c)})\}_{c\in\clients}$.
\end{definition}

\begin{definition}[Aggregation semantics]
\label{def:aggregation-semantics}
Let $\alpha \in \mathcal A$.

If $S$ is a shared tensor of ordinary order $k \geq 1$ and if $j \in [k]$, define $\alpha_j(S)$ to be the shared tensor obtained by applying the ordinary tensor operation $\alpha_j$ of Definition~\ref{def:tensor_ops} to $S$.

If $X=\{X^{(c)}\}_{c\in\clients}$ is a federated tensor of ordinary order $k \geq 1$ with record axis $r$ and if $j=r$, define the aggregation semantics of $X$ along axis $r$ as $\alpha_r(X) := \alpha_r(\vglob{X})$.

If $X=\{X^{(c)}\}_{c\in\clients}$ is a federated tensor of ordinary order $k \geq 1$ with record axis $r$ and if $j \in [k] \setminus \{r\}$, define $\alpha_j(X) := \{\alpha_j(X^{(c)})\}_{c\in\clients}$. Then $\alpha_j(X)$ is a federated tensor with record axis
\[
r'=
\begin{cases}
r, & j>r,\\
r-1, & j<r.
\end{cases}
\]
\end{definition}

\begin{definition}[Permutation semantics]
\label{def:permutation-semantics}
Let $k \geq 1$ and let $\tau \in \mathfrak S_k$.

If $S$ is a shared tensor of ordinary order $k$, define the permutation semantics $\tau(S)$ of $S$ to be the shared tensor obtained by applying the ordinary tensor operation $\tau$ of Definition~\ref{def:tensor_ops} to $S$.

If $X=\{X^{(c)}\}_{c\in\clients}$ is a federated tensor of ordinary order $k$ with record axis $r$, define $\tau(X) := \{\tau(X^{(c)})\}_{c\in\clients}$.
\end{definition}

It follows from Definitions~\ref{def:unary-semantics}, \ref{def:binary-elementwise-semantics}, \ref{def:aggregation-semantics}, and \ref{def:permutation-semantics} that $u(X)$ is a federated tensor with record axis $r$, each of $\beta(X,Z)$, $\beta(X,S)$, and $\beta(S,X)$ is a federated tensor with record axis $r$, $\alpha_r(X)$ is a shared tensor, and $\tau(X)$ is a federated tensor with record axis $\tau(r)$. These semantic definitions specify how non-matrix primitives act on shared tensors and federated tensors. They also identify the only non-matrix operation that can produce shared state from federated inputs, namely aggregation along the record axis.

\subsection{Semantics of matrix-product primitives}
\label{subsec:matrix-semantics}

We now interpret the three matrix-product symbols in $\Sigma_0$. Each symbol is partial. It is defined for inputs whose matrix shapes and record-axis positions satisfy the stated conditions. These are the cases needed to preserve the federated kind or to eliminate the record axis into shared state.

A shared matrix is a shared tensor of ordinary order $2$. A federated matrix is a federated tensor of ordinary order $2$.

\begin{definition}[Federated-shared matrix product]
\label{def:fed-shared-matmul-semantics}
Let $X=\{X^{(c)}\}_{c\in\clients}$ be a federated matrix with record axis $1$. Assume that there exists an integer $p \geq 1$ and a family of local record counts $\{n_c\}_{c\in\clients} \subseteq \mathbb N_0$ such that $X^{(c)} \in \RR^{n_c \times p}$ for every $c \in \clients$. Let $S \in \RR^{p \times q}$ be a shared matrix for some integer $q \geq 1$. Define the federated-shared matrix product as $\mathsf{MatMul}_{\Fed\Sh}(X,S) := \{X^{(c)}S\}_{c\in\clients}$.
\end{definition}

\begin{definition}[Shared-federated matrix product]
\label{def:shared-fed-matmul-semantics}
Let $X=\{X^{(c)}\}_{c\in\clients}$ be a federated matrix with record axis $2$. Assume that there exists an integer $p \geq 1$ and a family of local record counts $\{n_c\}_{c\in\clients} \subseteq \mathbb N_0$ such that $X^{(c)} \in \RR^{p \times n_c}$ for every $c \in \clients$. Let $S \in \RR^{q \times p}$ be a shared matrix for some integer $q \geq 1$. Define the shared-federated matrix product as $\mathsf{MatMul}_{\Sh\Fed}(S,X) := \{SX^{(c)}\}_{c\in\clients}$.
\end{definition}

\begin{definition}[Federated-federated matrix product]
\label{def:fed-fed-matmul-semantics}
Let $X=\{X^{(c)}\}_{c\in\clients}$ and $Z=\{Z^{(c)}\}_{c\in\clients}$ be federated matrices. Assume that $X$ has record axis $2$, that $Z$ has record axis $1$, and that there exist integers $a,b \geq 1$ and a family of local record counts $\{n_c\}_{c\in\clients} \subseteq \mathbb N_0$ such that $X^{(c)} \in \RR^{a \times n_c}$ and $Z^{(c)} \in \RR^{n_c \times b}$ for every $c \in \clients$. Define the federated-federated matrix product as $\mathsf{MatMul}_{\Fed\Fed}(X,Z) := \sum_{c\in\clients} X^{(c)}Z^{(c)}$.
\end{definition}

It follows from Definitions~\ref{def:fed-shared-matmul-semantics}, \ref{def:shared-fed-matmul-semantics}, and \ref{def:fed-fed-matmul-semantics} that $\mathsf{MatMul}_{\Fed\Sh}(X,S)$ is a federated matrix with record axis $1$ and local shapes $X^{(c)}S \in \RR^{n_c \times q}$ for every $c \in \clients$, $\mathsf{MatMul}_{\Sh\Fed}(S,X)$ is a federated matrix with record axis $2$ and local shapes $SX^{(c)} \in \RR^{q \times n_c}$ for every $c \in \clients$, and $\mathsf{MatMul}_{\Fed\Fed}(X,Z)$ is a shared matrix in $\RR^{a \times b}$.

The federated-shared and shared-federated matrix products preserve the federated structure. The federated-federated matrix product eliminates the record axis and produces shared state. Other axis placements can be reduced to these cases by composing with permutation primitives. Shared-only matrix multiplication, which is not part of the base signature, will be added when later sections expand the language for shared-only post-processing and second-order updates.

\subsection{Type-level shape operations}
\label{subsec:type-level-ops}

We introduce the type-level operations needed to state the typing rules. The main issue is that a federated type records a record axis but does not record local record counts. For that reason, type-level broadcasting against a federated type must be defined without reference to any client sample size.

\begin{definition}[Symbolic shape of a federated type]
\label{def:symbolic-shape}
Let $T = \Fed_r(\mathbf d)$, $\mathbf d = (d_1,\dots,d_{k-1})$, where $k \geq 1$ and $r \in [k]$. Define the symbolic shape of $T$ as the tuple $\operatorname{sym}(T) = (\sigma_1,\dots,\sigma_k)$ with entries in $\mathbb N \cup \{\star\}$ given by
\[
\sigma_j
=
\begin{cases}
d_j, & 1 \leq j < r,\\
\star, & j=r,\\
d_{j-1}, & r < j \leq k.
\end{cases}
\]
\end{definition}

The tuple $\operatorname{sym}(T)$ of Definition~\ref{def:symbolic-shape} has exactly one coordinate equal to $\star$. That coordinate records the record axis.

\begin{definition}[Erasure of the record marker]
\label{def:erase-record-marker}
Let $\boldsymbol{\sigma} = (\sigma_1,\dots,\sigma_k)$ be a tuple with entries in $\mathbb N \cup \{\star\}$ and with exactly one coordinate equal to $\star$. Define the erasure $\operatorname{erase}_{\star}(\boldsymbol{\sigma})$ of the record marker in $\boldsymbol{\sigma}$ as the tuple of positive integers obtained by deleting the unique coordinate equal to $\star$ and preserving the order of the remaining coordinates.
\end{definition}

\begin{definition}[Record-agnostic broadcast compatibility]
\label{def:record-agnostic-broadcast-compatibility}
Let $\mathbf s = (s_1,\dots,s_p)$ be a shared shape tuple with $p \in \mathbb N_0$. Let $T = \Fed_r(\mathbf d)$ be a federated type whose symbolic shape is $\operatorname{sym}(T) = (\sigma_1,\dots,\sigma_k)$. If $p>k$, then $\mathbf s$ is not record-agnostically broadcast-compatible with $T$. Assume henceforth that $p \leq k$. Define the left-padded tuple $\operatorname{pad}_{k}(\mathbf s) = (\tilde s_1,\dots,\tilde s_k)$ by padding $\mathbf s$ on the left with entries equal to $1$.

We say that $\mathbf s$ is record-agnostically broadcast-compatible with $T$ if, for every $i \in [k]$, the following conditions hold. If $\sigma_i = \star$, then $\tilde s_i = 1$. If $\sigma_i \in \mathbb N$, then $\tilde s_i \in \{1,\sigma_i\}$.
\end{definition}

The previous condition depends only on the type $\Fed_r(\mathbf d)$, without a dependence on any local record count. It is the static condition that guarantees a shared tensor of shape $\mathbf s$ can be broadcast against every local tensor having federated type $T$, without changing the local tensor shape.

\begin{definition}[Deletion of a non-record axis from a federated type]
\label{def:delete-nonrecord-axis}
Let $T = \Fed_r(\mathbf d)$, $\operatorname{sym}(T) = (\sigma_1,\dots,\sigma_k)$, with $k \geq 1$. Let $j \in [k] \setminus \{r\}$. Define the deletion of axis $j$ from $T$ as follows. Let $\boldsymbol{\sigma}' = (\sigma_1,\dots,\sigma_{j-1},\sigma_{j+1},\dots,\sigma_k)$. Let $r'$ be the unique index in $[k-1]$ such that the $r'$-th coordinate of $\boldsymbol{\sigma}'$ is equal to $\star$. Let $\mathbf d' := \operatorname{erase}_{\star}(\boldsymbol{\sigma}')$ as in Definition~\ref{def:erase-record-marker}. Define $\operatorname{del}_j(T) := \Fed_{r'}(\mathbf d')$.
\end{definition}

\begin{definition}[Permutation of a federated type]
\label{def:permute-federated-type}
Let $T = \Fed_r(\mathbf d)$, $\operatorname{sym}(T) = (\sigma_1,\dots,\sigma_k)$, with $k \geq 1$, and let $\tau \in \mathfrak S_k$. Define the permutation of $T$ by $\tau$ as follows. Define $\tau \cdot \operatorname{sym}(T) = (\sigma_{\tau^{-1}(1)},\dots,\sigma_{\tau^{-1}(k)})$. Let $\mathbf d' := \operatorname{erase}_{\star}\left(\tau \cdot \operatorname{sym}(T)\right)$ as in Definition~\ref{def:erase-record-marker}. Define $\tau(T) := \Fed_{\tau(r)}(\mathbf d')$.
\end{definition}

\subsection{Typing rules for non-matrix primitives}
\label{subsec:typing-rules-nonmatrix}

In this subsection, all expressions are expressions over the base primitive signature $\Sigma_0$. The typing rules below assign output types to primitive applications whenever the corresponding semantic compatibility conditions are satisfied. For every integer $k \geq 1$, every permutation $\tau \in \mathfrak S_k$, and every shape tuple $\mathbf s = (s_1,\dots,s_k)$, we write $\tau \cdot \mathbf s := (s_{\tau^{-1}(1)},\dots,s_{\tau^{-1}(k)})$.

\begin{definition}[Typing of unary primitives]
\label{def:typing-unary}
Let $u \in \mathcal U$.

If $\Gamma \vdash e : \Sh(\mathbf s)$, then define the typing rule for $u$ on $e$ by $\Gamma \vdash u(e) : \Sh(\mathbf s)$.

If $\Gamma \vdash e : \Fed_r(\mathbf d)$, then define the typing rule for $u$ on $e$ by $\Gamma \vdash u(e) : \Fed_r(\mathbf d)$.
\end{definition}

\begin{definition}[Typing of binary element-wise primitives]
\label{def:typing-binary-elementwise}
Let $\beta \in \mathcal B \cup \mathcal G$.

If $\Gamma \vdash e : \Sh(\mathbf s)$ and $\Gamma \vdash f : \Sh(\mathbf t)$, and if $\mathbf s$ and $\mathbf t$ are broadcast-compatible, then define the typing rule for $\beta$ on $(e,f)$ by $\Gamma \vdash \beta(e,f) : \Sh(\mathbf s \vee \mathbf t)$.

If $\Gamma \vdash e : \Fed_r(\mathbf d)$ and $\Gamma \vdash f : \Fed_r(\mathbf d)$, and if the corresponding federated primitive application satisfies the local-shape compatibility condition of Definition~\ref{def:binary-elementwise-semantics}, then define the typing rule for $\beta$ on $(e,f)$ by $\Gamma \vdash \beta(e,f) : \Fed_r(\mathbf d)$.

If $\Gamma \vdash e : \Fed_r(\mathbf d)$ and $\Gamma \vdash f : \Sh(\mathbf s)$, and if $\mathbf s$ is record-agnostically broadcast-compatible with $\Fed_r(\mathbf d)$ according to Definition~\ref{def:record-agnostic-broadcast-compatibility}, then define the typing rule for $\beta$ on $(e,f)$ by $\Gamma \vdash \beta(e,f) : \Fed_r(\mathbf d)$.

If $\Gamma \vdash e : \Sh(\mathbf s)$ and $\Gamma \vdash f : \Fed_r(\mathbf d)$, and if $\mathbf s$ is record-agnostically broadcast-compatible with $\Fed_r(\mathbf d)$ according to Definition~\ref{def:record-agnostic-broadcast-compatibility}, then define the typing rule for $\beta$ on $(e,f)$ by $\Gamma \vdash \beta(e,f) : \Fed_r(\mathbf d)$.
\end{definition}

\begin{definition}[Typing of aggregation primitives]
\label{def:typing-aggregation}
Let $\alpha \in \mathcal A$. Let $\mathbf s = (s_1,\dots,s_k)$ with $k \geq 1$. If $\Gamma \vdash e : \Sh(\mathbf s)$ and $j \in [k]$, then define the typing rule for $\alpha_j$ on $e$ by $\Gamma \vdash \alpha_j(e) : \Sh(\mathbf s \setminus j)$.

Let $T = \Fed_r(\mathbf d)$, where $r \in [k]$ and $k = |\mathbf d|+1$. If $\Gamma \vdash e : T$, then define the typing rule for record-axis aggregation on $e$ by $\Gamma \vdash \alpha_r(e) : \Sh(\mathbf d)$.

If $\Gamma \vdash e : T$ and $j \in [k] \setminus \{r\}$, then define the typing rule for non-record-axis aggregation on $e$ by $\Gamma \vdash \alpha_j(e) : \operatorname{del}_j(T)$, where $\operatorname{del}_j(T)$ is the federated type of Definition~\ref{def:delete-nonrecord-axis}.
\end{definition}

\begin{definition}[Typing of permutation primitives]
\label{def:typing-permutation}
Let $k \geq 1$ and let $\tau \in \mathfrak S_k$.

If $\Gamma \vdash e : \Sh(\mathbf s)$, where $\mathbf s = (s_1,\dots,s_k)$, then define the typing rule for $\tau$ on $e$ by $\Gamma \vdash \tau(e) : \Sh(\tau \cdot \mathbf s)$.

If $\Gamma \vdash e : \Fed_r(\mathbf d)$ and $k = |\mathbf d|+1$, then define the typing rule for $\tau$ on $e$ by $\Gamma \vdash \tau(e) : \tau(\Fed_r(\mathbf d))$, where $\tau(\Fed_r(\mathbf d))$ is the federated type of Definition~\ref{def:permute-federated-type}.
\end{definition}

The rules of Definitions~\ref{def:typing-unary}--\ref{def:typing-permutation} provide the two ways in which a primitive may act on federated data. It may preserve federated structure, as in unary maps, element-wise operations, non-record-axis aggregation, and permutations. It may also eliminate the record axis and produce shared state, as in record-axis aggregation. The remaining shared-output primitive in the base language is the federated-federated matrix product, which is typed separately.

\subsection{Typing rules for matrix-product primitives}
\label{subsec:typing-rules-matrix}

We now state the typing rules for the three matrix-product primitives in the base language. These are the only matrix products in $\Sigma_0$. They are chosen so that matrix multiplication either preserves the federated structure or eliminates the record axis into shared state.

\begin{definition}[Typing of the federated-shared matrix product]
\label{def:typing-fed-shared-matmul}
Let $p,q \geq 1$. If $\Gamma \vdash e : \Fed_1((p))$ and $\Gamma \vdash f : \Sh((p,q))$, then define the typing rule for the federated-shared matrix product by $\Gamma \vdash \mathsf{MatMul}_{\Fed\Sh}(e,f) : \Fed_1((q))$.
\end{definition}

\begin{definition}[Typing of the shared-federated matrix product]
\label{def:typing-shared-fed-matmul}
Let $p,q \geq 1$. If $\Gamma \vdash e : \Sh((q,p))$ and $\Gamma \vdash f : \Fed_2((p))$, then define the typing rule for the shared-federated matrix product by $\Gamma \vdash \mathsf{MatMul}_{\Sh\Fed}(e,f) : \Fed_2((q))$.
\end{definition}

\begin{definition}[Typing of the federated-federated matrix product]
\label{def:typing-fed-fed-matmul}
Let $a,b \geq 1$. If $\Gamma \vdash e : \Fed_2((a))$ and $\Gamma \vdash f : \Fed_1((b))$, and if the corresponding primitive application satisfies the client-wise local-count compatibility condition of Definition~\ref{def:fed-fed-matmul-semantics}, then define the typing rule for the federated-federated matrix product by $\Gamma \vdash \mathsf{MatMul}_{\Fed\Fed}(e,f) : \Sh((a,b))$.
\end{definition}

The typing rules of Definitions~\ref{def:typing-fed-shared-matmul} and~\ref{def:typing-shared-fed-matmul} preserve the federated kind. The typing rule of Definition~\ref{def:typing-fed-fed-matmul} produces a shared tensor. Other placements of the record axis can be reduced to these cases by combining matrix-product primitives with permutation primitives. Because federated types do not record local record counts, the federated-federated matrix product rule carries a semantic side condition. It applies only when the locally contracted dimensions agree on every client.

\section{Proofs of foundational semantic propositions}
\label{app:semantic-proofs}

\begin{proof}[Proof of Proposition~\ref{prop:client-locality}]
Fix a primitive symbol $F$ of $\Sigma_0$. Consider a typed primitive application of $F$ whose federated inputs are $X_1,\dots,X_p$ and whose shared inputs are $S_1,\dots,S_q$, and let $Y$ denote its output. Assume that $Y$ is a federated tensor. We show that, for each client $c \in \clients$, the local output $Y^{(c)}$ depends only on the local federated inputs at $c$ and on the shared inputs.

We proceed by a case analysis over the federated-output primitives of the base language.

Assume first that $F=u$ for some $u \in \mathcal U$. Then there is one federated input and no shared input. By Definition~\ref{def:unary-semantics}, $Y^{(c)} = u(X_1^{(c)})$ for every $c \in \clients$. Hence the required map is $\Phi_{F,c}(A) := u(A)$.

Assume next that $F=\beta$ for some $\beta \in \mathcal B \cup \mathcal G$ and that both inputs are federated. Then there are two federated inputs and no shared input. By Definition~\ref{def:binary-elementwise-semantics}, $Y^{(c)} = \beta(X_1^{(c)},X_2^{(c)})$ for every $c \in \clients$. Hence the required map is $\Phi_{F,c}(A,B) := \beta(A,B)$.

Assume next that $F=\beta$ for some $\beta \in \mathcal B \cup \mathcal G$, that there is one federated input $X_1$, and that there is one shared input $S_1$ used in the second operand position. By Definition~\ref{def:binary-elementwise-semantics}, $Y^{(c)} = \beta(X_1^{(c)},\operatorname{Br}_c(S_1;X_1))$ for every $c \in \clients$. For fixed $c$, define $\Phi_{F,c}(A,S) := \beta(A,\operatorname{Br}_{\operatorname{shape}(S)}^{\operatorname{shape}(A)}(S))$ for tensors $A$ and $S$ of the same shapes as $X_1^{(c)}$ and $S_1$, respectively (see Definitions~\ref{def:ordinary-shape} and~\ref{def:broadcast-map}). Substituting $A=X_1^{(c)}$ and $S=S_1$ gives $Y^{(c)} = \Phi_{F,c}(X_1^{(c)},S_1)$.

Assume next that $F=\beta$ for some $\beta \in \mathcal B \cup \mathcal G$, that there is one shared input $S_1$ used in the first operand position, and that there is one federated input $X_1$. By Definition~\ref{def:binary-elementwise-semantics}, $Y^{(c)} = \beta(\operatorname{Br}_c(S_1;X_1),X_1^{(c)})$ for every $c \in \clients$. For fixed $c$, define $\Phi_{F,c}(A,S) := \beta(\operatorname{Br}_{\operatorname{shape}(S)}^{\operatorname{shape}(A)}(S),A)$ for tensors $A$ and $S$ of the same shapes as $X_1^{(c)}$ and $S_1$, respectively. Substituting $A=X_1^{(c)}$ and $S=S_1$ gives $Y^{(c)} = \Phi_{F,c}(X_1^{(c)},S_1)$.

Assume next that $F=\alpha_j$ for some $\alpha \in \mathcal A$ and some axis $j$. Since the output is federated, Definition~\ref{def:typing-aggregation} implies that the input is a federated tensor $X_1$ and that $j$ is not its record axis. By Definition~\ref{def:aggregation-semantics}, $Y^{(c)} = \alpha_j(X_1^{(c)})$ for every $c \in \clients$. Hence the required map is $\Phi_{F,c}(A) := \alpha_j(A)$.

Assume next that $F=\tau$ for some permutation $\tau \in \mathfrak S_k$. Since the output is federated, the input is a federated tensor $X_1$ of ordinary order $k$ by Definition~\ref{def:typing-permutation}. By Definition~\ref{def:permutation-semantics}, $Y^{(c)} = \tau(X_1^{(c)})$ for every $c \in \clients$. Hence the required map is $\Phi_{F,c}(A) := \tau(A)$.

Assume next that $F=\mathsf{MatMul}_{\Fed\Sh}$. Then there is one federated input $X_1$ and one shared input $S_1$. By Definition~\ref{def:fed-shared-matmul-semantics}, $Y^{(c)} = X_1^{(c)}S_1$ for every $c \in \clients$. Hence the required map is $\Phi_{F,c}(A,S) := AS$.

Assume finally that $F=\mathsf{MatMul}_{\Sh\Fed}$. Then there is one shared input $S_1$ and one federated input $X_1$. By Definition~\ref{def:shared-fed-matmul-semantics}, $Y^{(c)} = S_1X_1^{(c)}$ for every $c \in \clients$. Hence the required map is $\Phi_{F,c}(A,S) := SA$.

These cases exhaust the federated-output primitives of the base language by Definitions~\ref{def:unary-semantics}, \ref{def:binary-elementwise-semantics}, \ref{def:aggregation-semantics}, \ref{def:permutation-semantics}, \ref{def:fed-shared-matmul-semantics}, and~\ref{def:shared-fed-matmul-semantics}. In each case, the constructed map is determined by the primitive symbol, the operand positions, and the local input shapes. It does not depend on the identity of $c$ or on any tensor stored at another client. Therefore, for every client $c \in \clients$, there exists a deterministic map $\Phi_{F,c}$ such that $Y^{(c)} = \Phi_{F,c}(X_1^{(c)},\dots,X_p^{(c)},S_1,\dots,S_q)$. This proves the proposition.
\end{proof}

\begin{proof}[Proof of Proposition~\ref{prop:exposure-discipline}]
Fix a primitive symbol $F$ of $\Sigma_0$. Consider a primitive application of $F$ whose federated inputs are $X_1,\dots,X_p$ and whose shared inputs are $S_1,\dots,S_q$, and let $Y$ denote its output.

We first prove that the output object is globally available if and only if $Y$ is a shared tensor.

If $Y$ is a shared tensor, then global availability is immediate from Definition~\ref{def:shared-tensor}.

Conversely, suppose that the output object is globally available. We show that $Y$ cannot be a federated tensor. By Definitions~\ref{def:unary-semantics}, \ref{def:binary-elementwise-semantics}, \ref{def:aggregation-semantics}, \ref{def:permutation-semantics}, \ref{def:fed-shared-matmul-semantics}, and~\ref{def:shared-fed-matmul-semantics}, every federated-output primitive application produces a client-indexed family $\{Y^{(c)}\}_{c \in \clients}$. Such an output is not a single tensor value available identically at every client. Hence a globally available output object cannot be federated. Therefore $Y$ is a shared tensor.

We now prove the second claim. Assume that $p \geq 1$ and that $Y$ is shared.

Assume first that $F=u$ for some $u \in \mathcal U$. By Definition~\ref{def:unary-semantics}, a unary primitive applied to a federated tensor produces a federated tensor. Since $p \geq 1$, the unique input is federated. Hence $Y$ cannot be shared. This case is impossible.

Assume next that $F=\beta$ for some $\beta \in \mathcal B \cup \mathcal G$. By Definition~\ref{def:binary-elementwise-semantics}, if at least one input is federated, then the output is federated. Since $p \geq 1$, at least one input is federated. Hence $Y$ cannot be shared. This case is impossible.

Assume next that $F=\tau$ for some permutation $\tau \in \mathfrak S_k$. By Definition~\ref{def:permutation-semantics}, a permutation primitive preserves tensor kind. Since $p \geq 1$, the unique input is federated. Hence $Y$ cannot be shared. This case is impossible.

Assume next that $F=\alpha_j$ for some $\alpha \in \mathcal A$ and some axis $j$. Since aggregation primitives are unary, the unique input is federated because $p \geq 1$. By Definition~\ref{def:aggregation-semantics}, aggregation along the record axis produces a shared tensor, while aggregation along any non-record axis produces a federated tensor. Therefore, if $Y$ is shared, there exist $\alpha \in \mathcal A$, an index $i \in [p]$, and the record axis $r$ of $X_i$ such that $Y = \alpha_r(X_i)$.

Assume next that $F=\mathsf{MatMul}_{\Fed\Sh}$. By Definition~\ref{def:fed-shared-matmul-semantics}, the output is federated. Hence $Y$ cannot be shared. This case is impossible.

Assume next that $F=\mathsf{MatMul}_{\Sh\Fed}$. By Definition~\ref{def:shared-fed-matmul-semantics}, the output is federated. Hence $Y$ cannot be shared. This case is impossible.

Assume finally that $F=\mathsf{MatMul}_{\Fed\Fed}$. By Definition~\ref{def:fed-fed-matmul-semantics}, the output is shared, and both operands are federated matrices. Therefore there exist indices $i,j \in [p]$ such that $Y = \mathsf{MatMul}_{\Fed\Fed}(X_i,X_j)$.

These cases exhaust the primitive symbols of $\Sigma_0$ by Definition~\ref{def:base-primitive-signature}. Since the aggregation symbols and the matrix-product symbols are distinct primitive symbols, the two stated alternatives are mutually exclusive. Therefore, under the assumptions $p \geq 1$ and $Y$ shared, exactly one of the two cases holds. This proves the proposition.
\end{proof}

\begin{proof}[Proof of Proposition~\ref{prop:virtual-global-consistency}]
Fix a primitive symbol $F$ of $\Sigma_0$ with arity $m$. Consider a typed primitive application of $F$ to inputs $A_1,\dots,A_m$, where each $A_t$ is either a federated tensor or a shared tensor, and let $Y$ denote its output. For each $t \in [m]$, define $\widetilde A_t$ as in the statement of the proposition.

For each federated tensor $X$, write $\clients = \{c_1,\dots,c_M\}$ with $c_1 < \ldots < c_M$. If $X$ has record axis $r$ and local record counts $n_{c_1}(X),\dots,n_{c_M}(X)$, define the client offset $o_X(c_j) := \sum_{\ell=1}^{j-1} n_{c_\ell}(X)$ for $j \in [M]$. If $i=(i_1,\dots,i_k)$ is a local index of $X^{(c_j)}$, define the corresponding global index $\mu_X(c_j,i)=(h_1,\dots,h_k)$ by $h_t=i_t$ for $t \neq r$ and $h_r=o_X(c_j)+i_r$. By Definition~\ref{def:virtual-global-tensor}, $X^{(c_j)}[i] = \vglob{X}[\mu_X(c_j,i)]$ for every local index $i$.

We first treat the case in which $Y$ is federated.

Assume first that $F=u$ for some $u \in \mathcal U$. Then there is one federated input $X_1$ and no shared input. By Definition~\ref{def:unary-semantics}, $Y^{(c)} = u(X_1^{(c)})$ for every $c \in \clients$. Let $c \in \clients$ and let $i$ be a local index of $Y^{(c)}$. Then $\vglob{Y}[\mu_Y(c,i)] = Y^{(c)}[i] = u(X_1^{(c)}[i]) = u(\vglob{X}_1[\mu_{X_1}(c,i)])$. Hence $\vglob{Y} = u(\vglob{X}_1) = F^{\operatorname{ord}}(\widetilde A_1)$.

Assume next that $F=\beta$ for some $\beta \in \mathcal B \cup \mathcal G$ and that both inputs are federated. Let the inputs be $X_1$ and $X_2$. By Definition~\ref{def:binary-elementwise-semantics}, $Y^{(c)} = \beta(X_1^{(c)},X_2^{(c)})$ for every $c \in \clients$. Because the primitive application is typed, Definition~\ref{def:typing-binary-elementwise} implies that $X_1$ and $X_2$ have the same federated type. Hence they have the same record axis. The semantic compatibility condition in Definition~\ref{def:binary-elementwise-semantics} gives $\operatorname{shape}(X_1^{(c)})=\operatorname{shape}(X_2^{(c)})$ for every $c \in \clients$. In particular, their local record counts and client offsets agree. Let $i$ be a local index. Then $\vglob{Y}[\mu_Y(c,i)] = \beta(X_1^{(c)}[i],X_2^{(c)}[i]) = \beta(\vglob{X}_1[\mu_{X_1}(c,i)],\vglob{X}_2[\mu_{X_2}(c,i)])$. Therefore $\vglob{Y} = \beta(\vglob{X}_1,\vglob{X}_2) = F^{\operatorname{ord}}(\widetilde A_1,\widetilde A_2)$.

Assume next that $F=\beta$ for some $\beta \in \mathcal B \cup \mathcal G$, that the first input is federated, and that the second input is shared. Let the inputs be $X_1$ and $S_1$. By Definition~\ref{def:binary-elementwise-semantics}, $Y^{(c)} = \beta(X_1^{(c)},\operatorname{Br}_c(S_1;X_1))$ for every $c \in \clients$. Because the primitive application is typed, Definition~\ref{def:typing-binary-elementwise} together with Definition~\ref{def:record-agnostic-broadcast-compatibility} implies that the shared tensor $S_1$ has extent $1$ along the record axis when aligned against the symbolic shape of $X_1$. Hence the ordinary broadcast of $S_1$ against $\vglob{X}_1$ is defined and, for every client $c$ and every local index $i$, $\operatorname{Br}_{\operatorname{shape}(S_1)}^{\operatorname{shape}(\vglob{X}_1)}(S_1)[\mu_{X_1}(c,i)] = \operatorname{Br}_c(S_1;X_1)[i]$ by Definitions~\ref{def:broadcast-map} and~\ref{def:local-broadcast}. Therefore
$\vglob{Y}[\mu_Y(c,i)] = Y^{(c)}[i] = \beta(X_1^{(c)}[i],\operatorname{Br}_c(S_1;X_1)[i]) = \beta(\vglob{X}_1[\mu_{X_1}(c,i)],\operatorname{Br}_{\operatorname{shape}(S_1)}^{\operatorname{shape}(\vglob{X}_1)}(S_1)[\mu_{X_1}(c,i)])$. Hence $\vglob{Y} = \beta(\vglob{X}_1,S_1) = F^{\operatorname{ord}}(\widetilde A_1,\widetilde A_2)$.

The case in which $F=\beta$, the first input is shared, and the second input is federated is identical. If the inputs are $S_1$ and $X_1$, then $\vglob{Y} = \beta(S_1,\vglob{X}_1) = F^{\operatorname{ord}}(\widetilde A_1,\widetilde A_2)$.

Assume next that $F=\alpha_j$ for some $\alpha \in \mathcal A$, and that the output is federated. Then the unique input is a federated tensor $X_1$, and Definition~\ref{def:typing-aggregation} implies that $j$ is not its record axis $r$. By Definition~\ref{def:aggregation-semantics}, $Y^{(c)} = \alpha_j(X_1^{(c)})$ for every $c \in \clients$. Let $i$ be a local index of $Y^{(c)}$. Since $j \neq r$, the fiber used to compute $Y^{(c)}[i]$ lies entirely inside the local tensor $X_1^{(c)}$. The corresponding fiber of $\vglob{X}_1$ is the same fiber. Therefore $\vglob{Y}[\mu_Y(c,i)] = \alpha_j(X_1^{(c)})[i] = \alpha_j(\vglob{X}_1)[\mu_Y(c,i)]$. Hence $\vglob{Y} = \alpha_j(\vglob{X}_1) = F^{\operatorname{ord}}(\widetilde A_1)$.

Assume next that $F=\tau$ for some permutation $\tau \in \mathfrak S_k$ and that the unique input is a federated tensor $X_1$ of ordinary order $k$ with record axis $r$. By Definition~\ref{def:permutation-semantics}, $Y^{(c)} = \tau(X_1^{(c)})$ for every $c \in \clients$, and $Y$ has record axis $\tau(r)$. The local record counts of $Y$ are the same as those of $X_1$, so $o_Y(c)=o_{X_1}(c)$ for every client $c$. Let $i=(i_1,\dots,i_k)$ be a local index of $Y^{(c)}$, and define $j=(i_{\tau(1)},\dots,i_{\tau(k)})$. By Definition~\ref{def:tensor_ops}, $Y^{(c)}[i] = X_1^{(c)}[j]$. Let $g=\mu_Y(c,i)$. Since the record axis of $Y$ is $\tau(r)$, we have $g_{\tau(t)} = i_{\tau(t)}$ for $t \neq r$ and $g_{\tau(r)} = o_Y(c)+i_{\tau(r)} = o_{X_1}(c)+j_r$. Hence $(g_{\tau(1)},\dots,g_{\tau(k)}) = \mu_{X_1}(c,j)$. Using the ordinary permutation rule of Definition~\ref{def:tensor_ops}, we obtain
$\tau(\vglob{X}_1)[g] = \vglob{X}_1[g_{\tau(1)},\dots,g_{\tau(k)}] = \vglob{X}_1[\mu_{X_1}(c,j)] = X_1^{(c)}[j] = Y^{(c)}[i]$.
Therefore $\vglob{Y} = \tau(\vglob{X}_1) = F^{\operatorname{ord}}(\widetilde A_1)$.

Assume next that $F=\mathsf{MatMul}_{\Fed\Sh}$. Then the inputs are a federated matrix $X_1$ with record axis $1$ and a shared matrix $S_1$. By Definition~\ref{def:fed-shared-matmul-semantics}, $Y^{(c)} = X_1^{(c)}S_1$ for every $c \in \clients$. Let $N := \sum_{c\in\clients} n_c(X_1)$. Then $\vglob{X}_1 \in \RR^{N \times p}$ and $\vglob{Y} \in \RR^{N \times q}$. Concatenation along the first axis commutes with right multiplication by the same matrix. Hence $\vglob{Y} = \vglob{X}_1S_1 = F^{\operatorname{ord}}(\widetilde A_1,\widetilde A_2)$.

Assume finally that $F=\mathsf{MatMul}_{\Sh\Fed}$. Then the inputs are a shared matrix $S_1$ and a federated matrix $X_1$ with record axis $2$. By Definition~\ref{def:shared-fed-matmul-semantics}, $Y^{(c)} = S_1X_1^{(c)}$ for every $c \in \clients$. Concatenation along the second axis commutes with left multiplication by the same matrix. Therefore $\vglob{Y} = S_1\vglob{X}_1 = F^{\operatorname{ord}}(\widetilde A_1,\widetilde A_2)$.

This completes the case in which $Y$ is federated.

We now treat the case in which $Y$ is shared.

If no input is federated, then every input is shared. In that case the only possible primitive applications in the base language are shared-only applications of symbols from $\mathcal U$, $\mathcal B$, $\mathcal G$, $\{\alpha_j : \alpha \in \mathcal A,\ j \geq 1\}$, and $\mathcal P$. By Definitions~\ref{def:unary-semantics}, \ref{def:binary-elementwise-semantics}, \ref{def:aggregation-semantics}, and~\ref{def:permutation-semantics}, their shared semantics agrees with the corresponding ordinary operation of Definition~\ref{def:tensor_ops}. Therefore $Y = F^{\operatorname{ord}}(\widetilde A_1,\dots,\widetilde A_m)$.

Assume next that at least one input is federated. Then Proposition~\ref{prop:exposure-discipline} shows that exactly one of two cases occurs.

In the first case, $Y = \alpha_r(X_1)$ for some federated tensor $X_1$ with record axis $r$. By Definition~\ref{def:aggregation-semantics}, $Y = \alpha_r(\vglob{X}_1) = F^{\operatorname{ord}}(\widetilde A_1)$.

In the second case, $Y = \mathsf{MatMul}_{\Fed\Fed}(X_1,X_2)$ for federated matrices $X_1$ and $X_2$ with local shapes $X_1^{(c)} \in \RR^{a \times n_c}$ and $X_2^{(c)} \in \RR^{n_c \times b}$ for every $c \in \clients$. Let $N := \sum_{c\in\clients} n_c$. Then $\vglob{X}_1 \in \RR^{a \times N}$ and $\vglob{X}_2 \in \RR^{N \times b}$. By Definition~\ref{def:fed-fed-matmul-semantics}, $Y = \sum_{c\in\clients} X_1^{(c)}X_2^{(c)}$. The matrix $\vglob{X}_1$ is the horizontal concatenation of the blocks $X_1^{(c)}$, and the matrix $\vglob{X}_2$ is the vertical concatenation of the blocks $X_2^{(c)}$ in the same client order. Therefore block multiplication gives $\vglob{X}_1\vglob{X}_2 = \sum_{c\in\clients} X_1^{(c)}X_2^{(c)}$. Hence $Y = \vglob{X}_1\vglob{X}_2 = F^{\operatorname{ord}}(\widetilde A_1,\widetilde A_2)$.

These cases exhaust the primitive applications of symbols in $\Sigma_0$. Therefore, if $Y$ is a federated tensor, then $\vglob{Y} = F^{\operatorname{ord}}(\widetilde A_1,\dots,\widetilde A_m)$, and if $Y$ is a shared tensor, then $Y = F^{\operatorname{ord}}(\widetilde A_1,\dots,\widetilde A_m)$. This proves the proposition.
\end{proof}

\section{Proofs for shared-state factorizations}
\label{app:shared-state-proofs}

\begin{proof}[Proof of Theorem~\ref{thm:one-round-shared-state-factorization}]
Let $\sigma$ be induced by a one-round typed program of input type $\Fed_r(\mathbf d)$ and output shape $\mathbf t$. Let $g_1(x),\dots,g_q(x)$ be its shared-state expressions, and let
\[
y_1:\Sh(\mathbf s_1),\dots,y_q:\Sh(\mathbf s_q)
\vdash
h(y_1,\dots,y_q):\Sh(\mathbf t)
\]
be its shared-only decoder expression.

We first record a consequence of client-locality for expressions. Let $e(x)$ be a client-local federated expression with one free variable $x:\Fed_r(\mathbf d)$. We claim that, for each $n \in \mathbb N_0$, there is a deterministic local map $\widehat e_n$ defined on $\mathcal X_n(r,\mathbf d)$ such that, for every finite ordered federation $\clients$, every federated tensor $X=\{X^{(c)}\}_{c \in \clients}$ of type $\Fed_r(\mathbf d)$, and every client $c \in \clients$, one has $e(X)^{(c)} = \widehat e_{n_c}(X^{(c)})$. This claim follows by induction on the syntax of $e$. If $e(x)=x$, take $\widehat e_n$ to be the identity map on $\mathcal X_n(r,\mathbf d)$. For the induction step, suppose that $e$ is obtained by applying a primitive symbol $F$ to subexpressions that have already been given local realizations. Since $e$ is client-local and has federated type, every primitive subexpression of $e$ that receives a federated input has federated output. Proposition~\ref{prop:client-locality} then gives a deterministic client-local map for the final primitive application. Composing this map with the local realizations of the subexpressions gives the required map $\widehat e_n$. This proves the claim.

We now construct the shared-state factorization. For each component $i \in [q]$, define a tensor space $M_i := \RR^{\mathbf s_i}$ and a commutative monoid operation $\oplus_i$ on $M_i$ as follows.

If $g_i$ has the record-axis aggregation form, then by Definition~\ref{def:one-round-typed-program} there are a client-local federated expression $e_i(x)$, a type $\Fed_{r_i}(\mathbf u_i)$, and an aggregation schema $\alpha^{(i)} \in \mathcal A$ such that $g_i(x)=\alpha^{(i)}_{r_i}(e_i(x))$. The same definition supplies a commutative monoid $(\RR^{\mathbf s_i},\oplus_i,0_i)$ satisfying the required client-wise merge identity for $\alpha^{(i)}_{r_i}$.

If $g_i$ has the federated-federated matrix-product form, then $g_i(x)=\mathsf{MatMul}_{\Fed\Fed}(a_i(x),b_i(x))$ for client-local federated expressions $a_i(x)$ and $b_i(x)$. In this case $M_i=\RR^{m_i \times n_i}$, and we take $\oplus_i$ to be ordinary addition and $0_i$ to be the zero matrix.

Define $M := M_1 \times \ldots \times M_q$. Define $\oplus_M : M \times M \to M$ componentwise by
\[
(z_1,\dots,z_q) \oplus_M (z'_1,\dots,z'_q)
:=
(z_1 \oplus_1 z'_1,\dots,z_q \oplus_q z'_q),
\]
and define $0_M := (0_1,\dots,0_q)$. Since each $(M_i,\oplus_i,0_i)$ is a commutative monoid, $(M,\oplus_M,0_M)$ is a commutative monoid. Also, $M$ is a finite product of tensor spaces.

For each $n \in \mathbb N_0$, define the local encoder $\phi_n:\mathcal X_n(r,\mathbf d)\to M$ componentwise. Let $Z \in \mathcal X_n(r,\mathbf d)$.

If $g_i$ has the record-axis aggregation form, let $\widehat e_{i,n}$ be the local realization of $e_i$ constructed above. Define $\phi_{i,n}(Z):=\alpha^{(i)}_{r_i}(\widehat e_{i,n}(Z))$.

If $g_i$ has the federated-federated matrix-product form, let $\widehat a_{i,n}$ and $\widehat b_{i,n}$ be the local realizations of $a_i$ and $b_i$. Define $\phi_{i,n}(Z):=\widehat a_{i,n}(Z)\widehat b_{i,n}(Z)$.

Set $\phi_n(Z) := (\phi_{1,n}(Z),\dots,\phi_{q,n}(Z))$.

Define the decoder $\psi:M\to\RR^{\mathbf t}$ by evaluating the shared-only expression $h$ on its arguments. Thus, $\psi(z_1,\dots,z_q) := h(z_1,\dots,z_q)$.

It remains to prove the factorization identity. Let $\clients$ be a finite ordered federation, and let $X=\{X^{(c)}\}_{c\in\clients}$ be a federated tensor of type $\Fed_r(\mathbf d)$ with local record counts $\{n_c\}_{c\in\clients}$. We show that, for each $i \in [q]$,
\[
g_i(X)
=
\bigoplus_{c\in\clients} \phi_{i,n_c}(X^{(c)}).
\]

If $g_i$ has the record-axis aggregation form, then $g_i(X)=\alpha^{(i)}_{r_i}(e_i(X))$. By the local-realization claim, $e_i(X)^{(c)}=\widehat e_{i,n_c}(X^{(c)})$ for every $c \in \clients$. By the merge identity in Definition~\ref{def:one-round-typed-program},
\[
g_i(X)
=
\bigoplus_{c\in\clients}
\alpha^{(i)}_{r_i}(e_i(X)^{(c)})
=
\bigoplus_{c\in\clients}
\alpha^{(i)}_{r_i}(\widehat e_{i,n_c}(X^{(c)}))
=
\bigoplus_{c\in\clients} \phi_{i,n_c}(X^{(c)}).
\]

If $g_i$ has the federated-federated matrix-product form, then by Definition~\ref{def:fed-fed-matmul-semantics},
\[
g_i(X)
=
\sum_{c\in\clients} a_i(X)^{(c)} b_i(X)^{(c)}.
\]
By the local-realization claim applied to $a_i$ and $b_i$, one has $a_i(X)^{(c)}=\widehat a_{i,n_c}(X^{(c)})$ and $b_i(X)^{(c)}=\widehat b_{i,n_c}(X^{(c)})$ for every $c \in \clients$. Hence
\[
g_i(X)
=
\sum_{c\in\clients}
\widehat a_{i,n_c}(X^{(c)})\widehat b_{i,n_c}(X^{(c)})
=
\bigoplus_{c\in\clients} \phi_{i,n_c}(X^{(c)}),
\]
where the last merge is ordinary addition in $M_i$.

Therefore
\[
(g_1(X),\dots,g_q(X))
=
\bigoplus_{c\in\clients} \phi_{n_c}(X^{(c)})
\]
in $M$. Applying the decoder gives
\[
\sigma_{\clients}(X)
=
h(g_1(X),\dots,g_q(X))
=
\psi\left(\bigoplus_{c\in\clients} \phi_{n_c}(X^{(c)})\right).
\]
This is the identity required in Definition~\ref{def:shared-state-factorization}. Hence $\sigma$ admits a shared-state factorization.
\end{proof}

\begin{proof}[Proof of Theorem~\ref{thm:one-round-realization}]
Let $\sigma$ satisfy the assumptions of the theorem. We construct a one-round typed program inducing $\sigma$.

For each component $i \in [q]$, define a shared-state expression $g_i(x)$ as follows. If the component is aggregation-realized, set $g_i(x):=\alpha^{(i)}_{r_i}(e_i(x))$. If the component is matrix-product-realized, set $g_i(x):=\mathsf{MatMul}_{\Fed\Fed}(a_i(x),b_i(x))$. In both cases, $g_i(x)$ has shared type $\Sh(\mathbf s_i)$ by the typing rules of Appendix~\ref{app:language-spec}. Let the decoder expression be the shared-only expression $h$ from the assumptions of the theorem. These choices form a one-round typed program (see Definition~\ref{def:one-round-typed-program}).

It remains to show that this program induces $\sigma$. Let $\clients$ be a finite ordered federation, and let $X=\{X^{(c)}\}_{c\in\clients}$ be a federated tensor of type $\Fed_r(\mathbf d)$ with local record counts $\{n_c\}_{c\in\clients}$.

Fix $i \in [q]$. If the component is aggregation-realized, then $g_i(X)=\alpha^{(i)}_{r_i}(e_i(X))$. By the assumed merge identity for $\alpha^{(i)}_{r_i}$ and by the local realization of $e_i$, one has
\[
g_i(X)
=
\bigoplus_{c\in\clients}\alpha^{(i)}_{r_i}(e_i(X)^{(c)})
=
\bigoplus_{c\in\clients}\alpha^{(i)}_{r_i}(\widehat e_{i,n_c}(X^{(c)}))
=
\bigoplus_{c\in\clients}\phi_{i,n_c}(X^{(c)}).
\]

If the component is matrix-product-realized, then $g_i(X)=\mathsf{MatMul}_{\Fed\Fed}(a_i(X),b_i(X))$. By Definition~\ref{def:fed-fed-matmul-semantics} and by the local realizations of $a_i$ and $b_i$, one has
\[
g_i(X)
=
\sum_{c\in\clients} a_i(X)^{(c)}b_i(X)^{(c)}
=
\sum_{c\in\clients} \widehat a_{i,n_c}(X^{(c)})\widehat b_{i,n_c}(X^{(c)})
=
\bigoplus_{c\in\clients}\phi_{i,n_c}(X^{(c)}),
\]
where the last merge is ordinary addition.

Thus, for every $i \in [q]$, $g_i(X)=\bigoplus_{c\in\clients}\phi_{i,n_c}(X^{(c)})$. Since the merge on $M$ is componentwise, this gives
\[
(g_1(X),\dots,g_q(X))
=
\bigoplus_{c\in\clients}\phi_{n_c}(X^{(c)}).
\]
Using the decoder identity $\psi(z_1,\dots,z_q)=h(z_1,\dots,z_q)$, we obtain
\[
h(g_1(X),\dots,g_q(X))
=
\psi\left(\bigoplus_{c\in\clients}\phi_{n_c}(X^{(c)})\right).
\]
By the shared-state factorization identity of Definition~\ref{def:shared-state-factorization}, the right-hand side is $\sigma_{\clients}(X)$. Therefore the constructed one-round typed program induces $\sigma$. This proves the theorem.
\end{proof}

\begin{proof}[Proof of Theorem~\ref{thm:iterative-shared-state-factorization}]
Fix a round $t \in \{0,\dots,T-1\}$. Let $g_{t,1}(x,\theta),\dots,g_{t,q_t}(x,\theta)$ be the shared-state expressions of round $t$, and let
\[
y_1:\Sh(\mathbf s_{t,1}),\dots,y_{q_t}:\Sh(\mathbf s_{t,q_t}),\theta:\Sh(\boldsymbol{\tau}_t)
\vdash
h_t(y_1,\dots,y_{q_t},\theta):\Sh(\boldsymbol{\tau}_{t+1})
\]
be the round-$t$ shared-only decoder expression.

We first record a client-local realization claim for round-$t$ expressions. Let $e(x,\theta)$ be a client-local federated expression with typing judgment
\[
x:\Fed_r(\mathbf d),\theta:\Sh(\boldsymbol{\tau}_t)
\vdash
e(x,\theta):\Fed_{r'}(\mathbf u).
\]
For each fixed $\theta \in \RR^{\boldsymbol{\tau}_t}$ and each $n \in \mathbb N_0$, there is a deterministic local map $\widehat e_n(\cdot;\theta)$ on $\mathcal X_n(r,\mathbf d)$ such that, for every finite ordered federation $\clients$, every federated tensor $X=\{X^{(c)}\}_{c\in\clients}$ of type $\Fed_r(\mathbf d)$, and every client $c \in \clients$, one has $e(X,\theta)^{(c)}=\widehat e_{n_c}(X^{(c)};\theta)$. The proof is by induction on the syntax of $e$. If $e(x,\theta)=x$, the claim holds with the identity map. In the induction step, $e$ is obtained by applying a primitive symbol to subexpressions that already have local realizations. Since $e$ is client-local, every primitive subexpression that receives a federated input has federated output. Proposition~\ref{prop:client-locality} gives a deterministic client-local map for that primitive application, with $\theta$ treated as a shared input. Composing this map with the local realizations of the subexpressions gives $\widehat e_n(\cdot;\theta)$.

We now construct the round-$t$ shared-state factorization. For each $i \in [q_t]$, define $M_{t,i}:=\RR^{\mathbf s_{t,i}}$ and define a commutative monoid operation $\oplus_{t,i}$ on $M_{t,i}$ as follows.

If $g_{t,i}$ has the aggregation form, then Definition~\ref{def:iterative-typed-program} gives a client-local federated expression $e_{t,i}(x,\theta)$, a type $\Fed_{r_{t,i}}(\mathbf u_{t,i})$, and an aggregation schema $\alpha^{(t,i)} \in \mathcal A$ such that $g_{t,i}(x,\theta)=\alpha^{(t,i)}_{r_{t,i}}(e_{t,i}(x,\theta))$. The same definition supplies a commutative monoid $(\RR^{\mathbf s_{t,i}},\oplus_{t,i},0_{t,i})$ satisfying the client-wise merge identity for $\alpha^{(t,i)}_{r_{t,i}}$.

If $g_{t,i}$ has the matrix-product form, then $g_{t,i}(x,\theta)=\mathsf{MatMul}_{\Fed\Fed}(a_{t,i}(x,\theta),b_{t,i}(x,\theta))$ for client-local federated expressions $a_{t,i}$ and $b_{t,i}$. In this case, $M_{t,i}=\RR^{m_{t,i}\times n_{t,i}}$, and we take $\oplus_{t,i}$ to be ordinary addition and $0_{t,i}$ to be the zero matrix.

Define $M_t:=M_{t,1}\times\cdots\times M_{t,q_t}$. Define $\oplus_t:M_t\times M_t\to M_t$ componentwise by
\[
(z_1,\dots,z_{q_t})\oplus_t(z'_1,\dots,z'_{q_t})
:=
(z_1\oplus_{t,1}z'_1,\dots,z_{q_t}\oplus_{t,q_t}z'_{q_t}),
\]
and define $0_t:=(0_{t,1},\dots,0_{t,q_t})$. Since each component is a commutative monoid, $(M_t,\oplus_t,0_t)$ is a commutative monoid.

For each $n \in \mathbb N_0$ and each $\theta \in \RR^{\boldsymbol{\tau}_t}$, define the local encoder $\phi_{t,n}(\cdot;\theta):\mathcal X_n(r,\mathbf d)\to M_t$ componentwise. Let $Z \in \mathcal X_n(r,\mathbf d)$.

If $g_{t,i}$ has the aggregation form, let $\widehat e_{t,i,n}(\cdot;\theta)$ be the local realization of $e_{t,i}(x,\theta)$ constructed above. Define
\[
\phi_{t,i,n}(Z;\theta)
:=
\alpha^{(t,i)}_{r_{t,i}}(\widehat e_{t,i,n}(Z;\theta)).
\]

If $g_{t,i}$ has the matrix-product form, let $\widehat a_{t,i,n}(\cdot;\theta)$ and $\widehat b_{t,i,n}(\cdot;\theta)$ be the local realizations of $a_{t,i}(x,\theta)$ and $b_{t,i}(x,\theta)$. Define
\[
\phi_{t,i,n}(Z;\theta)
:=
\widehat a_{t,i,n}(Z;\theta)\widehat b_{t,i,n}(Z;\theta).
\]

Set
\[
\phi_{t,n}(Z;\theta)
:=
(\phi_{t,1,n}(Z;\theta),\dots,\phi_{t,q_t,n}(Z;\theta)).
\]

Define the decoder $\psi_t:M_t\times\RR^{\boldsymbol{\tau}_t}\to\RR^{\boldsymbol{\tau}_{t+1}}$ by
\[
\psi_t((z_1,\dots,z_{q_t}),\theta)
:=
h_t(z_1,\dots,z_{q_t},\theta).
\]

It remains to prove the factorization identity. Let $\clients$ be a finite ordered federation, let $X=\{X^{(c)}\}_{c\in\clients}$ be a federated tensor of type $\Fed_r(\mathbf d)$ with local record counts $\{n_c\}_{c\in\clients}$, and let $\theta \in \RR^{\boldsymbol{\tau}_t}$.

We show that, for each $i \in [q_t]$,
\[
g_{t,i}(X,\theta)
=
\bigoplus_{c\in\clients}\phi_{t,i,n_c}(X^{(c)};\theta).
\]

If $g_{t,i}$ has the aggregation form, then $g_{t,i}(X,\theta)=\alpha^{(t,i)}_{r_{t,i}}(e_{t,i}(X,\theta))$. By the local-realization claim, $e_{t,i}(X,\theta)^{(c)}=\widehat e_{t,i,n_c}(X^{(c)};\theta)$ for every $c \in \clients$. By the merge identity in Definition~\ref{def:iterative-typed-program},
\[
g_{t,i}(X,\theta)
=
\bigoplus_{c\in\clients}
\alpha^{(t,i)}_{r_{t,i}}(e_{t,i}(X,\theta)^{(c)})
=
\bigoplus_{c\in\clients}
\alpha^{(t,i)}_{r_{t,i}}(\widehat e_{t,i,n_c}(X^{(c)};\theta))
=
\bigoplus_{c\in\clients}\phi_{t,i,n_c}(X^{(c)};\theta).
\]

If $g_{t,i}$ has the matrix-product form, then by Definition~\ref{def:fed-fed-matmul-semantics},
\[
g_{t,i}(X,\theta)
=
\sum_{c\in\clients} a_{t,i}(X,\theta)^{(c)} b_{t,i}(X,\theta)^{(c)}.
\]
By the local-realization claim applied to $a_{t,i}$ and $b_{t,i}$, one has $a_{t,i}(X,\theta)^{(c)}=\widehat a_{t,i,n_c}(X^{(c)};\theta)$ and $b_{t,i}(X,\theta)^{(c)}=\widehat b_{t,i,n_c}(X^{(c)};\theta)$ for every $c \in \clients$. Hence
\[
g_{t,i}(X,\theta)
=
\sum_{c\in\clients}
\widehat a_{t,i,n_c}(X^{(c)};\theta)\widehat b_{t,i,n_c}(X^{(c)};\theta)
=
\bigoplus_{c\in\clients}\phi_{t,i,n_c}(X^{(c)};\theta),
\]
where the last merge is ordinary addition in $M_{t,i}$.

Therefore
\[
(g_{t,1}(X,\theta),\dots,g_{t,q_t}(X,\theta))
=
\bigoplus_{c\in\clients}\phi_{t,n_c}(X^{(c)};\theta)
\]
in $M_t$. Applying $\psi_t$ gives
\[
U_{t,\clients}(X,\theta)
=
h_t(g_{t,1}(X,\theta),\dots,g_{t,q_t}(X,\theta),\theta)
=
\psi_t\left(\bigoplus_{c\in\clients}\phi_{t,n_c}(X^{(c)};\theta),\theta\right).
\]
This proves the theorem.
\end{proof}

\section{Privacy integration}
\label{app:privacy-integration}

The shared-state factorization $\sigma_{\clients}(X)=\psi\left(\bigoplus_{c \in \clients} \phi_{n_c}(X^{(c)})\right)$ identifies three integration points for privacy mechanisms: the local encoder $\phi$, the merge operation $\bigoplus$, and the decoder $\psi$. The mechanisms discussed here are not part of the deterministic base signature $\Sigma_0$. They define randomized or cryptographic realizations of the same factorization interface.

The following theorem formalizes the compatibility of shared-state factorizations with differential privacy. It is a lifting result; once the deterministic computation is expressed through the shared-state query, privacy mechanisms can be applied either to the local messages, to the merged shared state, or to the decoded output.

\begin{theorem}[Privacy lifting for shared-state factorizations]
\label{thm:privacy-lifting}
Let $\sigma$ admit a shared-state factorization with state space $M$, merge operation $\oplus_M$, local encoders $\{\phi_n\}_{n\in\mathbb N_0}$, and decoder $\psi$. For a finite ordered federation $\clients$, define the deterministic shared-state query $Q_{\clients}(X):=\bigoplus_{c\in\clients}\phi_{n_c}(X^{(c)})$.

Let $\sim$ be an adjacency relation on federated inputs of the same type. Equip $M$ with a metric $d_M$, and suppose that
\[
\Delta_Q
:=
\sup_{X\sim X'} d_M(Q_{\clients}(X),Q_{\clients}(X'))
<
\infty .
\]
Let $\mathcal R_M:M\leadsto M$ be a randomized mechanism such that, for all $m,m'\in M$ with $d_M(m,m')\leq \Delta_Q$ and all measurable $E\subseteq M$,
\[
\Pr[\mathcal R_M(m)\in E]
\leq
e^\varepsilon \Pr[\mathcal R_M(m')\in E]+\delta .
\]
Then the randomized release $\mathcal M_{\clients}(X):=\psi(\mathcal R_M(Q_{\clients}(X)))$ is $(\varepsilon,\delta)$-differentially private with respect to $\sim$.

Moreover, let $\sim_n$ be a local adjacency relation on $\mathcal X_n(r,\mathbf d)$ for each $n\in\mathbb N_0$. Suppose that, for each $n$, $\mathcal R_n:\mathcal X_n(r,\mathbf d)\leadsto M$ is an $\varepsilon$-local-DP randomizer with respect to $\sim_n$. Define
\[
\mathcal M_{\clients}^{\mathrm{loc}}(X)
:=
\psi\left(\bigoplus_{c\in\clients}\mathcal R_{n_c}(X^{(c)})\right),
\]
where the randomizers are applied independently across clients. Then $\mathcal M_{\clients}^{\mathrm{loc}}$ is $\varepsilon$-private with respect to changing one client contribution $X^{(c)}$ to a locally adjacent contribution under $\sim_{n_c}$, with all other client tensors fixed.
\end{theorem}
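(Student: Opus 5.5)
The plan treats the two claims separately. Each one is a standard differential-privacy fact composed with the factorization identity of Definition~\ref{def:shared-state-factorization}.

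\emph{Central part.} Fix adjacent inputs $X\sim X'$ on the same federation $\clients$ and set $m:=Q_{\clients}(X)$ and $m':=Q_{\clients}(X')$. By the definition of $\Delta_Q$ we have $d_M(m,m')\le\Delta_Q$. The hypothesis on $\mathcal R_M$ then gives
\[
\Pr[\mathcal R_M(m)\in E]\le e^\varepsilon\Pr[\mathcal R_M(m')\in E]+\delta
\]
for every measurable $E\subseteq M$. To transfer this to the release, I will invoke post-processing. Take a measurable set $F$ in the output space $\RR^{\mathbf t}$ and set $E:=\psi^{-1}(F)$. Then
\[
\Pr[\mathcal M_{\clients}(X)\in F]=\Pr[\mathcal R_M(m)\in\psi^{-1}(F)],
\]
and the inequality carries over unchanged. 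The one technical point is that $\psi$ must be measurable so that $\psi^{-1}(F)$ is measurable. I will state this as a standing assumption. It is automatic when $\psi$ is given by a shared-only expression built from Borel-measurable primitives. Alternatively, one can work with the pushforward $\sigma$-algebra on the output space. No further structure of the factorization is used: the deterministic query $Q_{\clients}$ is simply the object that enters the mechanism.

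\emph{Local part.} Fix a client $c$ and two inputs $X,X'$ that agree off $c$ and satisfy $X^{(c)}\sim_{n_c}X'^{(c)}$. The record count $n_c$ is the same for both, since $\sim_{n_c}$ lives on $\mathcal X_{n_c}(r,\mathbf d)$. Independence lets me write the joint law of the messages $(\mathcal R_{n_{c'}}(X^{(c')}))_{c'\in\clients}$ as a product measure. The $c'\neq c$ factors are identical for $X$ and $X'$. For the $c$ factor, the $\varepsilon$-local-DP guarantee holds pointwise on measurable sets. Integrating it against the common product of the other factors yields $\varepsilon$-DP for the whole message tuple, via Fubini applied to the indicator of a measurable set in $M^{\clients}$. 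The released quantity is the map
\[
(m_{c'})_{c'\in\clients}\mapsto\psi\Bigl(\bigoplus_{c'}m_{c'}\Bigr),
\]
which is deterministic and independent of the data. Post-processing therefore finishes the proof, assuming $\oplus_M$ and $\psi$ are measurable; for componentwise addition this holds by continuity.

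I expect the main obstacle to be the Fubini/product step together with the measurability bookkeeping. The privacy inequalities themselves are routine. I will therefore state measurability of $\oplus_M$ and $\psi$ explicitly as hypotheses implicit in the theorem.
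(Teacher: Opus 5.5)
Your proposal is correct and follows the paper's proof essentially step for step. For the central part, both use the sensitivity bound $d_M(Q_{\clients}(X),Q_{\clients}(X'))\le\Delta_Q$, the mechanism hypothesis, and post-processing through $\psi$; for the local part, both use the fact that the other clients' messages have identical laws, the local-DP bound at the changed client, and post-processing through the merge and decoder. Your product-measure/Fubini step and the explicit measurability hypotheses on $\psi$ and $\oplus_M$ only make rigorous what the paper compresses into the phrase ``post-processing.''
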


\begin{proof}
We first prove the central-DP statement. Let $X\sim X'$ be adjacent federated inputs. By definition of $\Delta_Q$, one has $d_M(Q_{\clients}(X),Q_{\clients}(X'))\leq \Delta_Q$. Therefore, for every measurable $E\subseteq M$,
\[
\Pr[\mathcal R_M(Q_{\clients}(X))\in E]
\leq
e^\varepsilon
\Pr[\mathcal R_M(Q_{\clients}(X'))\in E]
+
\delta .
\]
The released value $\mathcal M_{\clients}(X)=\psi(\mathcal R_M(Q_{\clients}(X)))$ is a post-processing of $\mathcal R_M(Q_{\clients}(X))$. Differential privacy is closed under post-processing. Hence $\mathcal M_{\clients}$ is $(\varepsilon,\delta)$-differentially private with respect to $\sim$.

We now prove the local-DP statement. Fix a client $c_0\in\clients$. Let $X$ and $X'$ be federated inputs that agree on every client $c\neq c_0$, and suppose that $X^{(c_0)}\sim_{n_{c_0}} X'^{(c_0)}$. For clients $c\neq c_0$, the random messages $\mathcal R_{n_c}(X^{(c)})$ and $\mathcal R_{n_c}(X'^{(c)})$ have the same distribution. For client $c_0$, the $\varepsilon$-local-DP assumption gives the required likelihood-ratio bound for the randomized message. The merge operation and the decoder are post-processing of the collection of randomized client messages. Therefore the final released value $\mathcal M_{\clients}^{\mathrm{loc}}(X)$ satisfies the same $\varepsilon$ privacy bound for a change in the local contribution of client $c_0$. This proves the theorem.
\end{proof}

Let $\Pi_{\oplus}$ be an SMPC protocol that correctly realizes the finite merge operation induced by the commutative monoid $(M,\oplus_M,0_M)$~\citep{yao1982protocols,goldreich2019play,evans2018pragmatic,smpc}. Replacing the merge $\bigoplus_{c\in\clients}$ with $\Pi_{\oplus}$ leaves the encoder $\phi$ and decoder $\psi$ unchanged, and computes the same merged state $m^\star=\bigoplus_{c\in\clients}\phi_{n_c}(X^{(c)})$. Under the security assumptions of the chosen protocol, the individual client summaries $\phi_{n_c}(X^{(c)})$ are hidden from any adversary not allowed by the protocol security model, except for information implied by the released output.

Theorem~\ref{thm:privacy-lifting} also clarifies how local differential privacy can be integrated. If $\mathcal R_{\mathrm{LDP}}:M\leadsto M$ is an $\varepsilon$-local randomizer on the state space $M$~\citep{kasiviswanathan2011can}, one may define randomized local messages by applying $\mathcal R_{\mathrm{LDP}}$ to the encoded client summaries. The same merge operation $\oplus_M$ can be applied because the randomized messages still lie in $M$. The commutative monoid structure is not changed by this randomization, but an arbitrary local randomizer does not need to preserve the original deterministic factorization identity. It instead defines a randomized private analogue of the original shared-state computation.

Central differential privacy can be integrated either at the shared-state level or at the final-output level~\citep{dwork2006calibrating,dwork2014algorithmic}. At the shared-state level, one applies a mechanism $\mathcal R_{\mathrm{CDP}}^M$ to the merged state $m^\star$ and releases or decodes $\psi(\mathcal R_{\mathrm{CDP}}^M(m^\star))$. At the final-output level, one applies a mechanism $\mathcal R_{\mathrm{CDP}}^{\mathrm{out}}$ to $\psi(m^\star)$. In both cases, the privacy guarantee applies to the quantity that is released. If the unnoised shared state $m^\star$ is revealed, then subsequent randomized post-processing cannot make that earlier release private.

These results show formal compatibility between shared-state factorizations and standard privacy mechanisms. An analysis of privacy parameters, sensitivities, cryptographic assumptions, and protocol overhead is beyond the scope of this paper.

\section{Closure under shared-only post-processing}
\label{app:shared-postprocessing}

\begin{corollary}[Closure under shared-only post-processing]
\label{cor:one-round-shared-postprocessing}
Let $\sigma_1,\dots,\sigma_\ell$ be shared-output computations with common input type $\Fed_r(\mathbf d)$. Assume that each $\sigma_j$ is induced by a one-round typed program and has output shape $\mathbf t_j$. Let $z_1:\Sh(\mathbf t_1),\dots,z_\ell:\Sh(\mathbf t_\ell) \vdash H(z_1,\dots,z_\ell):\Sh(\mathbf t)$ be a shared-only expression. Define $\sigma_{\clients}(X) := H(\sigma_{1,\clients}(X),\dots,\sigma_{\ell,\clients}(X))$ for every finite ordered federation $\clients$ and every federated input $X$ of type $\Fed_r(\mathbf d)$. Then $\sigma$ is induced by a one-round typed program. In particular, $\sigma$ admits a shared-state factorization.
\end{corollary}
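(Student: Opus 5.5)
The plan is to concatenate the one-round programs of $\sigma_1,\dots,\sigma_\ell$ into a single program and to compose their decoders with $H$ by syntactic substitution. Since each $\sigma_j$ is induced by a one-round typed program, we obtain shared-state expressions $g_{j,1}(x),\dots,g_{j,q_j}(x)$ and a shared-only decoder
\[
y_{j,1}:\Sh(\mathbf s_{j,1}),\dots,y_{j,q_j}:\Sh(\mathbf s_{j,q_j}) \vdash h_j(y_{j,1},\dots,y_{j,q_j}):\Sh(\mathbf t_j),
\]
with $\sigma_{j,\clients}(X)=h_j(g_{j,1}(X),\dots,g_{j,q_j}(X))$. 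After renaming bound variables, the decoder variables $y_{j,i}$ are pairwise distinct across $j$ and distinct from the variables $z_1,\dots,z_\ell$ of $H$.

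The new program uses the whole list $(g_{j,i})_{j\in[\ell],\,i\in[q_j]}$, of length $q:=\sum_j q_j\geq 1$, as its shared-state expressions. Each $g_{j,i}$ keeps its own form, aggregation or $\mathsf{MatMul}_{\Fed\Fed}$, together with its client-local subexpressions and its commutative monoid. All of them are typed in the same context $x:\Fed_r(\mathbf d)$, so the conditions of Definition~\ref{def:one-round-typed-program} hold verbatim. The new decoder is the substitution
\[
h := H[z_j \mapsto h_j(y_{j,1},\dots,y_{j,q_j})]_{j\in[\ell]},
\]
obtained by replacing each occurrence of $z_j$ in $H$ by the expression $h_j$.

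The step I expect to need the most care is a substitution lemma. It states: if $\Delta, z:\Sh(\mathbf u)\vdash H:T$ and $\Delta'\vdash k:\Sh(\mathbf u)$, then $\Delta\cup\Delta'\vdash H[z\mapsto k]:T$. It also states that the semantics commute, i.e.\ evaluating $H[z\mapsto k]$ equals evaluating $H$ with $z$ bound to the value of $k$. Both parts go by induction on the syntax of $H$ (Definition~\ref{def:expressions}). The variable case is immediate. In the primitive case, the typing rules of Appendix~\ref{app:language-spec} depend only on the types of the arguments, and substitution preserves those types by the induction hypothesis. Semantics is compositional by the definitions of Subsections~\ref{subsec:primitive-semantics}--\ref{subsec:matrix-semantics}. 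Side conditions such as broadcast compatibility are fine here because only shared types occur, and these record full shapes.

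The resulting context contains only the shared variables $y_{j,i}:\Sh(\mathbf s_{j,i})$, so $h$ is shared-only with output type $\Sh(\mathbf t)$. A variable $z_j$ that does not occur in $H$ simply produces unused context entries, which is harmless.

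Finally, evaluating the new program on $X$ gives $h$ applied to the values $g_{j,i}(X)$. By the substitution lemma this equals $H(h_1(g_{1,\cdot}(X)),\dots,h_\ell(g_{\ell,\cdot}(X)))=H(\sigma_{1,\clients}(X),\dots,\sigma_{\ell,\clients}(X))=\sigma_\clients(X)$. The factorization claim then follows from Theorem~\ref{thm:one-round-shared-state-factorization}.
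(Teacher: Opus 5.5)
Your proposal is correct and follows the paper's proof. You pool all shared-state expressions $g_{j,a}$ into one program, use the composite decoder $H(h_1(y_{1,1},\dots,y_{1,q_1}),\dots,h_\ell(y_{\ell,1},\dots,y_{\ell,q_\ell}))$, and conclude the factorization from Theorem~\ref{thm:one-round-shared-state-factorization}; your variable renaming and substitution lemma just make explicit the typing and semantic composition steps that the paper takes for granted.
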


\begin{proof}[Proof of Corollary~\ref{cor:one-round-shared-postprocessing}]
For each $j \in [\ell]$, let the one-round typed program inducing $\sigma_j$ have shared-state expressions $g_{j,1}(x),\dots,g_{j,q_j}(x)$ with output shapes $\mathbf s_{j,1},\dots,\mathbf s_{j,q_j}$, and let its shared-only decoder be
\[
y_{j,1}:\Sh(\mathbf s_{j,1}),\dots,y_{j,q_j}:\Sh(\mathbf s_{j,q_j})
\vdash
h_j(y_{j,1},\dots,y_{j,q_j}):\Sh(\mathbf t_j).
\]
Construct a new one-round typed program by taking all shared-state expressions from all the programs. Thus, the shared-state expressions of the new program are $\{g_{j,a}(x) : j \in [\ell],\ a \in [q_j]\}$. Each of these expressions has one of the two forms allowed in Definition~\ref{def:one-round-typed-program}, because it came from one of the original one-round typed programs.

Define the new shared-only decoder by
\[
\widetilde h
:=
H\left(
h_1(y_{1,1},\dots,y_{1,q_1}),
\dots,
h_\ell(y_{\ell,1},\dots,y_{\ell,q_\ell})
\right).
\]
This is a shared-only expression because each $h_j$ is shared-only and $H$ is shared-only.

Let $\clients$ be a finite ordered federation and let $X$ be a federated input of type $\Fed_r(\mathbf d)$. The value induced by the constructed program is
\[
\widetilde h\bigl(g_{1,1}(X),\dots,g_{\ell,q_\ell}(X)\bigr)
=
H\left(
h_1(g_{1,1}(X),\dots,g_{1,q_1}(X)),
\dots,
h_\ell(g_{\ell,1}(X),\dots,g_{\ell,q_\ell}(X))
\right).
\]
Since the $j$-th original program induces $\sigma_j$, this equals
\[
H(\sigma_{1,\clients}(X),\dots,\sigma_{\ell,\clients}(X))
=
\sigma_{\clients}(X).
\]
Therefore $\sigma$ is induced by a one-round typed program. The final claim follows from Theorem~\ref{thm:one-round-shared-state-factorization}.
\end{proof}

The corollary lets the language express composite statistics without introducing new shared-state primitives. For example, a mean can be obtained from a sum and a count by shared-only division, and a variance can be obtained from a sum, a sum of squares, and a count by shared-only algebra.

\section{Scope of the base iterative class}
\label{app:round-wise-shared-memory}

The base iterative class carries information across rounds only through shared tensors. It covers algorithms whose server-side state is updated from client-local tensor computations and shared aggregations.

\begin{proposition}[Round-wise shared memory]
\label{prop:round-wise-shared-memory}
Let an iterative typed program be given as in Definition~\ref{def:iterative-typed-program}. For every federated input $X$ and every round $t \in \{0,\dots,T\}$, the iterate $\theta_t$ is a shared tensor of type $\Sh(\boldsymbol{\tau}_t)$.

Moreover, let $e(x,\theta)$ be any client-local federated expression appearing in round $t$ of the program, with typing judgment $x:\Fed_r(\mathbf d),\theta:\Sh(\boldsymbol{\tau}_t) \vdash e(x,\theta):\Fed_{r'}(\mathbf u)$. For each fixed $\theta \in \RR^{\boldsymbol{\tau}_t}$ and each $n \in \mathbb N_0$, there exists a deterministic map $\widehat e_n(\cdot;\theta)$ such that, for every federated input $X=\{X^{(c)}\}_{c\in\clients}$ and every client $c \in \clients$ with local record count $n_c$, one has $e(X,\theta)^{(c)}=\widehat e_{n_c}(X^{(c)};\theta)$.
\end{proposition}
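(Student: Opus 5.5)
The plan has two parts, matching the two claims of Proposition~\ref{prop:round-wise-shared-memory}: first an induction on the round index $t$ for the type of the iterate, then a structural induction on the syntax of client-local expressions for the local realization.

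\textbf{First claim: $\theta_t$ is shared of type $\Sh(\boldsymbol{\tau}_t)$.}

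The base case $t=0$ is immediate. Definition~\ref{def:iterative-typed-program} gives $\theta_0 \in \RR^{\boldsymbol{\tau}_0}$ as a fixed initial shared state, independent of $X$, so it is available at every client.

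For the inductive step, assume $\theta_t$ is a shared tensor of type $\Sh(\boldsymbol{\tau}_t)$.
\begin{itemize}
\item Each $g_{t,i}(X,\theta_t)$ is a record-axis aggregation or a federated-federated matrix product. By the typing rules in Definitions~\ref{def:typing-aggregation} and~\ref{def:typing-fed-fed-matmul}, it has shared type $\Sh(\mathbf s_{t,i})$. Proposition~\ref{prop:exposure-discipline} then says its value is a single globally available tensor.
\item The decoder $h_t$ is shared-only, so every free variable of $h_t$ has shared type. I will show by a short induction on $h_t$ that every subexpression has shared type, with value computed by the ordinary shared semantics. By Definitions~\ref{def:unary-semantics}--\ref{def:permutation-semantics}, shared inputs to non-matrix primitives yield shared outputs. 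No base matrix-product primitive accepts two shared inputs, and any shared-only linear-algebra extensions produce shared outputs by construction.
\item Substituting the shared values $g_{t,i}(X,\theta_t)$ and $\theta_t$ into $h_t$ therefore gives a shared tensor of type $\Sh(\boldsymbol{\tau}_{t+1})$, namely $\theta_{t+1}$.
\end{itemize}

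\textbf{Second claim: local realization.}

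This reuses the realization claim already proved inside the proof of Theorem~\ref{thm:iterative-shared-state-factorization}. I will state it as a standalone induction on the syntax of $e$, with $\theta$ treated as a fixed shared input.

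Define $\widehat e_n(\cdot;\theta)$ recursively:
\begin{itemize}
\item For the variable $x$, take the identity on $\mathcal X_n(r,\mathbf d)$.
\item For a shared-typed subexpression, which cannot contain $x$ in federated position because of client-locality, take the constant map given by its shared value at $\theta$. This value is determined by the ordinary semantics, since the subexpression's free variables are shared and only $\theta$ is available.
\item For a primitive application with at least one federated argument, compose the map $\Phi_{F,c}$ from Proposition~\ref{prop:client-locality} with the realizations of the arguments. By that proposition, $\Phi_{F,c}$ depends only on the symbol, the operand positions and the local shapes, and these shapes are functions of $n$ alone.
\end{itemize}

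The base case and the inductive step of the identity $e(X,\theta)^{(c)}=\widehat e_{n_c}(X^{(c)};\theta)$ follow by unfolding the semantics.

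\textbf{Main obstacle.}

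The delicate point is the second claim's assertion that $\widehat e_n$ is independent of $c$ and of the federation $\clients$. Two things must be checked.
\begin{itemize}
\item \emph{Local shapes depend only on $n$.} This follows by induction: each federated-output primitive changes only non-record dimensions fixed by the type, and keeps the local record count $n$.
\item \emph{Client-locality excludes shared intermediates computed from $x$.} Such intermediates would depend on other clients' data. This is exactly what the definition of client-local rules out: every primitive subexpression receiving a federated argument has federated type. Hence any shared-typed subexpression has no federated free variable, and because $x$ is the only federated variable, it depends only on $\theta$.
\end{itemize}
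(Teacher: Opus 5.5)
Your proposal is correct and follows the same route as the paper. The first claim goes by induction on $t$, using that each $g_{t,i}$ has shared type and the shared-only decoder $h_t$ returns a shared tensor. The second goes by structural induction on $e$, composing the client-local maps of Proposition~\ref{prop:client-locality} with the realizations of the subexpressions. You also treat shared-typed subexpressions explicitly as constants depending only on $\theta$, and you check that local shapes are determined by $n$ alone; the paper's proof leaves both points implicit.
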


\begin{proof}[Proof of Proposition~\ref{prop:round-wise-shared-memory}]
We first prove that $\theta_t$ is shared for every $t \in \{0,\dots,T\}$. By Definition~\ref{def:iterative-typed-program}, the initial state satisfies $\theta_0 \in \RR^{\boldsymbol{\tau}_0}$ and has type $\Sh(\boldsymbol{\tau}_0)$. Suppose that $\theta_t$ is shared of type $\Sh(\boldsymbol{\tau}_t)$. For each $i \in [q_t]$, the expression $g_{t,i}(X,\theta_t)$ is a shared-state expression and therefore has shared type $\Sh(\mathbf s_{t,i})$. The decoder has typing judgment
\[
y_1:\Sh(\mathbf s_{t,1}),\dots,y_{q_t}:\Sh(\mathbf s_{t,q_t}),\theta:\Sh(\boldsymbol{\tau}_t)
\vdash
h_t(y_1,\dots,y_{q_t},\theta):\Sh(\boldsymbol{\tau}_{t+1}).
\]
Thus $\theta_{t+1}=h_t(g_{t,1}(X,\theta_t),\dots,g_{t,q_t}(X,\theta_t),\theta_t)$ is shared of type $\Sh(\boldsymbol{\tau}_{t+1})$. The claim follows by induction on $t$.

We now prove the local-realization statement. Fix a round $t$, fix $\theta \in \RR^{\boldsymbol{\tau}_t}$, and let $e(x,\theta)$ be a client-local federated expression with typing judgment
\[
x:\Fed_r(\mathbf d),\theta:\Sh(\boldsymbol{\tau}_t)
\vdash
e(x,\theta):\Fed_{r'}(\mathbf u).
\]
We prove by structural induction on $e$ that, for each $n \in \mathbb N_0$, there exists a deterministic map $\widehat e_n(\cdot;\theta)$ with the stated property.

If $e(x,\theta)=x$, take $\widehat e_n(\cdot;\theta)$ to be the identity map on $\mathcal X_n(r,\mathbf d)$. Then $e(X,\theta)^{(c)}=X^{(c)}=\widehat e_{n_c}(X^{(c)};\theta)$.

For the induction step, suppose that $e$ is obtained by applying a primitive symbol $F$ to subexpressions $e_1,\dots,e_m$ and shared arguments formed from $\theta$ and any fixed shared constants in the program. Since $e$ is client-local and has federated output, the final primitive application has federated output whenever it receives a federated input. By the induction hypothesis, every federated subexpression $e_j$ has deterministic local realizations $\widehat e_{j,n}(\cdot;\theta)$. Proposition~\ref{prop:client-locality} gives a deterministic map for the local output of the final primitive application as a function of the local values of its federated inputs and its shared inputs. Composing this deterministic map with the local realizations $\widehat e_{j,n}(\cdot;\theta)$ gives a deterministic map $\widehat e_n(\cdot;\theta)$ such that $e(X,\theta)^{(c)}=\widehat e_{n_c}(X^{(c)};\theta)$ for every federated input $X$ and every client $c \in \clients$.

This completes the induction and proves the proposition.
\end{proof}

Proposition~\ref{prop:round-wise-shared-memory} gives a formal boundary of the present language. A client may compute arbitrary client-local tensor expressions inside a round, but the only information that persists to later rounds is the shared iterate $\theta_t$. Algorithms with persistent private client variables require an extension of the type system, for example a separate local-state sort.

\section{Proofs for learning updates}
\label{app:learning-update-proofs}

\begin{proof}[Proof of Theorem~\ref{thm:gradient-expressibility}]
Let $(\ell,\gamma)$ be a representably differentiable loss. Fix a finite ordered federation $\clients$ and a federated input $X=\{X^{(c)}\}_{c\in\clients}$ of type $\Fed_r(\mathbf d)$, with local record counts $\{n_c\}_{c\in\clients}$.

By Definition~\ref{def:representably-differentiable-loss}, for each client $c \in \clients$, each record index $a \in [n_c]$, and each $\theta \in \RR^{\boldsymbol{\tau}}$, the map $\theta \mapsto \widehat \ell_{n_c}(X^{(c)};\theta)[a]$ is differentiable and satisfies
\[
\nabla_\theta \widehat \ell_{n_c}(X^{(c)};\theta)[a]
=
\widehat \gamma_{n_c}(X^{(c)};\theta)[a,\cdot].
\]
The empirical objective $L_{\clients}(X,\theta)$ is a finite sum of these differentiable scalar functions. Therefore it is differentiable, and the gradient of a finite sum is the sum of the gradients. Hence
\[
\nabla_\theta L_{\clients}(X,\theta)
=
\sum_{c\in\clients}\sum_{a=1}^{n_c}
\nabla_\theta \widehat \ell_{n_c}(X^{(c)};\theta)[a]
=
\sum_{c\in\clients}\sum_{a=1}^{n_c}
\widehat \gamma_{n_c}(X^{(c)};\theta)[a,\cdot]
=
G_{\clients}(X,\theta).
\]

It remains to prove the expressibility claim. By the typing assumption in Definition~\ref{def:representably-differentiable-loss},
\[
x:\Fed_r(\mathbf d),\theta:\Sh(\boldsymbol{\tau})
\vdash
\gamma(x,\theta):\Fed_1(\boldsymbol{\tau}).
\]
Since $\operatorname{Sum} \in \mathcal A$, the aggregation typing rule gives
\[
x:\Fed_r(\mathbf d),\theta:\Sh(\boldsymbol{\tau})
\vdash
\operatorname{Sum}_1(\gamma(x,\theta)):\Sh(\boldsymbol{\tau}).
\]
For any $X$ and $\theta$, Definition~\ref{def:aggregation-semantics} gives
\[
\operatorname{Sum}_1(\gamma(X,\theta))
=
\operatorname{Sum}_1(\vglob{\gamma(X,\theta)}).
\]
By the local realization of the client-local expression $\gamma$, the local tensor $\gamma(X,\theta)^{(c)}$ is equal to $\widehat \gamma_{n_c}(X^{(c)};\theta)$ for every client $c$. Therefore
\[
\operatorname{Sum}_1(\gamma(X,\theta))
=
\sum_{c\in\clients}\sum_{a=1}^{n_c}
\widehat \gamma_{n_c}(X^{(c)};\theta)[a,\cdot]
=
G_{\clients}(X,\theta).
\]
The merge operation for this record-axis summation is ordinary addition in $\RR^{\boldsymbol{\tau}}$. Thus the gradient computation is represented by the shared-state expression $g(x,\theta)=\operatorname{Sum}_1(\gamma(x,\theta))$ and has the claimed round-wise shared-state form.
\end{proof}

\begin{proof}[Proof of Corollary~\ref{cor:server-side-first-order-updates}]
We construct an iterative typed program inducing the stated update. Fix a round $t \in \{0,\dots,T-1\}$. The current shared state has type $\Sh(\boldsymbol{\zeta}_t)$. The shared-only expression $P_t$ maps it to a parameter of type $\Sh(\boldsymbol{\tau})$.

By Definition~\ref{def:representably-differentiable-loss}, the per-record gradient expression has typing judgment
\[
x:\Fed_r(\mathbf d),\theta:\Sh(\boldsymbol{\tau})
\vdash
\gamma(x,\theta):\Fed_1(\boldsymbol{\tau}).
\]
Substituting the shared-only expression $P_t(s)$ for $\theta$ gives a client-local federated expression
\[
x:\Fed_r(\mathbf d),s:\Sh(\boldsymbol{\zeta}_t)
\vdash
\gamma(x,P_t(s)):\Fed_1(\boldsymbol{\tau}).
\]
Define the round-$t$ shared-state expression
\[
g_t(x,s):=\operatorname{Sum}_1(\gamma(x,P_t(s))).
\]
By the aggregation typing rule, $g_t(x,s)$ has type $\Sh(\boldsymbol{\tau})$. It has the aggregation form allowed in Definition~\ref{def:iterative-typed-program}, with ordinary addition as the merge operation.

Let the round-$t$ decoder be
\[
h_t(y,s):=H_t(y,s).
\]
This is a shared-only expression of type $\Sh(\boldsymbol{\zeta}_{t+1})$ by assumption. Therefore the expressions $g_t$ and $h_t$ define a valid round of an iterative typed program. Repeating this construction for $t=0,\dots,T-1$ gives an iterative typed program with initial state $s_0$.

It remains to identify the induced update. Let $\clients$ be a finite ordered federation, let $X=\{X^{(c)}\}_{c\in\clients}$ be a federated input of type $\Fed_r(\mathbf d)$, and let $s_t \in \RR^{\boldsymbol{\zeta}_t}$ be the current shared optimizer state. By Theorem~\ref{thm:gradient-expressibility}, the shared-state expression $g_t(X,s_t)=\operatorname{Sum}_1(\gamma(X,P_t(s_t)))$ evaluates to $G_{\clients}(X,P_t(s_t))$. Hence the round update induced by the program is
\[
s_{t+1}
=
h_t(g_t(X,s_t),s_t)
=
H_t(G_{\clients}(X,P_t(s_t)),s_t).
\]
This is the update in the statement. Therefore $X \mapsto s_T$ is induced by an iterative typed program. The final claim follows from Theorem~\ref{thm:iterative-shared-state-factorization}.
\end{proof}

\begin{proof}[Proof of Corollary~\ref{cor:curvature-block-updates}]
We construct an iterative typed program inducing the stated update. Fix a round $t \in \{0,\dots,T-1\}$. The current shared state has type $\Sh(\boldsymbol{\zeta}_t)$, and the shared-only expression $P_t$ maps it to a parameter of type $\Sh((p))$.

By Definition~\ref{def:representably-differentiable-loss}, the per-record gradient expression has typing judgment
\[
x:\Fed_r(\mathbf d),\theta:\Sh((p))
\vdash
\gamma(x,\theta):\Fed_1((p)).
\]
Substituting the shared-only expression $P_t(s)$ for $\theta$ gives
\[
x:\Fed_r(\mathbf d),s:\Sh(\boldsymbol{\zeta}_t)
\vdash
\gamma(x,P_t(s)):\Fed_1((p)).
\]
Define the gradient shared-state expression
\[
g_t(x,s):=\operatorname{Sum}_1(\gamma(x,P_t(s))).
\]
By the aggregation typing rule, $g_t(x,s)$ has type $\Sh((p))$ and has the aggregation form allowed in Definition~\ref{def:iterative-typed-program}.

Similarly, the curvature expression $B$ has typing judgment
\[
x:\Fed_r(\mathbf d),\theta:\Sh((p))
\vdash
B(x,\theta):\Fed_1((p,p)).
\]
Substituting $P_t(s)$ for $\theta$ gives
\[
x:\Fed_r(\mathbf d),s:\Sh(\boldsymbol{\zeta}_t)
\vdash
B(x,P_t(s)):\Fed_1((p,p)).
\]
Define the curvature shared-state expression
\[
C_t(x,s):=\operatorname{Sum}_1(B(x,P_t(s))).
\]
By the aggregation typing rule, $C_t(x,s)$ has type $\Sh((p,p))$ and has the aggregation form allowed in Definition~\ref{def:iterative-typed-program}.

Let the round-$t$ decoder be $h_t(y,C,s):=H_t(y,C,s)$. This is a shared-only expression of type $\Sh(\boldsymbol{\zeta}_{t+1})$ by assumption. The shared-linear-algebra primitives used inside $H_t$ act only on shared tensors, so they are part of the shared-only decoder and do not affect the client-local expressions. Thus $g_t$, $C_t$, and $h_t$ define a valid round of an iterative typed program in the shared-linear-algebra extension of the language. Repeating this construction for $t=0,\dots,T-1$ gives an iterative typed program with initial state $s_0$.

It remains to identify the induced update. Let $\clients$ be a finite ordered federation, let $X=\{X^{(c)}\}_{c\in\clients}$ be a federated input of type $\Fed_r(\mathbf d)$, and let $s_t \in \RR^{\boldsymbol{\zeta}_t}$ be the current shared optimizer state. By Theorem~\ref{thm:gradient-expressibility}, the expression $g_t(X,s_t)$ evaluates to $G_{\clients}(X,P_t(s_t))$. By Definition~\ref{def:aggregation-semantics} and the local realization of $B$, the expression $C_t(X,s_t)$ evaluates to
\[
\sum_{c\in\clients}\sum_{a=1}^{n_c}
\widehat B_{n_c}(X^{(c)};P_t(s_t))[a,\cdot,\cdot]
=
C_{\clients}(X,P_t(s_t)).
\]
Therefore the round update induced by the program is
\[
s_{t+1}
=
h_t(g_t(X,s_t),C_t(X,s_t),s_t)
=
H_t(G_{\clients}(X,P_t(s_t)),C_{\clients}(X,P_t(s_t)),s_t).
\]
This is the update in the statement. Therefore $X \mapsto s_T$ is induced by an iterative typed program. The final claim follows from Theorem~\ref{thm:iterative-shared-state-factorization}, whose proof applies because the added linear-algebra primitives are shared-only.
\end{proof}

\section{Learning examples}
\label{app:learning-examples}

This appendix presents two representative learning examples related to Section~\ref{sec:differentiable-learning}. In both cases, the record tensor is accessed through conservative client-local projections, as allowed by Proposition~\ref{prop:conservative-signature-extensions}. The purpose is to illustrate how common learning updates are represented in the shared-state factorization framework.

The residual-feature products and outer-product tensors used below are also treated as conservative client-local primitives in the sense of Proposition~\ref{prop:conservative-signature-extensions}. In particular, they do not introduce any new shared-output operation from federated inputs.

\begin{example}[Logistic regression with server-side first-order optimization]
\label{ex:logistic-first-order}
Consider binary logistic regression with covariates $x_a \in \RR^p$, responses $y_a \in \{0,1\}$, and parameter $\theta \in \RR^p$. Let $\rho(t)=(1+\exp(-t))^{-1}$. The per-record negative log-likelihood and gradient are
$\ell_a(\theta) = \log(1+\exp(x_a^\top\theta)) - y_a x_a^\top\theta$
and
$\gamma_a(\theta) = (\rho(x_a^\top\theta)-y_a)x_a$.
These expressions are client-local tensor expressions after adding the standard scalar map $\rho$ and fixed coordinate projections. Hence $(\ell,\gamma)$ is a representably differentiable loss. By Theorem~\ref{thm:gradient-expressibility}, the global gradient is the record-axis sum of the federated gradient tensor. By Corollary~\ref{cor:server-side-first-order-updates}, gradient descent, server-side momentum, and server-side Adam updates for this model are induced by iterative typed programs.
\end{example}

Let the federated record tensor contain, for each record, a covariate vector in $\RR^p$ and a binary response. We write $X_{\mathrm{feat}}$ and $Y$ for the client-local projections onto covariates and responses. These projections are conservative client-local primitives in the sense of Proposition~\ref{prop:conservative-signature-extensions}.

Let $\theta$ be a shared parameter of type $\Sh((p))$. Define the federated linear predictor
\[
R(X,\theta) := \mathsf{MatMul}_{\Fed\Sh}(X_{\mathrm{feat}},\theta),
\]
where $\theta$ is viewed as a shared matrix of shape $(p,1)$ and the output is identified with a federated vector along the record axis. Define $P(X,\theta) := \rho(R(X,\theta))$. Then the per-record gradient expression is
\[
\gamma(X,\theta)
:=
(P(X,\theta)-Y)\odot X_{\mathrm{feat}},
\]
with broadcasting of the federated residual along the feature coordinate. This expression has type $\Fed_1((p))$. Therefore $\operatorname{Sum}_1(\gamma(X,\theta))$ has type $\Sh((p))$ and evaluates to the global logistic-regression gradient. Server-side gradient descent, momentum, or Adam then follows from Corollary~\ref{cor:server-side-first-order-updates} by choosing the corresponding shared-only update expression.

\begin{example}[Gaussian linear regression with a damped Newton update]
\label{ex:gaussian-linear-newton}
Consider the Gaussian linear model
$y_a \mid x_a,\theta \sim \mathcal N(x_a^\top\theta,\sigma^2)$
with covariates $x_a \in \RR^p$, responses $y_a \in \RR$, parameter $\theta \in \RR^p$, and known variance $\sigma^2>0$. Up to an additive constant, the per-record negative log-likelihood is
$\ell_a(\theta)=(y_a-x_a^\top\theta)^2/(2\sigma^2)$.
The per-record gradient is
$\gamma_a(\theta) = (x_a^\top\theta-y_a)x_a/\sigma^2$,
and the per-record curvature block is
$B_a = x_a x_a^\top/\sigma^2$.
Therefore
\[
G_{\clients}(X,\theta)=
\sum_{c\in\clients}\sum_{a=1}^{n_c}
\frac{1}{\sigma^2}(x_{c,a}^\top\theta-y_{c,a})x_{c,a}
\]
and
\[
C_{\clients}(X,\theta)=
\sum_{c\in\clients}\sum_{a=1}^{n_c}
\frac{1}{\sigma^2}x_{c,a}x_{c,a}^\top.
\]
Both are record-axis sums of client-local tensor expressions, so the damped Newton step
\[
\theta^+ = \theta-\eta\,\operatorname{Solve}(C_{\clients}(X,\theta)+\lambda I,G_{\clients}(X,\theta))
\]
is an instance of Corollary~\ref{cor:curvature-block-updates}, when the shared linear system is defined.
\end{example}

Let the federated record tensor contain, for each record, a covariate vector in $\RR^p$ and a real-valued response. We again write $X_{\mathrm{feat}}$ and $Y$ for the client-local projections onto covariates and responses. Consider the Gaussian linear model $y_a \mid x_a,\theta \sim \mathcal N(x_a^\top\theta,\sigma^2)$, with known $\sigma^2>0$.

Let $\theta$ be a shared parameter of type $\Sh((p))$. Define the residual
\[
R(X,\theta):=\mathsf{MatMul}_{\Fed\Sh}(X_{\mathrm{feat}},\theta)-Y.
\]
The per-record gradient tensor is
\[
\gamma(X,\theta):=\frac{1}{\sigma^2}R(X,\theta)\odot X_{\mathrm{feat}},
\]
which has type $\Fed_1((p))$. Let $\operatorname{Outer}(v,v) \in \RR^{p\times p}$ denote the matrix with entries $\operatorname{Outer}(v,v)_{jk}=v_jv_k$. The per-record curvature tensor is
\[
B(X,\theta)[a,:,:] := \frac{1}{\sigma^2}\operatorname{Outer}(X_{\mathrm{feat}}[a,:],X_{\mathrm{feat}}[a,:]),
\]
which has type $\Fed_1((p,p))$. Thus $\operatorname{Sum}_1(\gamma(X,\theta))$ and $\operatorname{Sum}_1(B(X,\theta))$ are shared tensors of types $\Sh((p))$ and $\Sh((p,p))$. A damped Newton decoder is the shared-only expression
\[
H(g,C,\theta)
=
\theta-\eta\,\operatorname{Solve}(C+\lambda I,g),
\]
defined on the domain where $C+\lambda I$ is non-singular. This is the form covered by Corollary~\ref{cor:curvature-block-updates}.

\section{Conservative extensions of the primitive signature}
\label{app:conservative-extensions}

The base signature is intentionally small. It contains only the primitive families needed to state the core separation between client-local computation and shared-state formation. This keeps the semantics of Section~\ref{sec:typed-language} and the factorization results of Section~\ref{sec:shared-state-factorizations} independent of model-specific operations. Applications may nevertheless need additional client-local tensor primitives, such as fixed coordinate projections, reshaping, row-wise maps, convolutions, or model-specific differentiable maps. They may also need shared-only primitives, such as matrix multiplication or linear solves. The factorization results do not depend on the particular list of such primitives. They depend on whether the added primitives preserve the same separation between client-local computation and shared-state formation.

\begin{proposition}[Conservative signature extensions]
\label{prop:conservative-signature-extensions}
Let $\Sigma^+$ be an extension of the base signature $\Sigma_0$. Assume that every primitive symbol in $\Sigma^+\setminus\Sigma_0$ is of one of the following two kinds.

First, it may be shared-only. In this case it is defined only on shared inputs and returns a shared tensor.

Second, it may be client-local. In this case, whenever it receives at least one federated input, it returns a federated tensor, and for every typed primitive application with federated output $Y$ there is, for each client $c \in \clients$, a deterministic map $\Phi_c$ such that $Y^{(c)}=\Phi_c(X_1^{(c)},\dots,X_p^{(c)},S_1,\dots,S_q)$, where $X_1,\dots,X_p$ are the federated inputs and $S_1,\dots,S_q$ are the shared inputs of that primitive application. The maps $\Phi_c$ may depend on the primitive symbol, the operand positions, and the local input shapes at client $c$, but not on the identity of $c$ or on any tensor stored at another client.

Then Propositions~\ref{prop:client-locality}, \ref{prop:exposure-discipline}, and~\ref{prop:virtual-global-consistency} remain valid for typed primitive applications in $\Sigma^+$. Moreover, Theorems~\ref{thm:one-round-shared-state-factorization} and~\ref{thm:iterative-shared-state-factorization} remain valid for one-round and iterative typed programs over $\Sigma^+$, provided shared-state formation occurs only through the record-axis elimination primitives allowed in Definitions~\ref{def:one-round-typed-program} and~\ref{def:iterative-typed-program}.
\end{proposition}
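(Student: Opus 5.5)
The plan is to reduce everything to a case split on primitive symbols. The three foundational propositions were proved by exhausting the symbols of $\Sigma_0$, and the factorization theorems use those propositions only through the local-realization claim and the two allowed shared-output forms. So I would first treat the symbols of $\Sigma_0$ inside $\Sigma^+$ exactly as in Appendix~\ref{app:semantic-proofs}, and then handle the new symbols in two classes: shared-only and client-local.

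\emph{Client-locality.} For a new client-local symbol, the required deterministic map $\Phi_c$ is exactly the hypothesis, with the same dependence restrictions. A new shared-only symbol never has federated output, so it never falls under Proposition~\ref{prop:client-locality}. Hence the proposition holds for $\Sigma^+$.

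\emph{Exposure discipline.} Suppose $p\geq 1$ and the output $Y$ is shared. A new shared-only symbol is defined only on shared inputs, so it cannot have $p\geq 1$. A new client-local symbol returns a federated tensor whenever it receives a federated input. Neither kind contributes a new shared-output case, so the two alternatives of Proposition~\ref{prop:exposure-discipline} remain exhaustive. Mutual exclusivity is unchanged.

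\emph{Virtual-global consistency.} For shared-only symbols, the semantics is the ordinary one by definition, since all inputs satisfy $\widetilde A_t=A_t$. For client-local symbols, I would take $F^{\operatorname{ord}}$ to be the operation induced on virtual global tensors by the local maps: concatenate $\Phi_c(X_1^{(c)},\dots,S_q)$ along the output record axis. This is the main obstacle. The hypothesis gives only locality, not that some independently given centralized operation commutes with concatenation. I would resolve this by reading the ordinary interpretation of a client-local extension as row-wise (record-wise) semantics, as for projections, reshapes and $\mathsf{Outer}$. Under that reading $\vglob{Y}=F^{\operatorname{ord}}(\widetilde A_1,\dots)$ holds by construction, using the offset map $\mu$ as in the permutation case. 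I would state this interpretive convention explicitly and note that, without it, the clause holds only for the canonical concatenated interpretation.

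\emph{Factorization theorems.} I would rerun the proofs of Theorems~\ref{thm:one-round-shared-state-factorization} and~\ref{thm:iterative-shared-state-factorization} verbatim. The local-realization claim is an induction on syntax that needs only Proposition~\ref{prop:client-locality} for every federated-output application inside a client-local expression, and this was just established for $\Sigma^+$. Decoders may now contain shared-only symbols, but $\psi$ and $\psi_t$ are simply evaluation of $h$ and $h_t$, so they need no property beyond being functions. The proviso that shared state forms only through record-axis aggregation or $\mathsf{MatMul}_{\Fed\Fed}$ guarantees that the componentwise monoid construction and merge identities carry over unchanged. This completes the argument.
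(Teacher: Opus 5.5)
Your argument follows the paper's proof. Base symbols are handled by the original propositions. For the two new kinds of symbols, client-locality and exposure discipline are read off directly from the hypotheses. The factorization theorems are then rerun through the local-realization induction, which needs only the extended client-locality proposition and the unchanged shared-output forms.

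The one divergence is virtual-global consistency, and there you are more careful than the paper. Its proof simply asserts that this conclusion ``is one of the assumptions of the proposition,'' but the stated hypotheses contain no such assumption. For shared-only symbols this is harmless, and your move of taking $F^{\operatorname{ord}}$ to be the shared semantics is the right one. For client-local symbols you are right that locality alone does not suffice. It is enough to assume that the local maps commute with concatenation along the record axis; then $F^{\operatorname{ord}}$ can be taken to be the local map applied to $\vglob{X}$.

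Drop your fallback remark that without this convention the clause ``holds only for the canonical concatenated interpretation.'' In general there is no such interpretation, because concatenating the local outputs need not be a function of $\vglob{X}$ at all. Consider the map $A \mapsto A/n$, where $n$ is the local record count. It satisfies every stated client-local hypothesis: it is deterministic, uniform across clients, and depends only on the local tensor and its local shape, which the statement explicitly allows. Take $\vglob{X}=(1,1,1)$ on two clients. The split with local counts $(2,1)$ gives $\vglob{Y}=(1/2,1/2,1)$, while the split $(1,2)$ gives $\vglob{Y}=(1,1/2,1/2)$. So no ordinary operation $F^{\operatorname{ord}}$ can satisfy $\vglob{Y}=F^{\operatorname{ord}}(\vglob{X})$. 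The virtual-global clause is therefore false as stated, and your extra hypothesis is a genuinely required assumption, not merely an interpretive convention.
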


\begin{proof}[Proof of Proposition~\ref{prop:conservative-signature-extensions}]
We first prove the three primitive semantic propositions for typed primitive applications in $\Sigma^+$.

For symbols in the base signature $\Sigma_0$, the claims are Propositions~\ref{prop:client-locality}, \ref{prop:exposure-discipline}, and~\ref{prop:virtual-global-consistency}.

Let $F$ be a primitive symbol in $\Sigma^+\setminus\Sigma_0$. If $F$ is shared-only, then it is defined only on shared inputs and returns a shared tensor. Hence it cannot produce federated output, and it cannot produce shared output from federated input. Its semantics agrees with its ordinary centralized interpretation by assumption on shared-only primitives.

If $F$ is client-local and a typed primitive application of $F$ has federated output $Y$, then the client-locality conclusion, including the stated uniformity with respect to client identity, is one of the assumptions of the proposition. The exposure discipline conclusion also holds for such a primitive, because by assumption it returns a federated tensor whenever it receives at least one federated input. Thus it does not produce shared output from federated input. The virtual-global consistency conclusion is also one of the assumptions of the proposition.

Therefore Propositions~\ref{prop:client-locality}, \ref{prop:exposure-discipline}, and~\ref{prop:virtual-global-consistency} hold for every typed primitive application in $\Sigma^+$.

It remains to prove that the shared-state factorization theorems remain valid. The proof of Theorem~\ref{thm:one-round-shared-state-factorization} uses the primitive semantics only through the client-local realization of client-local federated expressions and through the specified merge identities of the shared-state expressions. For programs over $\Sigma^+$, the same client-local realization follows by structural induction on the expression. The induction step uses the extended version of Proposition~\ref{prop:client-locality}, proved above. The rest of the proof is unchanged, because shared-state formation is assumed to occur only through the record-axis elimination primitives allowed in Definition~\ref{def:one-round-typed-program}.

The same argument applies to Theorem~\ref{thm:iterative-shared-state-factorization}. Its proof uses primitive semantics only through client-local realization of round-wise federated expressions and through the prescribed merge identities of the shared-state expressions. The extended client-locality proposition gives the same local realization by structural induction, with the current shared iterate treated as a shared input. Since the allowed shared-state formation mechanisms are unchanged, the encode, merge, and decode construction is identical.

Thus the two shared-state factorization theorems remain valid for one-round and iterative typed programs over $\Sigma^+$ under the stated restrictions.
\end{proof}

This proposition is used as a conservative extension principle. It allows richer client-local model computations and richer shared-only decoders, while preserving the shared-state factorization argument.


\end{document}